\documentclass{article}[letterpaper]
\usepackage{twemojis}
\usepackage{amssymb}
\usepackage{microtype}
\usepackage{graphicx}
\usepackage{subcaption}
\usepackage{booktabs} 
\usepackage{algorithm}
\usepackage{algorithmic}
\usepackage{ifthen}
\newboolean{icml}
\setboolean{icml}{False}
\newboolean{incappendix}
\setboolean{incappendix}{True}
\newboolean{comments}
\setboolean{comments}{True}

\newcommand{\RETURN}{\STATE \textbf{return} }
\usepackage{hyperref}

\ifthenelse{\boolean{icml}}{\usepackage[accepted]{icml2026}}{
\usepackage{fullpage}
\usepackage{parskip}
\usepackage{natbib}
}

\usepackage{amsmath}
\usepackage{amssymb}
\usepackage{mathtools}
\usepackage{amsthm}

\definecolor{tobiComment}{RGB}{150, 10, 150}
\definecolor{fannyComment}{RGB}{150, 10, 250}
\definecolor{florianComment}{RGB}{100, 100, 250}
\definecolor{ChahineComment}{RGB}{100, 150, 150}
\definecolor{ZifanComment}{RGB}{200, 100, 100}
\newcommand{\noteby}[3]{{\colorbox{#2}{\bfseries\sffamily\scriptsize\textcolor{white}{#1}}}{\textcolor{#2}{\sf\small\textit{#3}}}}

\ifthenelse{\boolean{comments}}{

\newcommand{\fd}[1]{\noteby{Florian}{florianComment}{ #1}}
\newcommand{\fy}[1]{\noteby{Fanny}{fannyComment}{ #1}}
\newcommand{\cn}[1]{\noteby{Chahine}{ChahineComment}{ #1}}
\newcommand{\zf}[1]{\noteby{Zifan}{ZifanComment}{ #1}}}
{
\newcommand{\fd}[1]{}
\newcommand{\fy}[1]{}
\newcommand{\cn}[1]{}
\newcommand{\zf}[1]{}
}

\usepackage{soul,xcolor}
\setstcolor{red}

\usepackage[capitalize,noabbrev]{cleveref}
\Crefname{theorem}{Theorem}{Theorems}
\crefname{proposition}{Proposition}{Propositions}
\newtheoremstyle{thmstyle}
  {6pt} 
  {2pt} 
  {\itshape} 
  {} 
  {\bfseries} 
  {.} 
  {.5em} 
  {} 

\newtheoremstyle{defstyle}
  {6pt} 
  {2pt} 
  {} 
  {} 
  {\bfseries} 
  {.} 
  {.5em} 
  {} 

\theoremstyle{thmstyle}
\newtheorem{theorem}{Theorem}[section]
\newtheorem{proposition}[theorem]{Proposition}
\newtheorem{lemma}[theorem]{Lemma}
\newtheorem{corollary}[theorem]{Corollary}

\theoremstyle{defstyle}

\theoremstyle{remark}

\usepackage[textsize=tiny]{todonotes}

\DeclareMathOperator*{\EE}{\mathbb{E}}
\DeclareMathOperator*{\PP}{\mathbb{P}}
\newcommand{\istar}[0]{i^\star}
\newcommand{\Ihat}[0]{\widehat{I}}
\newcommand{\IhatSR}[0]{\widehat{I}_{\operatorname{SR}}}
\newcommand{\IhatUCBE}[0]{\widehat{I}_{\operatorname{UCB-E}}}

\newcommand{\prn}[1]{\left(#1\right)}
\newcommand{\mustar}[0]{\mu^\star}
\newcommand{\logbar}[0]{\overline{\log}}
\newcommand{\crl}[1]{\left\{ #1\right\}}
\renewcommand{\S}[0]{\mathbf{S}}

\newcommand{\cX}[0]{\mathcal{X}}
\newcommand{\Uniform}[0]{\operatorname{Uniform}}
\renewcommand{\r}[0]{\mathbf{r}}
\newcommand{\R}[0]{\mathbf{R}}
\newcommand{\sr}[0]{\boldsymbol{r}}
\newcommand{\sR}[0]{\boldsymbol{R}}
\newcommand{\Var}[0]{\operatorname{Var}}
\newcommand{\brk}[1]{\left[#1\right]}
\newcommand{\TV}[0]{\operatorname{TV}}
\newcommand{\BC}[0]{\operatorname{BC}}
\newcommand{\cD}[0]{\mathcal{D}}
\newcommand{\cZ}[0]{\mathcal{Z}}
\newcommand{\NN}[0]{\mathbb{N}}
\newcommand{\cJ}[0]{\mathcal{J}}
\newcommand{\Cov}[0]{\mathrm{Cov}}

\usepackage{pifont}
\usepackage{fontawesome5}
\newcommand{\IhatSc}{\widehat{I}_{\text{\ding{34}}}} 

\usepackage{minitoc}
\usepackage{tocloft}

\makeatletter
\newcommand\blfootnote[1]{%
  \begingroup
  \def\@thefnmark{}
  \renewcommand\@makefnmark{}
  \@footnotetext{#1}%
  \endgroup
}
\makeatother

\ifthenelse{\boolean{icml}}{\icmltitlerunning{Cutting LLM Evaluation Costs with SySRs: A Bandit Algorithm That Provably Exploits Model Similarity}
\makeatletter
\g@addto@macro\Notice@String{\newline \newline Code available at \href{https://github.com/zifanlyu/llm-bandits-sysrs/tree/main}{\faGithub\ \texttt{llm-bandits-sysrs}}. 
}
\makeatother
}{}

\begin{document}
\date{}
\ifthenelse{\boolean{icml}}{
\twocolumn[
  \icmltitle{Cutting LLM Evaluation Costs with SySRs: \\ A Bandit Algorithm That Provably Exploits Model Similarity}
  \begin{icmlauthorlist}
    \icmlauthor{Zifan Lyu}{ETH}
    \icmlauthor{Chahine Nejma}{CS,ENS}
    \icmlauthor{Tobias Wegel}{ETH}
    \icmlauthor{Fanny Yang}{ETH}
    \icmlauthor{Florian E. Dorner}{ETH,MPI}
  \end{icmlauthorlist}

  \icmlaffiliation{ETH}{ETH Zurich}
  \icmlaffiliation{MPI}{Max-Planck Institute for Intelligent Systems, Tübingen}
  \icmlaffiliation{CS}{Centrale Supélec}
  \icmlaffiliation{ENS}{Ecole Normale Supérieure de Cachan}
  \icmlcorrespondingauthor{Zifan Lyu}{zifan.lyu@inf.ethz.ch}
  \icmlcorrespondingauthor{Florian Dorner}{florian.dorner@tuebingen.mpg.de}

  \icmlkeywords{Machine Learning, ICML}

  \vskip 0.3in
]
\printAffiliationsAndNotice{} 
}{\title{Cutting LLM Evaluation Costs with SySRs \twemoji[scale=0.5]{scissors}: \\ A Bandit Algorithm that Provably Exploits Model Similarity}
\author{%
Zifan Lyu$^1$ \quad Chahine Nejma$^{2,3}$ \quad Tobias Wegel$^1$ \quad Fanny Yang$^1$ \quad Florian E. Dorner$^{1,4}$ \\[0.4em]
{\small $^1$ETH Zurich \quad $^2$Centrale Supélec \quad $^3$ENS de Cachan \quad $^4$MPI for Intelligent Systems, Tübingen} \\[0.2em]
{\small \begin{tabular}{cc} Correspondence to:
\texttt{zifan.lyu@inf.ethz.ch} & \texttt{florian.dorner@tuebingen.mpg.de}
\end{tabular}}%
}
\maketitle 
\blfootnote{Code available at \href{https://github.com/zifanlyu/llm-bandits-sysrs/tree/main}{\faGithub\ \texttt{llm-bandits-sysrs}}.}}

\begin{abstract}
\noindent
    Large Language Models are typically benchmarked by evaluating every model on every test query. For practitioners seeking the best model to deploy, this is often wasteful: if a model clearly performs worse than others, there is no need to precisely estimate its performance. Best-arm identification algorithms can be naturally applied to drastically reduce costs by adaptively allocating evaluation budget. Further, language models often respond similarly to the same prompt\textemdash a property previous work has tried to leverage with mixed success. We propose \underline{Sy}nchronized \underline{S}uccessive \underline{R}eject\underline{s} (SySRs), augmenting the classical Successive Rejects algorithm with paired comparisons. Unlike prior attempts to leverage model similarity in best-model identification, our approach is hyperparameter-free and enjoys performance guarantees that improve with the degree of similarity between evaluated models. Empirically, our method outperforms all baselines in terms of average error rate across 15 standard benchmarks, and in terms of worst-case budget for reliably identifying the best model. 
\end{abstract}

\addtocontents{toc}{\protect\setcounter{tocdepth}{-1}}
\section{Introduction}
\begin{figure*}[t]
\centering
\includegraphics[width=1.0\textwidth]{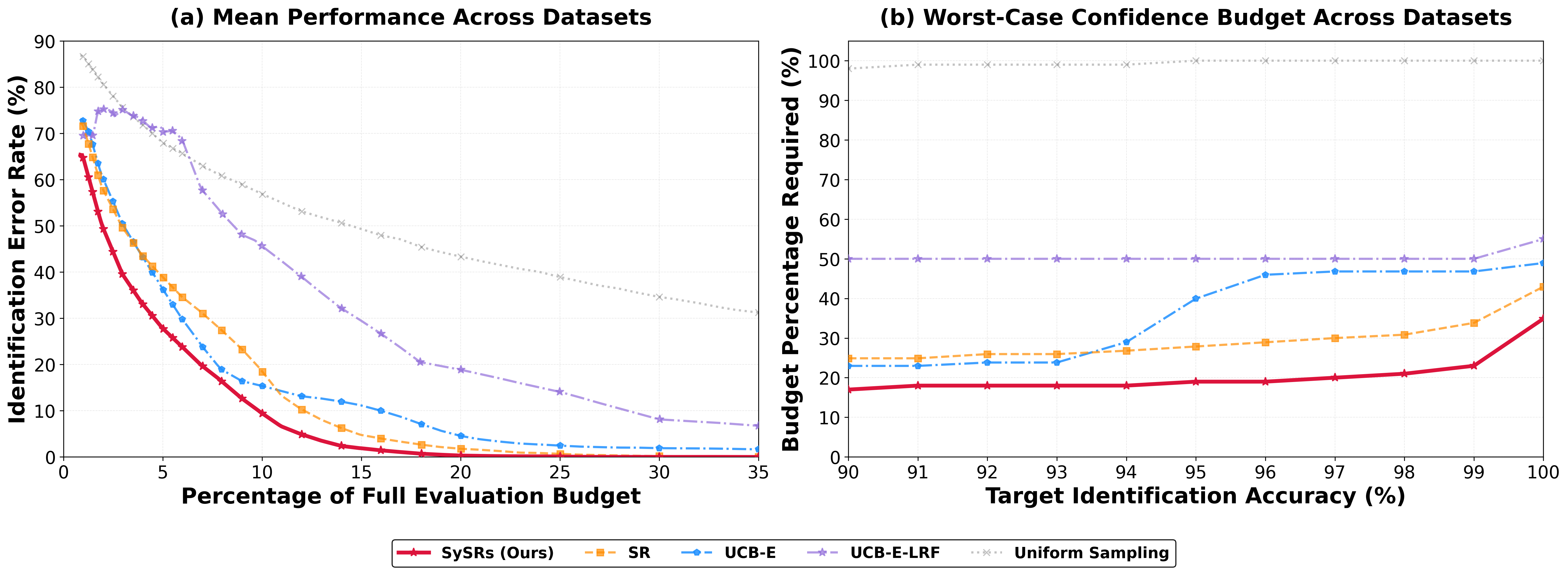}
\caption{\textbf{a)}: Error rate using a given percentage of model/query pairs, averaged over 15 LLM benchmarks. \textbf{b)} Percentage of model/query pairs required to identify the best model with a given accuracy, worst case over 15 LLM benchmarks. In both cases, baselines use the best-performing hyperparameter at each point on the $x$-axis. Our hyperparameter-free SySRs outperforms all baselines despite that.}
 \label{fig:mean}
\end{figure*}
    As Large Language Models become more and more capable, evaluating and comparing their capabilities becomes increasingly pressing and expensive. With models able to reliably solve a growing number of relevant real-world tasks, the question which model to deploy is more relevant than ever. At the same time, the costs of answering this question are at an all-time high: Test-time scaling and agent loops cause surging inference costs, while mounting task complexity increases the cost of gold standard evaluations of models' solutions. In particular, the labeling effort for generative tasks is much greater compared to earlier classification tasks, which could be evaluated by matching all models' outputs to fixed ground-truth labels. For example, modern benchmark tasks like theorem proving or software development require independent evaluation of every single model's response to a query, greatly increasing annotation costs.

    Attempts to reduce evaluation costs include replacing expert annotators with Large Language Models \citep{zheng2023judging} or focusing evaluation on a smaller, pre-selected set of particularly informative subtasks \citep{polo2024tinybenchmarks}.  Unfortunately, both approaches lack reliability and can produce severely misleading evaluation scores and model rankings \citep{bavaresco2025llms,dorner2024limits,yauney2026,zhang2025how}. Focusing on model selection, \citet{zhou2025speeding} propose an orthogonal angle with strong statistical guarantees. They treat model evaluation as a multi-armed bandit problem 
    in which models represent arms and per-query accuracy is the reward.
    Instead of fully evaluating all models, best-arm algorithms such as UCB-E can then decide which model to evaluate in an online fashion. Importantly, if some models 
    turn out to be clearly worse than others, the algorithm can discard them early on, 
    and instead focus on distinguishing between the best models. Indeed, \citet{zhou2025speeding} demonstrate that these algorithms can starkly reduce the cost of model selection.

    While the theoretical properties of generic best-arm identification are well-understood
    \citep{audibert2010best,kaufmann2016complexity,ariu2021policy,qin2022open}, the LLM benchmarking setting offers more structure than what is usually considered in the theoretical literature. In particular, across tasks, models' performances are often highly correlated \citep{mania2019model,kim2025correlated}: Particularly easy tasks will be solved by the majority of models, while most models will fail to solve particularly hard tasks. Efficient evaluation algorithms should take this \emph{model similarity} into account. However, the literature on multi-armed bandits with correlated arms is scarce. While \citet{gupta2021best} discuss the best-arm identification setting, their approach is not directly applicable to LLM evaluation as it requires prior knowledge about the specific degree of correlation strength. Specific to LLM evaluation, \citet{zhou2025speeding} also proposed UCB-E-LRF -- a variant of the UCB-E algorithm \citep{audibert2010best} that uses low-rank factorization to predict model performance on unobserved evaluation cases. However, their approach comes without theoretical guarantees and does not consistently outperform standard UCB-E.

    \paragraph{Contributions.} In this work, we propose a novel algorithm called  \emph{\underline{Sy}nchronized \underline{S}uccessive \underline{R}eject\underline{s}} (SySRs), based on the classical Successive Rejects algorithm \citep{audibert2010best}. Successive Rejects proceeds in stages: at each stage, it estimates each remaining arm's performance and eliminates the worst-performing one. SySRs exploits the additional structure of benchmarks compared to the classical bandit setting via \textit{synchronization}: Rather than independently sampling test queries for each model, each stage of SySRs samples a subset of test queries and evaluates all remaining models on \emph{the same} set. We prove statistical performance guarantees for SySRs that scale with the correlation between evaluated models. \looseness=-1Empirically,  we find that averaged over 15 standard LLM benchmarks, SySRs outperforms all algorithms from prior work, including \citet{zhou2025speeding} (see \cref{fig:mean}a). In addition, the performance of SySRs is highly consistent: As shown in \cref{fig:mean}b), it reliably identifies the best model on all 15 benchmarks using at most 35\% of the respective model/test-point pairs, while all other methods require more data on at least one benchmark to achieve the same.
    In short, our contributions are:
    \begin{itemize}
        \item We introduce a novel correlation-aware best-arm identification algorithm called \emph{\underline{Sy}nchronized \underline{S}uccessive \underline{R}eject\underline{s}} (SySRs) (\cref{sec:CSR}).
        \item We prove an upper bound on the error rate of SySRs using a novel hardness measure that improves with between-model correlation (\cref{sec:guarantees}).
        \item We run extensive experiments on 15 standard LLM benchmarks, demonstrating SySRs's strong performance compared to existing algorithms (\cref{sec:experiments}).
    \end{itemize}

\section{Related Work}
\label{sec:related-work}
\paragraph{Efficient LLM Evaluation.}
Perhaps the most common approach to reducing annotation costs in LLM evaluation is to replace expert annotators with so-called LLM judges \citep{zheng2023judging}. While this approach can greatly reduce annotation costs and often works well, it has no bearing on inference costs and lacks reliability \citep{dorner2024limits,bavaresco2025llms} when applied na\"ively. One explanation for these reliability issues is model similarity \citep{mania2019model} 
such as correlated error patterns between different models \citep{kim2025correlated}: A judge model that is too similar to the model it is supposed to evaluate might be unable to identify the latter's mistakes \citep{goel2025great}.

While model similarity undermines LLM judges, more recent works aim to instead explicitly make use of it. Specifically, they aim to exploit similarity to construct smaller test sets that remain predictive of overall model performance \citep{vivek2024anchor,polo2024tinybenchmarks}. By reducing the amount of evaluated data points, these \textit{subset selection} methods cut down on both annotation and inference costs. However, they lack theoretical guarantees and do not reliably outperform random subsampling test sets \citep{yauney2026,zhang2025how}. Moreover, subset selection requires access to large amounts of historical evaluation data on a given benchmark to obtain the downsized test set.
In contrast, our proposed SySRs algorithm can be applied to any benchmark, including novel ones with no historical data. In fact, SySRs is complementary to subset selection: whenever evaluation on a smaller test set is known to provide reliable information about overall benchmark performance, our approach can be applied to the subset to further speed up model selection.

\paragraph{Best-Model Identification.}
While benchmark \emph{scores} are often framed as absolute measurements of model capabilities \citep{suhr2025stop}, they are highly sensitive to seemingly irrelevant details like prompt wording \citep{sclar2023quantifying} or the precise degree of data cleaning \citep{recht2019imagenet}. Meanwhile, model \emph{rankings}\textemdash describing which models outperform others\textemdash are surprisingly stable across contexts \citep{miller2021accuracy,ruan2024observational}, granting them a larger degree of external validity \citep{hardt2026emerging}. As such, we believe that often the appropriate goal for efficient LLM evaluation is to accurately rank models rather than precisely estimate model performance. Among potential ranking objectives, we focus on top-1 identification (that is, identifying the best model), which is arguably the most relevant for practice as it directly maps to deployment decisions. 

\citet{zhou2025speeding} recognized that top-1 identification maps to the best-arm identification or \textit{pure exploration} setting from the multi-armed bandit literature, and suggested using the popular UCB-E algorithm for model selection. 
While general best-arm identification is widely studied \citep{bubeck2009pure,audibert2010best,kaufmann2016complexity,ariu2021policy},  there is limited prior work on multi-armed bandits with correlated arms. Most relevantly, \citet{gupta2020correlated,gupta2021multi,gupta2021best} study regret minimization and best-arm identification, modeling between-arm correlation as a known upper bound on one arm's expected reward conditioned on observing another arm's reward. However, in many cases no such nontrivial upper bound might be known \textit{a priori}, despite between-arm correlation.

Specific to the LLM evaluation setting, \citet{zhou2025speeding} also propose to exploit between-model correlation by combining UCB-E with low-rank factorization to predict LLM scores on unseen queries. However, their method does not come with any theoretical performance guarantees and underperforms UCB-E in our experiments. In contrast, our SySRs algorithm outperforms UCB-E on average and comes with theoretical guarantees that scale with the degree of correlation between models, while not requiring \textit{a priori} knowledge about that degree.

\section{Formal Setup and Existing Algorithms}
We consider the problem of identifying the best model among a set of $K$ candidate Large Language Models (LLMs) $f_1,\dots,f_K$ on a given test set $\mathcal{X} = \{x_1, \dots, x_L\}$, using a fixed finite testing budget of $n$ evaluations. For each prompt $x_i\in \mathcal{X}$, we denote the performance of model $f_j$ on $x_i$ as $S_{ij}$. As both the model's answer and the scoring can be non-deterministic, we model $S_{ij}$ to be independently drawn from a distribution $\phi_{ij}$ with bounded support in $[0,1]$. The  stochastic performance matrix $\S = (S_{ij})_{ij} \in [0,1]^{K \times L}$ then generalizes the deterministic performance matrix from \citet{zhou2025speeding}. 

Following \citet{zhou2025speeding}, we view adaptive LLM evaluation as revealing entries of the unknown matrix $\S$, but allow these entries to be stochastic.
In particular, we consider (potentially randomized) algorithms that operate in $n$ steps, and at each step $t \in \{1, \dots, n\}$
\begin{enumerate}
    \item Select a model index $i_t \in \{1, \dots, K\}$ and a test instance index $j_t \in \{1, \dots, L\}$.
    \item Observe the score $S_{i_t j_t} $. 
\end{enumerate}

The goal of the algorithm is to identify the model $f_i$, or ``arm'', with the largest average performance, defined as
\[
i^\star \in \arg\max_{i\in\{1,\dots,K\}} \mu_i \quad \text{where} \quad \mu_i := \frac{1}{L} \sum_{j=1}^L \mathbb{E}_{S_{ij} \sim \phi_{ij}} [S_{ij}]. 
\]
After $n$ steps, the algorithm produces a guess $\Ihat$ for the best arm $\istar$, and we measure an algorithm's performance in terms of its error probability $ e_n = \mathbb{P}(\Ihat \neq \istar)$. This probability is defined over both the noisy model responses and the algorithm's internal randomness, which determines the sequence of queried model-prompt pairs. 

Standard best-arm identification algorithms with fixed budget can straightforwardly be applied to this setting by viewing $\mu_i$ as the expectation $\EE[R_i]$ of the reward $R_i=S_{iJ}$, where a test set index $J$ is drawn uniformly from $\{1, \dots, L\}$. 
However, the matrix formulation allows us to additionally capture the correlation structure between models. Formally, indices $J$ sampled uniformly from $\{1, \dots, L\}$ induce random vectors $(R_1,...,R_K) = (S_{1J},...,S_{KJ})$ with the desired expectation $(\mu_1,...,\mu_K)$ and potentially non-trivial correlation structure. This contrasts 
with standard multi-armed bandit settings, where independence is typically assumed both across arms and tasks.

\subsection{Best-arm Identification Algorithms}\label{sec:classical}
In this section, we present how to use two classical and one tailored best-arm identification algorithms for LLM evaluation.
The \emph{Successive Rejects algorithm (SR)}, first introduced by~\citet{audibert2010best}, addresses the problem of finding the best-performing model by dividing the evaluation budget $n$ into $K-1$ phases and progressively eliminating models. The lengths of each phase follow a carefully designed hyperparameter-free schedule. Within each phase, all models in the active set are tested equally often by selecting one test instance for each model independently and uniformly at random. At the end of each phase, the single worst-performing model (based on empirical mean) is removed from the set of remaining models. After the last phase, exactly one model remains, which is returned as $\IhatSR$. 

\citet{audibert2010best} also introduced the  \emph{Upper Confidence Bound Exploration (UCB-E)} algorithm, a variant of the classical UCB algorithm \citep{auer2002using} that allows for the degree of exploration to be controlled by a hyperparameter $a$. The algorithm selects models based on an exploration index:
$    \hat{\mu}_i + \sqrt{a/s_i}
$ 
where $s_i$ is the number of times the model $i$ has been queried.
The algorithm runs for $n$ rounds, in each of which the model with the highest exploration index is evaluated by drawing a test instance independently and uniformly at random. 
After the last round, UCB-E returns the model $\IhatUCBE$ with the largest empirical mean.

As these classical algorithms do not make use of the given matrix structure, \citet{zhou2025speeding} propose two adaptations to the UCB-E tailored to the LLM evaluation setting: The first method is a simple modification of UCB-E that avoids evaluating the same model/query pair twice. Instead of sampling queries $x_j \in \mathcal{X}$ with replacement, the algorithm keeps track of which entries $x_j$ a model $f_i$ has already been evaluated on and ensures no model is evaluated twice on the same query. 
The second method called UCB-E-LRF is more involved. For deterministic scores $S_ij$ the algorithm fits an ensemble of low-rank matrices $\hat{\mathbf{S}}$ at each step, aiming to reconstruct the score matrix $\mathbf{S}$  from the limited already observed entries. It then selects \emph{both} the model $f_i$ to evaluate and the query $x_j$ to evaluate on, based on a combination of the observed entries, and both the mean and variance of the ensemble $\hat{\mathbf{S}}$. Both algorithms are described in detail in \cref{sec:ucb-e,sec:ucb-e-lrf}.
\subsection{Guarantees for Classical Best-Arm Identification}
\label{subsec:guarantees-classical}
Classical algorithms for Best-Arm Identification come with strong guarantees on the error probability $e_n$. 
The guarantees depend on a \emph{hardness measure} that captures the difficulty of the problem. Since distinguishing between two almost equivalent models is harder than separating the best candidate from a low performer, hardness measures usually depend on the gap between the mean scores $\mu_i$.

Formally, standard hardness measures are usually defined by first reordering the models/arms $i$ in terms of decreasing mean so that 
$ \mu_{(1)}\geq \mu_{(2)}\geq\cdots \geq \mu_{(K)}$. For $i > 1$, we then define the sub-optimality gaps
\[
\Delta_i = \mustar-\mu_{(i)},
\quad \text{where}\quad  \mustar=\max_{i\in{\{1,\dots,K\}}}\mu_i = \mu_{(1)} = \mu_{i^\star},
\]
setting $\Delta_1 = \Delta_2$ such that
\[
\Delta_{1} = \Delta_{2} 
\le \dots \le \Delta_{K}.
\]
In the classical fixed-budget best-arm identification setting, the hardness of the problem instance is typically captured by the quantities $H_1$ and $H_2$, defined by \citet{audibert2010best}:
\[
H_1 = \sum_{i=1}^K \frac{1}{\Delta_{i}^2}, \qquad 
H_2 = \max_{i\in\{2,\dots,K\}} \frac{i}{\Delta_{i}^2}.
\]
The two quantities are related by the inequalities
\[
H_2 \leq H_1 \leq \log(2K)\, H_2.
\]

Assuming the exploration parameter $a$ for the UCB-E algorithm is chosen to satisfy $0<a\leq \frac{25}{36}\frac{n-K}{H_1}$, \citet{audibert2010best} prove the following guarantee for UCB-E:
\begin{equation}
\label{eqn:guarantees-UCBE}
\begin{aligned}
    \PP\prn{\IhatUCBE \neq \istar} & \leq 2nK\exp\prn{-\frac{2a}{25}},
\end{aligned}
\end{equation} 
which \citet{zhou2025speeding} show to remain valid for LLM evaluation with queries sampled without replacement, when the scores $S_{ij}$ are deterministic.  
However, $H_1$ is usually not known \textit{a priori}, such that users face the difficult task of guessing the exploration parameter $a$ both small enough to achieve the guarantee and large enough to obtain a sufficiently small value in the exponential. Meanwhile, SR enjoys a similar guarantee \citep{audibert2010best}, without hyperparameters : \begin{equation}
\label{eqn:guarantees-SR}
\begin{aligned}
    \PP\prn{\IhatSR \neq \istar} &\leq \frac{K(K-1)}{2}\exp\prn{-\frac{n-K}{\logbar(K)H_2}}, \\
\end{aligned}
\end{equation}
 where  $\logbar(K) = \frac{1}{2}+\sum_{i=2}^K \frac{1}{i}$. 
Unlike UCB-E-LRF, neither of the two algorithms attempt to take advantage of the correlation between arms, which is reflected in neither bound depending on said correlation. On the other hand, due to its relatively high complexity, there are no known performance bounds for UCB-E-LRF. In the next section, we propose a novel algorithm that comes with performance guarantees that improve with the correlation between evaluated models. 

\section{Synchronized Successive Rejects (SySRs)}
\label{sec:CSR}
In this section, we introduce the \emph{\underline{Sy}nchronized \underline{S}uccessive \underline{R}eject\underline{s}} (SySRs) algorithm, inspired by Successive Rejects (SR). SR is particularly suited for adaptation to our setting for multiple reasons: 
First, one can easily modify it to utilize model correlation by evaluating different LLMs on the same test instances.
Second, unlike UCB-style algorithms that involve switching models at every step, SR evaluates active models on blocks of test instances in every phase, allowing for straightforward batched LLM inference. 
Lastly, SR is hyperparameter free, allowing us to avoid parameter tuning and adapt to correlation without \textit{a priori} knowledge of its strength. 
In particular, as we will demonstrate, adapting SR yields a clean theoretical analysis of the error probability that scales with the correlation strength.

Similar to SR, SySRs relies on a static allocation strategy. It splits the evaluation budget $n$ into $K-1$ unequally sized phases (with $K$ the number of models). 
The phases are carefully scheduled such that the budget is efficiently utilized. In particular, for $n>K$, the budget assigned to each phase $k=1,\dots,K-1$ is defined as $n_k-n_{k-1}$ per remaining model, where in turn $n_k$ is defined as
\begin{equation}
\label{eqn:budget-schedule}
     n_k = \Big\lceil \frac{1}{\overline{\log}(K)}\frac{n-K}{K+1-k}\Big\rceil,\quad \logbar(K) = \frac{1}{2}+\sum_{i=2}^K \frac{1}{i}.
\end{equation}
In phase $k$, all $K+1-k$ remaining models are evaluated \emph{on the same subset $\widetilde{\mathcal{J}}_k$ of $n_k-n_{k-1}$ queries}.
We sample the evaluation queries $\widetilde{\mathcal{J}}_k$ uniformly at random, \emph{without replacement} as \citet{zhou2025speeding} do for UCB-E to ensure we do not evaluate models on the same test instance more than once.

We then eliminate the worst model according to the empirical mean of rewards $\widehat{\mu}_{i,n_k} = \frac{1}{n_k} \sum_{j\in \mathcal{J}_k} S_{ij} $, where $\mathcal{J}_k = \bigcup_{t\leq k}  \widetilde{\mathcal{J}}_t$ is the set of queries the remaining models have been evaluated on. 
By design, after phase $k$, each remaining model has been evaluated exactly $n_k$ times. Moreover as verified by \citet{audibert2010best}, the total budget spent does not exceed the allowed budget $n$.  We summarize this procedure in \cref{alg:csr}.

\begin{algorithm}[h]
\caption{Synchronized Successive Rejects (SySRs)}
\label{alg:csr}
\begin{algorithmic}[1]
\REQUIRE Budget $n$, evaluation oracle that samples entries in score matrix $\S \in [0,1]^{K\times L}$
\STATE $\overline{\log}(K) \gets \frac{1}{2}+\sum_{i=2}^{K}\frac{1}{i}$ and $n_0 \gets 0$ 
\STATE $A_1 \gets \{1,\dots,K\}$, $\cJ \gets \crl{1,\ldots, L}$, and $\mathcal{J}_0 \gets \varnothing$
\FOR{$k=1$ to $K-1$} 
    \STATE $n_k \gets \left\lceil \frac{1}{\overline{\log}(K)}\frac{n-K}{K+1-k}\right\rceil$
    \STATE Draw $\widetilde{\mathcal{J}}_k \subseteq \cJ\setminus \mathcal{J}_{k-1}$ uniformly without replacement from all subsets such that $|\widetilde{\mathcal{J}}_k| = n_k-n_{k-1}$
    \STATE $\mathcal{J}_k \gets \mathcal{J}_{k-1} \cup \widetilde{\mathcal{J}}_k$
    \FOR{each arm $i\in A_k$}
        \STATE Sample $S_{ij}\sim \phi_{ij}$ for all $j\in \widetilde{\mathcal{J}}_k$ 
        \STATE $\widehat{\mu}_{i,n_k} \gets \frac{1}{n_k}\sum_{j\in \mathcal{J}_k} S_{ij}$
    \ENDFOR
    \STATE $\ell_k \in \arg\min_{i\in A_k}\widehat{\mu}_{i,n_k}$
    \STATE $A_{k+1} \gets A_k \setminus \{\ell_k\}$
\ENDFOR
\RETURN the last remaining element in $\IhatSc \in A_{K}$
\end{algorithmic}
\end{algorithm}

Contrary to the classical SR algorithm, which samples rewards independently for each arm, SySRs explicitly exploits the correlation between different models by evaluating the entire set of remaining models on the \emph{same} test instances in each phase. As with paired comparison tests, given between-model correlation, this allows for more precise estimates of the performance gap between two models. In the next section, we explain this in more detail and quantify the resulting gains from synchronization. 

\section{Guarantees for SySRs}\label{sec:guarantees}
To bound the error probability of SySRs, we must first capture the relationship between the models. To that end, we introduce a correlation-aware hardness measure. 
Analogous to $H_2$ from \cref{subsec:guarantees-classical}, we define 
\begin{equation}\label{eq:H3}
H_3
\;=\;
\max_{i\in\{2,\dots,K\}}
 i H_{3,i}\
\end{equation}
where we replace the arm-specific hardness $1/\Delta_i^2$ in $H_2$ by the new $H_{3,i}$, which we define as
\begin{equation}\label{eq:H3k}
H_{3,i} \;=\; \frac{2 V_{\star,i} + \frac{2}{3}(1+\Delta_i)\Delta_i}{\Delta_i^2}.
\end{equation}
Here, we denote the variance of the score difference between the optimal arm and the $i$-th sorted arm $(i)$ as 
\begin{align}
V_{\star,i} &= \Var(S_{\istar J} - S_{(i)J}), \label{eq:variance}
\\& = \Var(S_{\istar J}) + \Var(S_{(i)J}) - 2 \Cov (S_{\istar J}, S_{(i) J}) \nonumber
\end{align}
where indices $J\sim \text{Unif}(1,\cdots, L)$ are sampled uniformly and variance is taken jointly across the sampled indices and the stochasticity of the scores $S_{ij}\sim \phi_{ij}$.
Then we obtain the following upper bound for the error probability of SySRs. 

\begin{theorem}
\label{thm:csr_bound}
Let $\IhatSc$ be the arm chosen by SySRs (\cref{alg:csr}) with a budget of $n$. The probability $e_n$ that $\IhatSc$ is not the best arm $i^\star$ is then upper bounded as follows:
\[
e_n := \mathbb{P}(\IhatSc \neq i^\star) \;\le\; \frac{K(K-1)}{2} \exp\left( - \frac{n-K}{\overline{\log}(K) H_3} \right), 
\]
where $H_3$ is the correlation-aware hardness quantity defined in \cref{eq:H3}.
\end{theorem}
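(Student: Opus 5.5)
We follow the structure of the original Successive Rejects analysis of \citet{audibert2010best} and replace its Hoeffding step by a Bernstein-type inequality for sampling without replacement, applied to the paired differences $S_{\istar j}-S_{(i)j}$.

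\textbf{Reduction to pairwise comparisons.} If SySRs errs, then $\istar$ is eliminated in some phase $k\in\{1,\dots,K-1\}$. In phase $k$ there are $K+1-k$ active arms, so at least one of the $k$ best arms $(1),\dots,(k)$ has already been removed. Hence, if $\istar$ is eliminated in phase $k$, some arm $(i)$ with $i\ge K+1-k$ is still active and satisfies $\widehat\mu_{(i),n_k}\ge\widehat\mu_{\istar,n_k}$. A union bound gives
\[
e_n\le\sum_{k=1}^{K-1}\sum_{i=K+1-k}^{K}\PP\bigl(\widehat\mu_{(i),n_k}-\widehat\mu_{\istar,n_k}\ge 0\bigr),
\]
which contains $K(K-1)/2$ terms. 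Synchronization is what makes this useful: both arms were evaluated on the same index set $\mathcal{J}_k$ with $|\mathcal{J}_k|=n_k$. Therefore $\widehat\mu_{\istar,n_k}-\widehat\mu_{(i),n_k}=\frac1{n_k}\sum_{j\in\mathcal{J}_k}D_j$ with $D_j=S_{\istar j}-S_{(i)j}\in[-1,1]$. Moreover, $\mathcal{J}_k$ is marginally a uniform $n_k$-subset of $\{1,\dots,L\}$ drawn without replacement, since each $\widetilde{\mathcal{J}}_t$ is uniform on the complement and the phase sizes are deterministic. Finally, the fresh noise draws $S_{ij}\sim\phi_{ij}$ are independent given the indices.

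\textbf{Concentration step (main obstacle).} We need a bound of the form
\[
\PP\Bigl(\tfrac1{n_k}\textstyle\sum_{j\in\mathcal{J}_k}D_j\le 0\Bigr)\le\exp\bigl(-n_k/H_{3,i}\bigr),
\]
where the mean of $D_J$ is $\Delta_i$ and its variance is $V_{\star,i}$. Bernstein's inequality for i.i.d.\ variables with range $2$ gives $\exp\bigl(-n\Delta^2/(2V+\frac{4}{3}\Delta)\bigr)$. The constant in $H_{3,i}$ is $\frac23(1+\Delta_i)\Delta_i$ rather than $\frac43\Delta_i$. This suggests that the authors use the one-sided range $b=1+\Delta_i$, i.e.\ $\Delta_i-D_j\le 1+\Delta_i$, in the Bernstein bound $\exp(-n t^2/(2\sigma^2+\frac23 bt))$ with $t=\Delta_i$. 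We would apply exactly this form.

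The difficulty is that the samples are not i.i.d.: the indices are drawn without replacement and the scores are stochastic. We handle the indices by Hoeffding's reduction (Hoeffding 1963, Thm.~4). For any convex $f$, the expectation of $f$ of a without-replacement sum is at most the expectation under with-replacement sampling. Applying this to $f=\exp(\lambda\,\cdot)$ after conditioning on the noise, or more carefully to the conditional mgf $\prod_j\EE[e^{\lambda(\Delta_i-D_j)}\mid j]$, which is a function of the index set alone, reduces the problem to $n_k$ i.i.d.\ draws of $D_J$ with $J$ uniform and fresh noise. The variance of this i.i.d.\ population is exactly the joint variance $V_{\star,i}$ defined in \cref{eq:variance}. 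The standard Bernstein Chernoff argument then yields $\exp(-n_k/H_{3,i})$. Checking that the conditional-mgf product is a legitimate function of the sampled set to which Hoeffding's convexity lemma applies is the step requiring the most care.

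\textbf{Summation.} By definition $iH_{3,i}\le H_3$. With $i\ge K+1-k$ we get $H_{3,i}\le H_3/(K+1-k)$. Combined with $n_k\ge\frac{n-K}{\logbar(K)(K+1-k)}$ from \cref{eqn:budget-schedule}, each term is at most $\exp\bigl(-\frac{n-K}{\logbar(K)H_3}\bigr)$. Summing the $K(K-1)/2$ terms gives the claim.
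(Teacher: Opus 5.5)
Your proposal is essentially the paper's proof: the same union bound over phases and the $k$ worst arms, and the same Bernstein bound with range $M=1+\Delta_i$ and variance proxy $V_{\star,i}$. That bound comes from applying Hoeffding's reduction to the per-index conditional mgfs (your ``more careful'' option, which is exactly the paper's lemma for bounded variables without replacement; conditioning on the noise first would leave a random population mean and variance), followed by the same summation via $n_k\ge \frac{n-K}{\logbar(K)(K+1-k)}$ and $iH_{3,i}\le H_3$.

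There is one wording slip in the pigeonhole step. It should say that only $k-1$ arms are removed before phase $k$, so at least one of the $k$ worst arms $(K+1-k),\dots,(K)$ is still active. It is not true that one of the $k$ best arms has already been removed; that fails already in phase $1$.
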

We prove \cref{thm:csr_bound} in \cref{sec:proofs}. Our proof follows  the classical SR analysis  in \citet{audibert2010best}. The key difference is that it uses a version of Bernstein's inequality 
rather than Hoeffding's inequality to bound the error rate for comparing the means of any pair of arms. 

Intuitively, SySRs reduces error probability by the same mechanism a paired
difference test improves compared to a two-sample test.
At the end of a given phase $k$, the probability of an error is upper bounded
by the probability that the estimated gap $\widehat{\mu}_{i^\star} - \widehat{\mu}_{(i)} $ is negative for 
any remaining model $(i)$. This probability in turn depends on the
variance $V_{\star,i}$ of the score difference $S_{i^\star j} - S_{(i) j}  $.  Formally, by the definition of
$H_3$ in \cref{eq:H3k}, the bound in \cref{thm:csr_bound} tightens
whenever the variance $V_{\star,i}$ is small. By exposing all models to the same test
instances, we make use of between-model correlation to reduce that variance (\cref{eq:variance}).
As discussed in detail later, this tightening is particularly decisive in the ''hard'' regime where the score gaps $\Delta_i$ are small and identifying the optimal arm requires the most samples. 

\subsection{Regimes Where Synchronization Pays Off}
While \cref{thm:csr_bound} and \cref{eqn:guarantees-SR} share a similar structure, we highlight the gains in our result by discussing some sufficient conditions for our bound to be (much) smaller. We define $\theta_i \in [0,1]$ as the normalized correlation parameter between the best arm and the $i$-th-ranked arm:
\[
 \theta_i := \frac{\Cov (S_{\istar J}, S_{(i) J})}{(\mu^\star-\Delta_i)(1-\mu^\star) }
\]
where $(\mu-\Delta)(1-\mu)$ is the maximum achievable covariance between two random variables in $[0,1]$ with means $\mu$ and $\mu-\Delta$ respectively. We focus on the case of positive between-model correlation, in which we have that $\theta_i \in [0,1]$.
In Proposition \ref{thm:theta-star}, we prove a detailed sufficient condition on $(\theta_i)_{i \neq 1}$ for $H_3$ to be smaller than $H_2$. The threshold $\theta^\star$ derived there tightly characterizes the boundary for improvement. However, due to the algebraic complexity of this result, we distill the following corollary. 

\begin{corollary}[Simplified Sufficient Conditions]
\label[corollary]{cor:sufficient_conditions}
If for every suboptimal arm, either
\[
    \theta_i > 2\Delta_i
    \qquad \text{or} \qquad
    \Delta_i < \tfrac{3}{2}\mu^{\star} - 1,
\]
then the correlation-aware hardness satisfies $H_3 < H_2$.
\end{corollary}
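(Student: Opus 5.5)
The plan is to prove the strict inequality termwise and then pass to the maximum. Since $H_3=\max_{i\ge 2} iH_{3,i}$ and $H_2=\max_{i\ge2} i/\Delta_i^2$ are maxima over the same finite index set, it suffices to show $H_{3,i}<1/\Delta_i^2$ for every suboptimal arm $(i)$. If $H_3$ is attained at $i_0$, then $H_3=i_0H_{3,i_0}<i_0/\Delta_{i_0}^2\le H_2$. By \cref{eq:H3k}, the termwise claim is equivalent to
\[
2V_{\star,i}+\tfrac{2}{3}(1+\Delta_i)\Delta_i<1 .
\]
So everything reduces to an upper bound on $V_{\star,i}$ in terms of $\mu^\star$, $\Delta_i$ and $\theta_i$.

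To bound $V_{\star,i}$ I would use the decomposition \cref{eq:variance}. Write $X=S_{\istar J}$ and $Y=S_{(i)J}$, with means $\mu^\star$ and $\mu^\star-\Delta_i$. Since both take values in $[0,1]$, $\Var(X)\le\mu^\star(1-\mu^\star)$ and $\Var(Y)\le(\mu^\star-\Delta_i)(1-\mu^\star+\Delta_i)$. By definition of $\theta_i$, $\Cov(X,Y)=\theta_i(\mu^\star-\Delta_i)(1-\mu^\star)$. Substituting and splitting $1-\mu^\star+\Delta_i=(1-\mu^\star)+\Delta_i$ gives
\[
V_{\star,i}\le \mu^\star(1-\mu^\star)+(\mu^\star-\Delta_i)(1-\mu^\star)(1-2\theta_i)+(\mu^\star-\Delta_i)\Delta_i .
\]
This is presumably the same starting point as the exact threshold $\theta^\star$ in Proposition~\ref{thm:theta-star}. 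In fact the cleanest route may be to invoke that proposition and check that each of the two simple conditions implies $\theta_i>\theta^\star$. Below I describe how I would argue directly.

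\emph{Case $\theta_i>2\Delta_i$.} Replace $1-2\theta_i$ by $1-4\Delta_i$ in the bound above. The right-hand side then becomes a quadratic in $\mu^\star$, which I would maximize over $\mu^\star\in[\Delta_i,1]$. The terms $\mu^\star(1-\mu^\star)$ and $(\mu^\star-\Delta_i)(1-\mu^\star)$ are each at most about $1/4$. The point to verify is that the saving $-4\Delta_i(\mu^\star-\Delta_i)(1-\mu^\star)$ from correlation dominates two penalties: the extra $(\mu^\star-\Delta_i)\Delta_i$, and the $\tfrac{2}{3}(1+\Delta_i)\Delta_i$ term. At the critical $\mu^\star\approx 1/2$ this amounts to comparing roughly $1-4\Delta_i(\tfrac14)\cdot 2$ plus $\Delta_i$-corrections against $1$. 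I expect this bookkeeping of the $O(\Delta_i)$ and $O(\Delta_i^2)$ terms, with an endpoint check in $\mu^\star$, to be the main obstacle. The factor $2$ in ``$\theta_i>2\Delta_i$'' looks chosen exactly so that the linear-in-$\Delta_i$ terms cancel with room to spare.

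\emph{Case $\Delta_i<\tfrac32\mu^\star-1$.} Here I would drop the correlation term, using only $\theta_i\ge0$. Then
\[
2V_{\star,i}\le 2\mu^\star(1-\mu^\star)+2(\mu^\star-\Delta_i)(1-\mu^\star+\Delta_i).
\]
The condition forces $\mu^\star>2/3$ and keeps $\mu^\star-\Delta_i$ large, so both products are small. I would show the resulting polynomial inequality in $(\mu^\star,\Delta_i)$ holds on the region $\{\Delta_i<\tfrac32\mu^\star-1\}$ by checking it on the boundary line $\Delta_i=\tfrac32\mu^\star-1$ and using monotonicity in $\Delta_i$.

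If this variance bound proves too loose near the boundary, I would fall back on $V_{\star,i}\le\EE[(X-Y)^2]-\Delta_i^2$ together with $(X-Y)^2\le|X-Y|\le(1-X)+(1-Y)$. Combining the two cases arm by arm then yields $H_{3,i}<1/\Delta_i^2$ for all $i$, and hence $H_3<H_2$.
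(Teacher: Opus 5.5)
Your proposal is correct and is essentially the paper's argument. The paper also reduces, arm by arm, to $2V_{\star,i}+\tfrac23(1+\Delta_i)\Delta_i<1$ via the same marginal-variance and $\theta_i$-covariance bound (this is the proof of Proposition~\ref{thm:theta-star}). In the case $\theta_i>2\Delta_i$ it shows that the resulting quadratic in $\mu^\star$ is concave, because $\Delta_i<1/2$, which $\theta_i\le 1$ forces. Its maximum is $-\Delta_i(1-2\Delta_i)(4-3\Delta_i)/12<0$, which is exactly the bookkeeping you anticipated.

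For $\Delta_i<\tfrac32\mu^\star-1$, the paper instead lower-bounds by $\tfrac32\mu^\star-1$ the smaller root in $\Delta_i$ of the numerator of $\theta^\star$. Your boundary-plus-monotonicity check works just as well. The slack $2\mu^\star(1-\mu^\star)+2(\mu^\star-\Delta_i)(1-\mu^\star+\Delta_i)+\tfrac23(1+\Delta_i)\Delta_i-1$ equals $-(1-\mu^\star)^2$ on the line $\Delta_i=\tfrac32\mu^\star-1$. Its $\Delta_i$-derivative $4\mu^\star-\tfrac43-\tfrac83\Delta_i$ exceeds $\tfrac43$ below that line, so the slack is negative throughout the region.
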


\Cref{cor:sufficient_conditions} implies that $H_3$ is commonly smaller than $H_2$, even when between-model correlation is modest. If all evaluated models have similar capabilities, all $\Delta_i$ are small such that small values of $\theta_i$ already suffice for the first condition. Similarly, if the best model has high average score $\mustar \approx 1$ while all other models perform noticeably better than chance ($\mu_i \gg 0.5$), the second condition is fulfilled even when there is no correlation.

We would like to highlight that the comparison of $H_2$ and $H_3$ relies on two separate mechanisms: a better suited analysis and an improved algorithm. Our analysis can equally be applied to standard SR, leading to an error bound where $H_3$ is replaced by 
\begin{equation*}
\begin{split}
    H'_3 &=  \max_{i\in\{2,\dots,K\}} i H'_{3,i}, \quad\text{where}\\
    H'_{3,i} &= \frac{2 \Var(S_{i^\star J}) + 2\Var(S_{(i)J}) + \frac{2}{3}(1+\Delta_i)\Delta_i}{\Delta_i^2}
\end{split}
\end{equation*}
uses the sum of per-arm variances rather than $V_{\star,i}$. In certain regimes (e.g., when the second condition of \cref{cor:sufficient_conditions} holds), this will yield tighter performance bounds than \cref{eqn:guarantees-SR}, even for standard SR. On the algorithmic side, SySRs improves performance by being able to leverage positive correlation between models. This is reflected in the scaling of $H_3$ with the correlation strength $\theta_i$, 
while $H'_3$ is unaffected by any between-arm correlation. 

Averaging over the datasets summarized in \cref{tab:dataset-diversity}, we indeed observe that usually $H_3 < H'_3 < H_2$. This shows that both tighter analysis and algorithmic improvements play an important role for our improved theoretical guarantees.

\subsection{The Two-Armed Case}
To build further intuition for the scaling of the correlation-aware hardness $H_3$, we focus on the simplified two-arm Bernoulli setting with means $\mu$ and $\mu - \Delta$. We discuss how $H_3$ scales with the performance gap $\Delta$ and correlation, and compare it to $H_2$. We consider a deterministic score matrix $\mathbf{S} \in \{0,1\}^{2 \times L}$ with the normalized correlation parameter 
$
    \theta = \Cov_{J\sim \text{Unif}(1,\cdots, L)}(S_{i^\star J}, S_{(2)J})/((\mu-\Delta)(1-\mu))
$.

In particular, we have the two following edge cases.

\noindent\emph{(i) Uncorrelated rewards ($\theta=0$).}
When arms are uncorrelated, the variance of the score difference equals the sum of the per-arm variances:
\begin{equation}\label{eq:Nocov}
    V_{\star,2} \;=\; \Var(S_{i^\star J} - S_{(2)J}) \;=\; \Var(S_{i^\star J}) + \Var(S_{(2)J}).
\end{equation}
As a result, in this regime, $H_3 = H'_3 = \Theta(1/\Delta^2)$ as $\Delta \to 0$, recovering the classical hardness $H_2$ up to constant factors.

\noindent\emph{(ii) Maximally correlated rewards ($\theta=1$).}
Under binary scores and deterministic per-instance scores,
this case arises naturally from a point-wise dominant model \citep{mania2020classifier}, where the optimal model is necessarily correct whenever the suboptimal model is correct ($S_{(2)j}=1 \implies S_{i^\star j}=1$). This structural dependence forces maximal correlation ($\theta=1$), leading to a simplified variance
\begin{equation}
\label{eqn:Bernstein-condition-analogue}
    V_{\star,2} = \Delta -\Delta^2
\end{equation} 
such that hardness scales as
\(
H_3 = \Theta\left(1/\Delta\right) 
\) for $\Delta\to 0$.

This improvement is reminiscent of fast rate phenomena in statistical learning theory. 
There, Bernstein-type conditions \citep{bartlett2006empirical} control the variance of the excess loss by its expectation, allowing sample complexities to interpolate between the slow rate $1/\varepsilon^2$ and the fast rate $1/\varepsilon$ \citep[Appendix B.5]{shalev2014understanding}. 
Here, \cref{eqn:Bernstein-condition-analogue} yields an analogous relation:
\begin{align*}
    \Var(S_{i^\star J}-S_{(2)J})
    &\leq \Delta = \EE[S_{i^\star J}-S_{(2)J}].
\end{align*}
Thus the variance of the arm difference shrinks linearly with the gap,
which explains why the correlation-aware hardness improves from the
classical $\Theta(1/\Delta^{2})$ scaling to $\Theta(1/\Delta)$ in this
maximally correlated regime. In this sense,
\cref{eqn:Bernstein-condition-analogue} plays the role of a
Bernstein condition for the score differences.

\paragraph{A Matching Lower Bound.}
To further motivate the hardness measure $H_3$, in \cref{sec:h3details} we also prove the following lower bound:
Consider the problem of identifying the better of two models with expected score gap $\Delta$ and variance of the score difference $V_{\star,2}$, using $n$ i.i.d.\ pairs of binary scores $\crl{(S_{1j_i},S_{2j_i})}_{i=1}^n$ from the two models. 
\begin{theorem}[Informal version of \cref{thm:lower-bound}]
\label{thm:lower-bound-informal}
    In the setting above, any algorithm returning $\Ihat$ incurs error at least 
\begin{equation*}
    e_n = \PP(\Ihat\neq \istar) \geq c_1 \exp\prn{-c_2\frac{n}{H_3}}
\end{equation*}
on at least one distribution of $(S_{1J},S_{2J})$ adhering to the gap and variance constraints.
Here $c_1,c_2>0$ are two universal constants.
\end{theorem}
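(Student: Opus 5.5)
The plan is a two-point (Le Cam) argument. I will build a pair of distributions $P$ and $Q$ on $\{0,1\}^2$ that are mirror images of each other: under $P$ model $1$ is better by $\Delta$, and under $Q$ model $2$ is better by $\Delta$. Both will have (essentially) the same variance of the score difference. The goal is to show that $n$ i.i.d.\ pairs barely distinguish them.

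\textbf{Construction.} Write $D = S_{1J} - S_{2J} \in \{-1,0,1\}$. Under $P$, set
\[
P\big((1,0)\big) = p,\qquad P\big((0,1)\big) = q,\qquad p = \tfrac{s+\Delta}{2},\qquad q = \tfrac{s-\Delta}{2}.
\]
The remaining mass $1-s$ goes to $(0,0)$ and $(1,1)$ in any fixed way. Then
\[
\EE_P[D] = p - q = \Delta,\qquad \Var_P(D) = s - \Delta^2 .
\]
Let $Q$ be $P$ with the two coordinates swapped. Then $\EE_Q[D] = -\Delta$, $\Var_Q(D) = \Var_P(D)$, and the best arm differs between $P$ and $Q$.

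\textbf{Choice of $s$.} Taking $s = V_{\star,2} + \Delta^2$ exactly matches the variance constraint, but it fails near the point-wise dominant regime. There $V_{\star,2} \approx \Delta - \Delta^2$, so $q \approx 0$ and the KL divergence blows up. I therefore plan to take $s = V_{\star,2} + \Delta^2 + c\Delta$ for a small constant $c$, capping at $1$. This inflates the variance by only $O(\Delta)$. Since
\[
H_3 \asymp \frac{V_{\star,2} + \Delta}{\Delta^2},
\]
the perturbed instance has the same $H_3$ up to constants. The formal \cref{thm:lower-bound} should be stated so that such an instance is admissible, for example by constraining the variance only up to an additive $O(\Delta)$ or by working with $V_{\star,2} + \Delta$ directly.

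\textbf{KL bound.} Only the off-diagonal atoms differ between $P$ and $Q$, so
\[
\mathrm{KL}(P,Q) = (p-q)\log\frac{p}{q} = \Delta \log\Big(1 + \frac{2\Delta}{s-\Delta}\Big) \le \frac{2\Delta^2}{s-\Delta}.
\]
With $s - \Delta \gtrsim V_{\star,2} + \Delta$, this gives $\mathrm{KL}(P,Q) \lesssim \Delta^2/(V_{\star,2} + \Delta) \asymp 1/H_3$. In the two-arm case $H_3 = 2H_{3,2}$, so constants are absorbed into $c_2$.

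\textbf{Tensorization and conclusion.} Each observation is a full pair drawn i.i.d., so the sampling is not adaptive in any meaningful way. Hence
\[
\mathrm{KL}(P^{\otimes n}, Q^{\otimes n}) = n\,\mathrm{KL}(P,Q).
\]
Let $A = \{\Ihat = 1\}$; this is an error event under $Q$, and its complement is an error event under $P$. The Bretagnolle--Huber inequality gives
\[
P^{\otimes n}(A^c) + Q^{\otimes n}(A) \ge \tfrac12 \exp\!\big(-n\,\mathrm{KL}(P,Q)\big).
\]
So the worse of the two error probabilities is at least $\tfrac14 \exp(-c_2 n/H_3)$. Any internal randomization of the algorithm is handled by conditioning on it, or by including it in the sample space.

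\textbf{Main obstacle.} The main obstacle is the near-dominant regime, where $q \to 0$ makes a naive symmetric swap useless. The $O(\Delta)$ variance inflation above is my first attempt. If the formal statement pins the variance exactly, I would instead compare $P$ to a $Q$ with a vanishing negative gap, say $-\Delta/2$, chosen to keep the likelihood ratio bounded. I would then redo the KL computation using the $\chi^2$ bound $\mathrm{KL} \le \chi^2$.
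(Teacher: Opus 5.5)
\textbf{There is a genuine gap, in the near-dominant regime.} Your construction is the paper's: mirror-image laws on $\{0,1\}^2$ with off-diagonal masses $\frac{s\pm\Delta}{2}$, giving gap $\pm\Delta$ and $\Var(S_{1J}-S_{2J})=s-\Delta^2$. Your two-point reduction also matches. For $s\ge 2\Delta$ your KL bound $2\Delta^2/(s-\Delta)\le 4\Delta^2/s$ is correct and gives the right rate.

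The gap is the divergence you chose. Neither workaround proves the statement as posed. Inflating $s$ to $V_{\star,2}+\Delta^2+c\Delta$ produces an instance whose difference variance is $V_{\star,2}+c\Delta$, which violates the variance constraint. The fallback $Q$ with gap $-\Delta/2$ violates the gap constraint.

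This is not cosmetic. Take $V_{\star,2}=\Delta-\Delta^2$ and $D\in\{-1,0,1\}$. Then $\EE[D^2]=\Delta=|\EE[D]|$ forces every admissible $P$ to have $P(D=-1)=0$ and every admissible $Q$ to have $Q(D=1)=0$. So any admissible pair is mutually singular off the diagonal, and $\mathrm{KL}$ is infinite in both directions. No KL/Bretagnolle--Huber argument over admissible instances can cover this regime. It is exactly the pointwise-dominant case the paper uses to motivate $H_3=\Theta(1/\Delta)$. Rewording the theorem to allow $O(\Delta)$ variance slack only proves a weaker result.

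\textbf{The fix is to use a distinguishability measure that does not need absolute continuity.} The paper uses the Bhattacharyya coefficient $\BC(P,Q)=(1-s)+\sqrt{s^2-\Delta^2}$, which stays at least $1-s\ge\frac12$ even when $s=\Delta$. Since $\BC$ tensorizes, the Hellinger bound gives $\TV(P^{\otimes n},Q^{\otimes n})\le\sqrt{1-\BC(P,Q)^{2n}}$. Hence the average error is at least $\frac14\BC^{2n}$. Moreover $-\log\BC\le 2(1-\BC)=\frac{2\Delta^2}{s+\sqrt{s^2-\Delta^2}}\le\frac{2\Delta^2}{s}$. This is at most $\frac{16}{3}\cdot\frac{1}{H_3(\Delta,V)}$ uniformly over $s\in[\Delta,1/2]$, because $\Delta^2H_3(\Delta,V)=2s-\frac43\Delta^2+\frac23\Delta\le\frac83 s$ when $s\ge\Delta$.

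Alternatively, you can keep your KL step for $s\ge 2\Delta$ and handle $s<2\Delta$ directly. The all-diagonal event has the same probability $(1-s)^n$ and the same conditional law under $P^{\otimes n}$ and $Q^{\otimes n}$, since the coordinate swap fixes $(0,0)$ and $(1,1)$. So the two error probabilities sum to at least $(1-s)^n\ge e^{-2sn}\ge e^{-4\Delta n}$. In that regime $1/H_3\ge\frac{3}{14}\Delta$, which again gives $\exp(-c\,n/H_3)$.
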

The formal version of this statement can be found in \cref{thm:lower-bound}. In particular in the 2-model i.i.d.\ setting, if all we know is that the variance equals $V_{\star,2}$ 
and the gap equals $\Delta$, then $H_3$ measures the optimal sample complexity (in a minimax sense), which is achieved by SySRs up to constants. While a generalization to sampling without replacement is open and likely requires a finite sample correction similar to the Bernstein-Serfling inequality \cite{bardenet2015concentration}, we expect this correction to be minor when the benchmark dataset size $L$ is large compared to the budget $n$.

\section{Experiments}
\label{sec:experiments}
Having established theoretical guarantees for SySRs, we now validate its performance empirically, comparing it to algorithms from previous work.  
\subsection{Experimental Setup}
\paragraph{Datasets and Models.}
We base our experiments on publicly available per-query results for 15 standard benchmark datasets, sourced from 
HELM Lite \citep{liang2022holistic} and the OpenLLM Leaderboard \citep{beeching2023open}.
For each benchmark, we include all models for which evaluation results were available for all queries in December 2025. As all considered benchmarks use greedy decoding, all scores $S_{ij}$ are deterministic\footnote{Note that this does not make the rewards $R_i = S_{iJ}$ deterministic, as test set indices $J$ are still randomized.}. This is consistent with our theoretical model, as the special case in which the distributions $\phi_{ij}$ are point masses.

\cref{tab:dataset-diversity} summarizes key statistics across all datasets, showing substantial variation in both dataset size and difficulty in terms of the hardness measures $H_2$ and $H_3$. 
For more details on the datasets, see \cref{tab:dataset-details} in the appendix.

\paragraph{Baselines.}
We compare SySRs to the following baselines: 
\begin{itemize}
    \item \textbf{Uniform Sampling (US)} evaluates each model on equally many test instances, each independently drawn uniformly without replacement.
    \item \textbf{Synchronized Uniform Sampling (SyUS)} randomly selects test instances uniformly without replacement, evaluating all models on the same subset of instances. 
    \item \textbf{UCB-E} from \citet{audibert2010best}.
    \item \textbf{UCB-E with low-rank factorization (UCB-E-LRF)} from \citet{zhou2025speeding}    
    \item \textbf{Successive Rejects (SR)} from \citet{audibert2010best}.
    \item \textbf{Synchronized UCB-E (SyUCB-E)}: a novel correlation-aware variant of UCB-E, using a global task schedule similar to SySRs (see \cref{sec:smart-ucb-e}).
\end{itemize}
 We run all (non-LRF) UCB-E variants for the exploration parameters $a \in \{0.1, 1.0, 10.0\}$. 
For UCB-E-LRF, we evaluate two variants: one with the default hyperparameters from \citet{zhou2025speeding} (including a warm-up phase consisting of 5\% of the benchmark size $K\times L$), and one with no warm-up phase, which we call UCB-E-LRF(No Warm-up).

All algorithms use sampling \emph{without replacement}. We slightly modify all algorithms except for UCB-E-LRF to ensure well-defined behavior when a model has already been evaluated on the full set of queries $\mathcal{X}$. UCB-E-LRF already handles this case natively. For the other UCB-E variants, we simply exclude fully evaluated models from the arm selection. For SR and SySRs, we modify the schedule, capping $n_k$ at the number of queries in the test set $L$ and reallocating the freed-up budget proportionally to earlier phases. 
Additional implementation details for all algorithms can be found in \cref{sec:implementation-details}.

\begin{table}[h]
\centering
\small
\caption{Dataset diversity and hardness statistics across 15 benchmark datasets. The $H'_3/H_2$ ratio quantifies our tighter theoretical analysis, while the $H_3/H'_3$ ratio quantifies potential gains from exploiting correlation between models.
}
\label{tab:dataset-diversity}
\begin{tabular}{@{}lcccc@{}}
\toprule
\textbf{Metric} & \textbf{Min} & \textbf{Max} & \textbf{Mean} & \textbf{Std} \\
\midrule
Tasks $L$ & 437 & 12,032 & 2,298 & 3,145 \\
Models $K$ & 39 & 450 & 158 & 155 \\
$\Delta_2$ & 0.0020 & 0.0455 & 0.0143 & 0.0138 \\
$H_2$ & $1.7{\times}10^{3}$ & $5.0{\times}10^{5}$ & $8.7{\times}10^{4}$ & $1.3{\times}10^{5}$ \\
$H'_3$ & $1.0{\times}10^{3}$ & $8.7{\times}10^{4}$ & $2.5{\times}10^{4}$ & $2.4{\times}10^{4}$ \\
$H_3$ & $1.0{\times}10^{3}$ & $4.3{\times}10^{4}$ & $1.0{\times}10^{4}$ & $1.1{\times}10^{4}$ \\
$H'_3/H_2$ & 0.11 & 1.08 & 0.61 & 0.32 \\
$H_3/H'_3$ & 0.07 & 1.04 & 0.56 & 0.35 \\
\bottomrule
\end{tabular}
\end{table}

\paragraph{Evaluation Metrics.}
For each benchmark, we define the best-performing model $\istar$ as the one with the highest average performance across all test instances. 
As our primary evaluation metric, we measure the 
\emph{error probability} $e_n = \mathbb{P}(\Ihat \neq \istar)$, quantifying the likelihood that an algorithm fails to identify the best-performing model for a given evaluation budget $n$. We call $ 1 - e_n$ the
\emph{identification accuracy}. 
To obtain robust empirical estimates $\hat{e}_n$ for $e_n$, we perform $1000$ independent runs per algorithm. For UCB-E-LRF and UCB-E-LRF (No Warm-up), we reduce the number of runs to $100$ due to their high computational overhead. Each run uses a different random seed to sample the subset of queries used by the algorithms. 
We then estimate $\hat{e}_n$ as the fraction of runs in which the algorithm fails to identify the best model. 

To enable comparison across datasets of different sizes, we express the budget as a percentage $b = \frac{n}{K_D \times L_D}$ of the full evaluation budget, where $K_D$ and $L_D$ denote the number of models and test instances of dataset $D$, respectively. We then estimate the error probability at regular intervals of budget percentage $b$, denoting our estimates $\hat{\epsilon}_b$. 
Based on $\hat{\epsilon}_b$, we analyze the performance of algorithms in two ways 
that are comparable across datasets:
First, as in \cref{fig:mean}, we study how $\hat{\epsilon}_b$ changes with the budget percentage $b$. This provides us with insights into how well algorithms perform given a \emph{fixed percentage of the evaluation budget}.
Second, we define the \emph{X\% Confidence Budget} (CB) 
as the minimum budget percentage $b^\star$ such that the empirical identification accuracy $1-\hat{\epsilon}_b$ satisfies $1-\hat{\epsilon}_b \geq  X\%$ for all evaluated budget percentages $b \geq b^\star$. In other words, identification accuracy stays above $X\%$ once the budget percentage exceeds the X\% CB.
This dual view informs us how much evaluation budget an algorithm needs to achieve \emph{a target error rate}.
\subsection{Main Results}

We now present the main experimental results, focusing on both the identification accuracy view and the confidence budget view across datasets. Both views confirm SySRs' efficiency for model selection, as well as broader benefits of synchronization.

\begin{table}[h]
\centering
\small
\setlength{\tabcolsep}{4pt}
\caption{Identification accuracy averaged across datasets at different percentages of the model/query pairs in each dataset. Higher values indicate better performance. Best values in \textbf{bold}. }
\label{tab:identification-accuracy}
\begin{tabular}{lrrrrrrr}
\toprule
& \multicolumn{7}{c}{Budget Percentage} \\
\cmidrule(lr){2-8}
Algorithm & 8\% & 12\% & 16\% & 20\% & 25\% & 30\% & 35\% \\
\midrule
\multicolumn{8}{l}{\emph{Baselines}} \\
US & 30.2 & 36.7 & 42.0 & 45.3 & 50.5 & 54.2 & 58.0 \\
SyUS & 39.1 & 46.8 & 52.0 & 56.5 & 61.0 & 65.3 & 68.7 \\
\midrule
\multicolumn{8}{l}{\emph{Successive Rejects}} \\
SR & 72.4 & 89.7 & 95.9 & 98.2 & 99.3 & 99.8 & 99.9 \\
SySRs & \textbf{83.5} & \textbf{95.1} & \textbf{98.5} & \textbf{99.6} & \textbf{99.9} & \textbf{99.9} & \textbf{100.0} \\
\midrule
\multicolumn{8}{l}{\emph{UCB-E}} \\
$a=0.1$ & 61.0 & 66.9 & 70.7 & 73.0 & 75.8 & 77.7 & 80.1 \\
$a=1.0$ & 80.9 & 86.8 & 89.9 & 95.3 & 97.5 & 98.0 & 98.3 \\
$a=10.0$ & 51.9 & 63.3 & 75.5 & 85.2 & 91.1 & 91.8 & 93.0 \\
\midrule
\multicolumn{8}{l}{\emph{SyUCB-E}} \\
$a=0.1$ & 67.1 & 74.0 & 78.5 & 81.3 & 84.2 & 86.4 & 88.4 \\
$a=1.0$ & 83.4 & 89.4 & 92.8 & 96.1 & 98.0 & 98.7 & 98.9 \\
$a=10.0$ & 61.4 & 73.3 & 82.8 & 89.3 & 92.1 & 93.5 & 95.1 \\
\midrule
\multicolumn{8}{l}{\emph{UCB-E-LRF}} \\
5\% Warm-up & 47.2 & 60.8 & 73.2 & 81.0 & 85.9 & 91.7 & 93.2 \\
No Warm-up & 39.8 & 46.7 & 65.7 & 73.3 & 80.6 & 82.7 & 84.5 \\
\bottomrule
\multicolumn{8}{l}{\footnotesize All values in \%. Rounded down to nearest 0.1\%. } \\
\end{tabular}
\end{table}

\paragraph{The Average Identification Accuracy View.}
\cref{tab:identification-accuracy} shows the identification accuracy averaged across all 15 datasets at different budget percentages. Our hyperparameter-free SySRs 
outperforms all baselines for all tested hyperparameters at all considered budgets. In particular, at 12\% of the budget, SySRs achieves an average identification accuracy of 95\% while all other bandit methods stay below 90\% and the uniform sampling baselines do not even reach 50\%.

The results in \cref{tab:identification-accuracy} also demonstrate the impact of dataset synchronization, more broadly: For all of US, UCB-E, and SR, the synchronized variants outperform the non-synchronized version, often by around ten percentage points. In particular SyUCB-E, our synchronized version of UCB-E, gets close to SySRs' performance at some budgets, but remains sensitive to hyperparameter choice. 
For more fine-grained results on identification error, including per-benchmark results, see Appendix \ref{sec:all-dataset-plots}.

\begin{table}[h!]
\centering
\small
\setlength{\tabcolsep}{5pt}
\caption{Confidence Budget statistics across 15 datasets. For each confidence level, 
we report the average CB $b^{\star}$ and the maximum (worst-case) CB $b^{\star}$. Lower values indicate better performance. Best values in \textbf{bold}. Best hyperparameter setting \underline{underlined}.}
\label{tab:perfect-identification}
\begin{tabular}{lrr@{\hspace{12pt}}rr@{\hspace{12pt}}rr}
\toprule
& \multicolumn{2}{c@{\hspace{12pt}}}{90\% CB} & \multicolumn{2}{c@{\hspace{12pt}}}{95\% CB} & \multicolumn{2}{c}{100\% CB} \\
\cmidrule(lr){2-3} \cmidrule(lr){4-5} \cmidrule(lr){6-7}
Algorithm & Mean & Max & Mean & Max & Mean & Max \\
\midrule
\multicolumn{7}{l}{\emph{Baselines}} \\
US & 73 & 99 & 78 & 100 & 90 & 100 \\
SyUS & 59 & 98 & 66 & 100 & 83 & 100 \\
\midrule
\multicolumn{7}{l}{\emph{Successive Rejects}} \\
SR & 11 & 25 & 13 & 28 & 20 & 43 \\
SySRs & \textbf{8} & \textbf{17} & \textbf{10} & \textbf{19} & \textbf{17} & \textbf{35} \\
\midrule
\multicolumn{7}{l}{\emph{UCB-E}} \\
$a=0.1$ & 50 & 77 & 64 & 87 & 89 & 100 \\
$a=1.0$ & \underline{10} & \underline{23} & \underline{17} & \underline{40} & 68 & 99 \\
$a=10.0$ & 19 & 46 & 20 & 47 & \underline{22} & \underline{49} \\
\midrule
\multicolumn{7}{l}{\emph{SyUCB-E}} \\
$a=0.1$ & 37 & 71 & 50 & 82 & 89 & 100 \\
$a=1.0$ & \underline{9} & \underline{23} & \underline{14} & \underline{29} & 66 & 99 \\
$a=10.0$ & 17 & 43 & 18 & 46 & \underline{23} & \underline{52} \\
\midrule
\multicolumn{7}{l}{\emph{UCB-E-LRF}} \\
5\% Warm-up & \underline{18} & \underline{50} & \underline{19} & \underline{50} & \underline{24} & \underline{55} \\
No Warm-up & 19 & \underline{50} & 21 & \underline{50} & 26 & 60 \\
\bottomrule
\multicolumn{7}{l}{\footnotesize All values in \%.}\\
\multicolumn{7}{l}{\footnotesize LRF: rounded down to nearest checkpoint (\cref{sec:implementation-details}).}
\end{tabular}
\end{table}

\paragraph{The Average and Worst-Case Confidence Budget View.} 
To complement the budget-averaged perspective, we analyze algorithm performance using the Confidence Budget (CB) defined in the previous section. \cref{tab:perfect-identification} shows the mean and maximum of the CB across datasets for each algorithm at the 90\%, 95\%, and 100\% identification accuracy levels. SySRs consistently achieves the lowest mean and maximum CB values across all three accuracy thresholds, indicating its superior efficiency in identifying the best model with high confidence. In particular, across all datasets, SySRs needs to evaluate at most 35\% of model/query pairs to identify the best model with probability one. Meanwhile, SR requires 43\%, the best UCB variants need around 50\%, and the non-bandit baselines essentially require the full evaluation budget to achieve the same. 

As before, we observe that synchronized algorithms (SySRs, SyUCB-E and SyUS) outperform their non-synchronized counterparts (SR, UCB-E and US respectively) in almost all cases. This highlights the consistent benefits of leveraging model similarity through synchronized task selection. 

Beyond this, \cref{tab:perfect-identification} starkly illustrates the dependence of UCB-E variants on hyperparameter selection. While SR and SySRs perform well without any hyperparameters, the performance of UCB-E and SyUCB-E is strongly affected by the exploration parameter $a$. Although lower values of $a$ (e.g., $a=1.0$) tend to perform much better in terms of the 90\% CB, higher values (e.g., $a=10.0$) are needed to identify the best model with perfect reliability (i.e., to achieve strong performance in terms of 100\% CB). Again, more fine-grained results can be found in 
Appendix \ref{sec:all-dataset-plots}.

\paragraph{Additional Experiments.} Going beyond best model identification, \cref{sec:top-m-ranking} provides preliminary experiments on using SySRs and other bandit algorithms for top-$m$ model identification and  other model ranking tasks. We find that bandit algorithms continue to perform much better than uniform sampling. SySRs performs particularly well in terms of worst-case performance, and for small $m$, but SyUCB-E has an edge in terms of average error rates, especially when $m$ gets larger. 
In addition, \cref{sec:subset-selection} compares our approach to subset selection methods \citep{vivek2024anchor,polo2024tinybenchmarks}, which aim to construct smaller but predictive test sets (see \cref{sec:related-work}). As suggested by prior work \citep{zhang2025how,yauney2026}, these methods mostly bring minor improvements over uniform sampling, and evaluation on the resulting test sets yields much higher identification error rates than using SySRs. 
 
\section{Limitations and Future Work}
Our first guarantees on gains from model similarity in LLM selection open up multiple avenues for future research. First, we observe that despite the strong average performance of SySRs across datasets, UCB algorithms 
can outperform SySRs for some combinations of budgets and hyperparameters. 
This is akin to how classical UCB-E is often competitive with basic SR for appropriately chosen hyperparameters (Audibert et al., 2010) and motivates additional
research into how model similarity can be effectively used by UCB algorithms. Our SyUCB-E baseline is a first step in this direction, as it consistently outperforms standard UCB-E, but the proof techniques from \cref{thm:csr_bound} are not directly applicable to SyUCB-E.

Second, despite its relatively weak average performance, we observe UCB-E-LRF \citep{zhou2025speeding} performing very well on a small subset of benchmarks, even with minimal data. This motivates future work on designing more robust algorithms based on low-rank factorization or other imputation methods. Ideally, these algorithms could realize the occasional strong gains from LRF, while failing more gracefully in other cases and coming with theoretical guarantees. 

Third, our work focuses on best model identification, but more general ranking objectives can be relevant in practice. A natural generalization is identifying or ranking the top $m$ models. We provide preliminary experiments in \cref{sec:top-m-ranking}, observing that SySRs consistently outperforms SR. This indicates that synchronized evaluation might be useful for ranking models, more broadly. However, neither  algorithm is explicitly designed for top-$m$ ranking. As such, we see synchronized versions of existing top-$m$ algorithms such as Successive Accepts and Rejects (SAR) \citep{pmlr-v28-bubeck13} as a promising direction for future work.  

\section{Conclusion}
In this work, we propose SySRs, a hyperparameter-free best-arm identification algorithm that exploits model similarity. We provide theoretical guarantees for SySRs that get stronger as the similarity among evaluated models increases. SySRs demonstrates strong empirical performance in both an average and a worst-case sense. Across a suite of 15 standard LLM benchmarks, SySRs reliably identifies the best model by evaluating at most $35\%$ of model/query pairs, greatly reducing the number of queries required for model selection. Beyond this, we believe that the structure of SySRs makes it well-suited for LLM inference: While UCB algorithms often choose to evaluate a different model at every step, SySRs evaluates each remaining model on a larger number of queries in each phase. This enables straightforward batching and can reduce overhead from reloading model weights when switching models. 

\section*{Acknowledgements}
Florian Dorner is grateful for financial support from the Max Planck ETH Center for Learning Systems (CLS). Zifan Lyu and Tobias Wegel were supported by SNSF Grant 204439. 

\ifthenelse{\boolean{icml}}{
\section*{Impact Statement}
This paper presents work whose goal is to advance the field of machine learning. There are many potential societal consequences of our work, none of which we feel must be specifically highlighted here.
}{}
 
\bibliography{example_paper}
\bibliographystyle{icml2026}
\onecolumn

\appendix
\ifthenelse{\boolean{incappendix}}
{\tableofcontents
\addtocontents{toc}{\protect\setcounter{tocdepth}{2}}

\begin{table}[t]
\centering
\caption{Notation. }
\label{tab:example}
\begin{tabular}{cl}
\toprule
Symbol & Meaning / Definition \\
\midrule
$\mathcal{F}=\crl{f_1,\ldots,f_K}$ & Models / arms \\
$\cX=\crl{x_1,\ldots,x_L}$ & Tasks \\
$n$ & Total budget \\
$i$ & Indexing rows / models / arms \\
$(i)$ & Performance-sorted indexing rows / models / arms \\
$j$ & Indexing columns / tasks \\
$S_{ij}$ & Score of model $i$ on task $j$ (with values in $[0,1]$) \\
$\phi_{ij}$ & Distribution of $S_{ij}$ \\
$\S=(S_{ij})\in [0,1]^{K\times L}$ & Score matrix  \\
$t\in[n]$ & Time index \\
$i_t$ & Arm pulled by algorithm at time $t$ \\
$j_t$ & index of test instance at time $t$  \\ 
$\mu_i$ & Mean of $i$-th row \\
$\istar$ & Index of best arm \\

$\mu^\star$ & $\mu^\star = \mu_{\istar} = \max_{i} \mu_i$\\
$R_i$ & Random variable $\sim \Uniform(\S_{iJ})$\\
$\sr=(R_1,\ldots,R_K)$ & Joint reward vector \\
$\IhatSc$ & Output of Synchronized Successive Rejects \\
$\Var$ & Variance \\
$V_i$ & Variance of score $\Var_{J\sim \text{Unif}(1,\cdots, L)}(S_{(i) J})$\\
$V_{\star,i}$ & Variance of score difference $\Var_{J\sim \text{Unif}(1,\cdots, L)}(S_{\istar J} - S_{(i)J})$\\
$\text{Cov}_{\star,i}$ & Covariance of scores $\mathrm{Cov}_{J\sim \text{Unif}(1,\cdots, L)} (S_{\istar J}, S_{(i)J})$\\
$\Delta_i$ & Sub-optimality gap, $\Delta_i = \mu^\star - \mu_{(i)}$ (with $\Delta_1 = \Delta_2$) \\
$e_n$ &  Theoretical best-arm identification error at budget $n$\\ 
$\hat{e}_n$ & Estimated best-arm identification error at budget $n$  \\ 
$\hat{\epsilon}_b$ & Estimated best-arm identification error at budget percentage $b = \frac{n}{K\times L}$  \\ 
\bottomrule
\end{tabular}
\end{table}

\counterwithin{figure}{section}
\counterwithin{table}{section}
\counterwithin{algorithm}{section}

\crefalias{section}{appendix}
\crefalias{subsection}{appendix}
\crefalias{subsubsection}{appendix}
\crefname{appendix}{appendix}{appendices}
\Crefname{appendix}{Appendix}{Appendices}

\section{Detailed Analysis of the New Hardness Measure $H_3$}\label{sec:h3details}
In this section, we characterize the \emph{regime of improvement}, specifically determining the minimum level of correlation required for our correlation-aware complexity $H_3$ to be strictly smaller than the standard $H_2$, and we state the formal version of the lower bound in \cref{thm:lower-bound-informal}.

\paragraph{Regime of Improvement.} Recall that both hardness measures are defined as maxima over the suboptimal arms:
\[
H_3 \;=\; \max_{i\in\{2,\dots,K\}} i  H_{3,i}
\quad \text{and} \quad
H_2 \;=\; \max_{i\in\{2,\dots,K\}} i H_{2,i}
\]
where we set $ H_{2,i} \coloneqq \Delta_i^{-2} $. While $H_2$ is clearly unaffected by any correlation between the arms, expanding the variance in \cref{eq:H3k} yields
\[
\begin{aligned}
H_{3,i}
&=
\frac{2 V_1 + 2 V_i +\frac{2}{3}(1+\Delta_i)\Delta_i -4\,\text{Cov}_{\star,i}}{\Delta_i^2}
\end{aligned}
\]
where $V_i =  \Var_{J\sim \text{Unif}([L])}(S_{(i) J})$ and $\Cov_{\star, i}=\Cov_{J\sim \text{Unif}([L])}(S_{i^\star J},S_{(i) J})$. Thus, $H_{3,i}$
strictly decreases with covariance for fixed marginal score distributions. 
We again express the (positive) covariance between the optimal arm $\istar$ and a suboptimal arm $i$ as a fraction of the maximal possible covariance for the mean arm scores:
\[
 \theta_i\;=\;   \frac{\Cov_{\star,i}}{\Cov_{\max}(\mu^{\star}, \Delta_i) }
\]
where we used the shorthand notation $\mathrm{Cov}_{\max}(\mu^{\star}, \Delta_i)  = (\mu^{\star}-\Delta_i)(1-\mu^{\star})$ for the maximal achievable covariance between binary score distributions with means $\mu^\star$ and $\mu^\star-\Delta_i$, and we assume that $\theta_i\in [0,1]$. Since  $H_{3,i}$ decreases in $\theta_i$ for fixed $V_1$ and $V_i$, there exists a critical value $\theta_i^\star$ such that $H_{3,i} < H_{2,i}$ if and only if $\theta_i > \theta_i^\star$.
The following proposition makes this statement rigorous.

\begin{proposition}[Correlation Threshold for Improvement]
\label{thm:theta-star}
For any arm $i \neq 1$, whenever the correlation coefficient $\theta_i$ exceeds the threshold \[
\theta^\star(\mu^{\star},\Delta_i) \;:=\;
\frac{
-\frac{1}{3}\Delta_i^2 + \Delta_i\mu^{\star} - \frac{1}{3}\Delta_i - (\mu ^\star)^2 + \mu^{\star} - \frac{1}{4}
}{
(\mu^{\star}-\Delta_i)(1-\mu^{\star})
},
\]
the correlation-aware hardness $H_{3,i}$ is strictly smaller than the standard hardness $H_{2,i}$.
\end{proposition}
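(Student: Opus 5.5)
The plan is to reduce the comparison $H_{3,i}<H_{2,i}$ to a single scalar inequality, bound the unknown variances by their worst case given the means, and solve the resulting inequality, which is linear in $\theta_i$. Since both quantities share the denominator $\Delta_i^2>0$, we have $H_{3,i}<H_{2,i}$ if and only if
\[
2V_{\star,i}+\tfrac{2}{3}(1+\Delta_i)\Delta_i<1 .
\]
Substituting the decomposition $V_{\star,i}=V_1+V_i-2\Cov_{\star,i}$ from \cref{eq:variance}, and then $\Cov_{\star,i}=\theta_i(\mu^\star-\Delta_i)(1-\mu^\star)$, turns the left-hand side into an expression that is affine and decreasing in $\theta_i$.

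The marginal variances $V_1,V_i$ are not fixed by the means, so I will replace them by upper bounds. For any random variable $X\in[0,1]$ with mean $m$, we have $X^2\le X$, hence $\Var(X)\le m-m^2=m(1-m)$. This gives
\[
V_1\le \mu^\star(1-\mu^\star),\qquad V_i\le(\mu^\star-\Delta_i)(1-\mu^\star+\Delta_i),
\]
and both bounds hold with equality in the binary case. Both variances enter $V_{\star,i}$ with a positive sign, so it suffices to show
\[
2\mu^\star(1-\mu^\star)+2(\mu^\star-\Delta_i)(1-\mu^\star+\Delta_i)-4\theta_i(\mu^\star-\Delta_i)(1-\mu^\star)+\tfrac{2}{3}(\Delta_i+\Delta_i^2)<1 .
\]

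Assume $\mu^\star<1$ and $\mu^\star>\Delta_i$, so that $\Cov_{\max}(\mu^\star,\Delta_i)>0$. Dividing by $4(\mu^\star-\Delta_i)(1-\mu^\star)$ and isolating $\theta_i$ gives the condition
\[
\theta_i>\frac{\tfrac12\mu^\star(1-\mu^\star)+\tfrac12(\mu^\star-\Delta_i)(1-\mu^\star+\Delta_i)+\tfrac16(\Delta_i+\Delta_i^2)-\tfrac14}{(\mu^\star-\Delta_i)(1-\mu^\star)} .
\]
The remaining step is to expand the numerator. Using $(\mu-\Delta)(1-\mu+\Delta)=\mu-\mu^2+2\mu\Delta-\Delta-\Delta^2$, the numerator collapses to
\[
-\tfrac13\Delta_i^2+\Delta_i\mu^\star-\tfrac13\Delta_i-(\mu^\star)^2+\mu^\star-\tfrac14,
\]
which is exactly the numerator of $\theta^\star(\mu^\star,\Delta_i)$. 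Hence $\theta_i>\theta^\star$ implies the chain of inequalities above, and therefore $H_{3,i}<H_{2,i}$.

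The main obstacle is the bookkeeping rather than any conceptual step. One point is getting the direction of the variance bounds right: we only need the result to be sufficient, so upper-bounding $V_1$ and $V_i$ is legitimate. The other point is the degenerate cases where $\Cov_{\max}=0$, namely $\mu^\star=1$ or $\mu^\star=\Delta_i$. In those cases $\theta_i$ is undefined and the claim should be read as excluding them, or handled directly: for example, when $\mu^\star=1$ the best arm's score is almost surely $1$, so $V_{\star,i}=V_i\le\Delta_i(1-\Delta_i)$, which can be checked by hand.
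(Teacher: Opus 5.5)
Your proposal is correct and follows essentially the same route as the paper: you reduce $H_{3,i}<H_{2,i}$ to $2V_{\star,i}+\tfrac{2}{3}(1+\Delta_i)\Delta_i<1$, expand $V_{\star,i}=V_1+V_i-2\Cov_{\star,i}$ with $\Cov_{\star,i}=\theta_i(\mu^\star-\Delta_i)(1-\mu^\star)$, bound the marginal variances by $\mu^\star(1-\mu^\star)$ and $(\mu^\star-\Delta_i)(1-\mu^\star+\Delta_i)$, and identify the resulting condition, which is affine in $\theta_i$, with $\theta_i>\theta^\star$ (the paper runs the same algebra forward from $\theta_i>\theta^\star$). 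Your justification of the variance bound via $X^2\le X$ and your explicit assumption $(\mu^\star-\Delta_i)(1-\mu^\star)>0$ spell out what the paper uses implicitly when it multiplies through by the denominator.
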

The proof is detailed in \Cref{sec:proofs}. As a consequence of Proposition \ref{thm:theta-star}, if every suboptimal arm satisfies the threshold condition, the correlation-aware hardness $H_{3}$ is smaller than the standard hardness $H_{2}$. Moreover, we can derive \cref{cor:sufficient_conditions}.

The inequality $H_3 < H_2$ implies a strictly tighter upper bound on the probability of error. Since the error rate scales as $\exp(-n/H)$, even a constant factor reduction in the hardness parameter $H$ results in an exponential decay of the error probability for a fixed budget $n$. Conversely, this means the algorithm requires significantly fewer samples to 
identify the best arm with high probability, when the correlation between models is sufficiently high (i.e. $\theta_i > \theta^\star$).

\paragraph{A Lower Bound.}
To further justify the hardness measure $H_3$, we now show the formal version of \cref{thm:lower-bound-informal}. In the following result, we demonstrate that in a two-arm problem with correlated arms, in order to identify the better arm from $n$ i.i.d.\ paired score samples, any algorithm has to make an error that scales exactly as suggested by \cref{thm:csr_bound}.
Specifically, denote the function $H_3(\Delta,V) = (2V +\tfrac{2}{3}(1+\Delta)\Delta)/\Delta^2$. Then we can show the following.
\begin{theorem}\label{thm:lower-bound}
    For any $\Delta\in(0,1/2)$ and $s\in[\Delta,1/2]$, there exist distributions $P$ and $Q$ on $\crl{0,1}^2$ so that:
    First, $\EE_{(X,Y)\sim P}\brk{X-Y} = \Delta$, that is, under $P$, the best model is $\istar = 1$ and the gap to the second-best model is $\Delta$, and
        $\EE_{(X,Y)\sim Q}\brk{X-Y} = -\Delta$, that is, under $Q$, the best model is $\istar = 2$ and the gap to the second-best model is $\Delta$.
    Second, under both $P$ and $Q$, the variance is $V=s-\Delta^2$, that is $\Var_{(X,Y)\sim P}\brk{X-Y}=\Var_{(X,Y)\sim Q}\brk{X-Y}=V$. 
    And third, for any $n\in\NN$ and any algorithm $\Ihat:(\crl{0,1}^2)^n\to \crl{1,2}$ that, based on $n$ i.i.d.\ samples $\cD$ from $P$ or $Q$, aims to identify the better model as $\Ihat(\cD)\in \crl{1,2}$, incurs identification error at least $\frac{1}{4}\exp\prn{-11\frac{n}{H_3(\Delta,V)}}$ on at least one of $P$ or $Q$. That is, for all $n\in \NN$,
    \begin{equation*}
        \inf_{\Ihat:(\crl{0,1}^2)^n\to \crl{1,2}}\max\crl{\PP_{\cD\sim P^n}\prn{\Ihat(\cD)\neq 1},\PP_{\cD\sim Q^n}\prn{\Ihat(\cD)\neq 2}} \geq \frac{1}{4}\exp\prn{-11\frac{n}{H_3(\Delta,V)}}.
    \end{equation*}
\end{theorem}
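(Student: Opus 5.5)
The plan is to build a symmetric two-point testing problem in which $X-Y$ carries all the information. Then lower bound the minimax error through the Bhattacharyya affinity rather than through KL divergence. KL fails here because at the endpoint $s=\Delta$ the natural $P$ and $Q$ are mutually singular, yet the claimed bound is still nonzero.

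\emph{Construction.} Set $p=(s+\Delta)/2$ and $q=(s-\Delta)/2$, so $p,q\ge 0$ since $s\ge\Delta$. Define $P$ on $\{0,1\}^2$ by
\[
P(1,0)=p,\qquad P(0,1)=q,\qquad P(0,0)=1-s,
\]
which is valid since $s\le 1/2$. Let $Q$ be obtained by swapping coordinates, so $Q(1,0)=q$, $Q(0,1)=p$, $Q(0,0)=1-s$. Then
\[
\EE_P[X-Y]=p-q=\Delta,\qquad \EE_Q[X-Y]=-\Delta,\qquad \EE[(X-Y)^2]=p+q=s
\]
under both, so $\Var(X-Y)=s-\Delta^2=V$. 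This settles the first two requirements. The mass $1-s$ is placed on the same point $(0,0)$ under both distributions so that $P$ and $Q$ always overlap.

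\emph{Testing lower bound.} For any $\Ihat$, the maximum of the two error probabilities is at least their average, which is at least $\tfrac12\sum_{\cD}\min(P^n(\cD),Q^n(\cD))$. By the standard inequality $\sum\min(P',Q')\ge\tfrac12\BC(P',Q')^2$, which follows from Cauchy--Schwarz, and by tensorization $\BC(P^n,Q^n)=\BC(P,Q)^n$, this gives
\[
\max\{\cdot\}\ \ge\ \tfrac14\,\BC(P,Q)^{2n}.
\]
A direct computation gives
\[
\BC(P,Q)=1-s+2\sqrt{pq}=1-s+\sqrt{s^2-\Delta^2}.
\]

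\emph{Comparing with $H_3$.} This is the only real work, and I expect it to be a short chain of elementary inequalities. First,
\[
1-\BC=s-\sqrt{s^2-\Delta^2}=\frac{\Delta^2}{s+\sqrt{s^2-\Delta^2}}\le \frac{\Delta^2}{s}.
\]
Also $\BC\ge 1-s\ge 1/2$, and $-\log x\le 2(1-x)$ on $[1/2,1]$, so $-2\log\BC\le 4\Delta^2/s$. It then suffices to show $4\Delta^2/s\le 11/H_3(\Delta,V)$, which is equivalent to
\[
4\Big(2V+\tfrac23\Delta+\tfrac23\Delta^2\Big)\le 11s.
\]
Substituting $V=s-\Delta^2$, the left side equals $8s-\tfrac{16}{3}\Delta^2+\tfrac83\Delta\le 8s+\tfrac83 s=\tfrac{32}{3}s<11s$, using $\Delta\le s$. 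Hence $\BC^{2n}\ge\exp(-11n/H_3(\Delta,V))$, which yields the claimed bound $\tfrac14\exp(-11n/H_3)$.

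The main obstacle is choosing the right divergence. The Bretagnolle--Huber/KL route gives $\mathrm{KL}(P,Q)=\Delta\log\frac{s+\Delta}{s-\Delta}$, which blows up as $s\downarrow\Delta$. The affinity route, combined with the shared atom at $(0,0)$, avoids this and delivers the constant $11$ with room to spare.
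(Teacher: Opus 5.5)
Your proof is correct and follows essentially the same route as the paper. You use the identical three-atom construction with the shared atom $(0,0)$, and you arrive at the same bound $\tfrac14\BC(P,Q)^{2n}$ on the minimax error, though you get it from $\sum\min(P^n,Q^n)\ge\tfrac12\BC^{2n}$ via Cauchy--Schwarz rather than via Neyman--Pearson and the Hellinger bound on $\TV$. You also apply the same inequality $-\log x\le 2(1-x)$ on $[1/2,1]$, and your cruder step $1-\BC\le\Delta^2/s$ lands on the same constant $32/3<11$.

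One wording slip in your motivation: at $s=\Delta$ the two distributions are not mutually singular, since they share mass $1-\Delta$ on $(0,0)$, which is exactly why $\BC>0$. They are only not mutually absolutely continuous, and that is what makes the KL divergence infinite.
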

Note that the gap $\Delta$ and variance $V$ here do \emph{not} depend on the sample size $n\in\NN$. The proof is in \Cref{proof:lower-bound}.

The key aspect of \cref{thm:lower-bound} is that the difficulty of the problem is determined not only by the gap $\Delta$, but also by the variance $V = \Var\brk{X-Y}$, which captures the correlation structure between the two arms. Unlike classical bounds that depend only on marginal variances, here the hardness is expressed in terms of the variability of the difference $X-Y$. This means that if the two arms are strongly positively correlated, then $X-Y$ has low variance and the problem becomes easier, whereas weak or negative correlation increases $V$ and thus the difficulty. The theorem shows that this is unavoidable and any algorithm using IID samples must incur an error that scales with $H_3(\Delta,V)$. Exploiting correlation is not just beneficial but information-theoretically necessary for optimal performance, and matches \cref{thm:csr_bound}. 

Extending this bound to multiple arms is much more challenging, as while we can simulate $n$ arm pulls from $n$ paired evaluations of all $K$ arms, this comes at an overhead of $K$ model evaluations per arm pull; we leave the $K$-arm case as interesting future work in the line of works such as \citet{kaufmann2016complexity,carpentier2016tight}.

\section{Deferred Proofs}
\label{sec:proofs}

In this section we provide the deferred proofs for \cref{thm:csr_bound,thm:lower-bound,thm:theta-star,cor:sufficient_conditions}.
To prove \cref{thm:csr_bound}, we first provide two key technical concentration bounds. 

\subsection{Two Concentration Bounds}

We use the following key lemma, a modified version of Bernstein's inequality \citep{boucheron2003concentration} that holds for sampling without replacement.
Note that the following bound is slightly sharper than Proposition 1.4 in \citet{bardenet2015concentration} in the denominator of the exponential.

For a finite population indexed by $\{1, \ldots, N\}$ and a sample size $1 \leq n \leq N$, we denote in the following section by
\begin{itemize}
  \item $\pi_w$: an \emph{injection} $\{1, \ldots, n\} \to \{1, \ldots, N\}$ drawn uniformly at random (sampling \emph{without} replacement);
  \item $\pi_r$: a function $\{1, \ldots, n\} \to \{1, \ldots, N\}$ whose values are i.i.d.\ uniform on $\{1, \ldots, N\}$ (sampling \emph{with} replacement).
\end{itemize} 

\begin{lemma}[Bernstein bound for bounded variables without replacement]
\label{lem:bernstein-bounded-wor}
Let $\mathcal{Z} = \{\phi_1, \ldots, \phi_N\}$ be a finite population of probability distributions such that for all $i$, $\phi_i$ takes values in $[-M, M]$, and such that these distributions verify
\[
  \frac{1}{N}\sum_{i=1}^N \mathbb{E}_{Z \sim \phi_i}[Z] \;=\; 0,
  \qquad
  \frac{1}{N}\sum_{i=1}^N \mathbb{E}_{Z \sim \phi_i}[Z^2] \;=\; \sigma^2.
\]
For any $1 \leq n \leq N$, let $(Z_t)_{t=1}^n$ be sampled by first drawing an injection $\pi_w$ uniformly at random, and then drawing $Z_t \sim \phi_{\pi_w(t)}$ independently for each $t$. We define the empirical mean $\widehat{\mu}_{Z,n} := \tfrac{1}{n}\sum_{t=1}^n Z_t$. For any $\varepsilon > 0$, it holds that
\begin{equation}
  \mathbb{P}\bigl(\widehat{\mu}_{Z,n} \geq \varepsilon\bigr)
  \;\leq\; \exp\!\left(-\frac{n\varepsilon^2}{2\sigma^2 + \tfrac{2}{3}M\varepsilon}\right). \label{eq:bernstein-stochastic}
\end{equation}
\end{lemma}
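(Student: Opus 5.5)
The plan is to reduce the without-replacement, two-layer sampling to i.i.d.\ sampling from a single mixture distribution, and then apply the standard Chernoff--Bernstein argument. For $\lambda>0$ and each population member $i$, define the moment generating function $m_i(\lambda) := \mathbb{E}_{Z\sim\phi_i}[e^{\lambda Z}]$. This is finite because $\phi_i$ is supported in $[-M,M]$.

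\textbf{Step 1 (conditioning on the indices).} Condition on the injection $\pi_w$. The $Z_t$ are then independent with $Z_t\sim\phi_{\pi_w(t)}$, so
\[
\mathbb{E}\bigl[e^{\lambda n\widehat{\mu}_{Z,n}}\,\big|\,\pi_w\bigr]=\prod_{t=1}^n m_{\pi_w(t)}(\lambda)=\exp\Bigl(\sum_{t=1}^n a_{\pi_w(t)}\Bigr),\qquad a_i:=\log m_i(\lambda).
\]
This turns the stochastic layer into a deterministic population of real numbers $a_1,\dots,a_N$.

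\textbf{Step 2 (Hoeffding's comparison).} Next I invoke Hoeffding's classical result: for any continuous convex $f$ and any fixed population $a_1,\dots,a_N$,
\[
\mathbb{E}\, f\Bigl(\sum_{t=1}^n a_{\pi_w(t)}\Bigr)\le \mathbb{E}\, f\Bigl(\sum_{t=1}^n a_{\pi_r(t)}\Bigr).
\]
Applying it with $f=\exp$ gives
\[
\mathbb{E}\, e^{\lambda n\widehat{\mu}_{Z,n}}\le \Bigl(\tfrac1N\textstyle\sum_{i=1}^N m_i(\lambda)\Bigr)^n=\bigl(\mathbb{E}\,e^{\lambda Z'}\bigr)^n .
\]
Here $Z'$ is drawn by choosing $I$ uniformly in $\{1,\dots,N\}$ and then $Z'\sim\phi_I$. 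By the hypotheses of \cref{lem:bernstein-bounded-wor}, $Z'\in[-M,M]$, $\mathbb{E}Z'=0$ and $\mathbb{E}Z'^2=\sigma^2$. I expect this step to be the main obstacle. One must make sure the conditioning is legitimate: the randomness of $\phi$ enters only through the $a_i$. One must also confirm that Hoeffding's theorem applies verbatim to these $a_i$, which are real-valued, so it does.

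\textbf{Step 3 (single-variable Bernstein and Chernoff).} For a centered variable with $|Z'|\le M$, I use the standard expansion $e^{x}\le 1+x+\sum_{k\ge2}x^k/k!$ together with $\mathbb{E}|Z'|^k\le \sigma^2M^{k-2}$ and $k!\ge 2\cdot 3^{k-2}$. For $0<\lambda<3/M$ this gives
\[
\mathbb{E}e^{\lambda Z'}\le 1+\frac{\lambda^2\sigma^2}{2(1-\lambda M/3)}\le \exp\Bigl(\frac{\lambda^2\sigma^2}{2(1-\lambda M/3)}\Bigr).
\]
Markov's inequality then yields
\[
\mathbb{P}(\widehat{\mu}_{Z,n}\ge\varepsilon)\le \exp\Bigl(-n\Bigl[\lambda\varepsilon-\frac{\lambda^2\sigma^2}{2(1-\lambda M/3)}\Bigr]\Bigr).
\]
Finally, I choose $\lambda=\varepsilon/(\sigma^2+M\varepsilon/3)$, which lies in $(0,3/M)$. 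With this choice the bracket equals exactly $\varepsilon^2/(2\sigma^2+\tfrac23 M\varepsilon)$, giving \cref{eq:bernstein-stochastic}.
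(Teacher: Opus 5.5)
Your proposal is correct and follows essentially the same route as the paper. The paper likewise conditions on $\pi_w$ and applies Hoeffding's convex-comparison reduction with $f=\exp$ to the log-MGFs $\log m_i(\lambda)$, which passes to i.i.d.\ draws from the mixture. It bounds the mixture MGF via $\mathbb{E}|Z'|^k\le\sigma^2M^{k-2}$ and $k!\ge 2\cdot 3^{k-2}$, and concludes with a Chernoff step at exactly your choice $\lambda=\varepsilon/(\sigma^2+M\varepsilon/3)$; it only orders the MGF bound and the reduction differently and phrases the Markov step through an auxiliary parameter $s$.

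The one loose end, which the paper shares, is the degenerate case $\sigma^2=0$. There your $\lambda$ equals $3/M$ rather than lying in $(0,3/M)$, but every $\phi_i$ is then the point mass at $0$, so the probability is trivially zero.
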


\begin{proof}[Proof of Lemma~\ref{lem:bernstein-bounded-wor}]The proof follows by combining a standard moment generating function bound, e.g., from \citet{boucheron2003concentration}, with Hoeffding's reduction; see \citet[Theorem~4]{hoeffding1963probability} or \citet[Lemma~1.1]{bardenet2015concentration}.

To develop the analysis, we begin by introducing a substitute $(\widetilde{Z}_t)_{t=1}^n$ of $(Z_t)_{t=1}^n$, sampled i.i.d.\ uniformly \emph{with} replacement from the uniform mixture distribution $\bar{\phi} = \tfrac{1}{N}\sum_{i=1}^N \phi_i$, i.e. $\widetilde{Z}_t \sim \phi_{\pi_r(t)}$ conditionally on $\pi_r$ . We have $\widetilde{Z}_t \sim \bar{\phi}$, yielding $\mathbb{E}[\widetilde{Z}_t] = 0$, $|\widetilde{Z}_t| \leq M$, and $\mathbb{E}[\widetilde{Z}_t^{\,2}] = \sigma^2$. Therefore, for every $k \geq 2$, it holds $|\widetilde{Z}_t|^k \leq M^{k-2}\,\widetilde{Z}_t^{\,2}$, which implies that
\begin{equation}
\label{eq:mom-bound-stoch}
\mathbb{E}\!\left[\,|\widetilde{Z}_t|^k\,\right]
\;\leq\;
\sigma^2 M^{k-2}
\;\leq\;
\frac{\sigma^2}{2}\,k!\left(\frac{M}{3}\right)^{\!k-2},
\qquad \forall k \geq 2.
\end{equation}
Now, fix $\lambda > 0$ small enough such that $u := \lambda M / 3 \in (0, 1)$. Using $\mathbb{E}[\widetilde{Z}_t] = 0$ and inequality~\eqref{eq:mom-bound-stoch}, we get that
\begin{align*}
\mathbb{E}[e^{\lambda \widetilde{Z}_t}]
&=
1 + \sum_{k \geq 2} \frac{\lambda^k}{k!}\,\mathbb{E}[\widetilde{Z}_t^{\,k}]
\;\leq\;
1 + \sum_{k \geq 2} \frac{\lambda^k}{k!}\,\mathbb{E}[|\widetilde{Z}_t|^k]
\;\leq\;
1 + \sum_{k \geq 2} \lambda^k \frac{\sigma^2}{2}\left(\frac{M}{3}\right)^{\!k-2} \\
&=
1 + \frac{\lambda^2 \sigma^2}{2}\sum_{k \geq 2}\!\left(\frac{\lambda M}{3}\right)^{\!k-2}
\;=\;
1 + \frac{\lambda^2 \sigma^2}{2}\cdot \frac{1}{1-u}
\;\leq\;
\exp\!\left(\frac{\lambda^2 \sigma^2}{2(1-u)}\right).
\end{align*}
And by independence, we can use this bound to obtain
\begin{equation}
\label{eq:mgf-wr-stoch}
\mathbb{E}\!\left[\exp\!\Bigl(\lambda \sum_{t=1}^n \widetilde{Z}_t\Bigr)\right]
\;\leq\;
\exp\!\left(\frac{n\sigma^2 \lambda^2}{2(1-u)}\right).
\end{equation}

We now come back to the sample without replacement employing Hoeffding's reduction~\citep{hoeffding1963probability}. Defining $\varphi_i(\lambda) := \log \mathbb{E}_{Z \sim \phi_i}\!\left[e^{\lambda Z}\right]$, conditional independence of $(Z_t)_{t=1}^n$ given $\pi_w$ yields
\begin{align*}
  \mathbb{E}\!\left[\exp\!\Bigl(\lambda \sum_{t=1}^n Z_t\Bigr)\right]
  &\;=\;
  \mathbb{E}_{\pi_w}\!\!\left[\mathbb{E}\!\left[\exp\!\Bigl(\lambda \sum_{t=1}^n Z_t\Bigr)\,\Big|\, \pi_w\right]\right]
  \;=\;
  \mathbb{E}_{\pi_w}\!\!\left[\prod_{t=1}^n \mathbb{E}_{Z \sim \phi_{\pi_w(t)}}\!\left[e^{\lambda Z}\right]\right] \\
  &\;=\;
  \mathbb{E}_{\pi_w}\!\!\left[\exp\!\Bigl(\sum_{t=1}^n \varphi_{\pi_w(t)}(\lambda)\Bigr)\right],
\end{align*}
and likewise $\mathbb{E}\!\left[\exp\!\bigl(\lambda \sum_t \widetilde{Z}_t\bigr)\right] = \mathbb{E}_{\pi_r}\!\!\left[\exp\!\bigl(\sum_t \varphi_{\pi_r(t)}(\lambda)\bigr)\right]$. Since $x \mapsto e^x$ is convex, sampling without replacement is dominated by sampling with replacement (Lemma~1.1 in \citet{bardenet2015concentration}) applied to the finite population $\{\varphi_i(\lambda)\}_{i=1}^N$, meaning that
\[
  \mathbb{E}\!\left[\exp\!\Bigl(\lambda \sum_{t=1}^n Z_t\Bigr)\right]
  \;=\;
  \mathbb{E}_{\pi_w}\!\!\left[\exp\!\Bigl(\sum_{t=1}^n \varphi_{\pi_w(t)}(\lambda)\Bigr)\right]
  \;\leq\;
  \mathbb{E}_{\pi_r}\!\!\left[\exp\!\Bigl(\sum_{t=1}^n \varphi_{\pi_r(t)}(\lambda)\Bigr)\right]
  \;=\;
  \mathbb{E}\!\left[\exp\!\Bigl(\lambda \sum_{t=1}^n \widetilde{Z}_t\Bigr)\right]
  \;\leq\;
  \exp\!\left(\frac{n\sigma^2 \lambda^2}{2(1-u)}\right),
\]
where the last inequality is~\eqref{eq:mgf-wr-stoch}.

Hence for any $s \geq 0$, using Markov's inequality with $S_n = \sum_{t=1}^n Z_t$ yields
\begin{align*}
\mathbb{P}\!\left(
S_n \geq \frac{n\sigma^2 \lambda}{2(1-u)} + \frac{s^2}{\lambda}
\right)
&=
\mathbb{P}\!\left(e^{\lambda S_n} \geq \exp\!\Bigl(\frac{n\sigma^2 \lambda^2}{2(1-u)} + s^2\Bigr)\right)
\;\leq\;
e^{-s^2}.
\end{align*}
To conclude, fix $\varepsilon > 0$ and choose $u := \frac{M\varepsilon}{3\sigma^2 + M\varepsilon}$, $\lambda := \frac{\varepsilon(1-u)}{\sigma^2}$, and $s := \frac{n\varepsilon\sqrt{1-u}}{\sqrt{2n\sigma^2}}$.
Then $u = \lambda M / 3 \in (0, 1)$ and $\frac{n\sigma^2 \lambda}{2(1-u)} + \frac{s^2}{\lambda} = n\varepsilon$, as well as $s^2 = \frac{(n\varepsilon)^2}{2n\sigma^2 + \tfrac{2}{3}Mn\varepsilon}$.
Therefore, we obtain that
\[
\mathbb{P}(S_n \geq n\varepsilon)
\;\leq\;
\exp\!\left(-\frac{n\varepsilon^2}{2\sigma^2 + \tfrac{2}{3}M\varepsilon}\right).
\] which concludes the proof.
\end{proof}

Next, we can apply Lemma \ref{lem:bernstein-bounded-wor} to derive the following bound for paired differences.

\begin{lemma}[Bernstein bound for paired differences]
\label{lem:bernstein-paired}
Let $\mathcal{D} = \crl{(\phi_{X 1}, \phi_{Y 1}), \ldots, (\phi_{X N}, \phi_{Y N})}$ be a finite population of paired distributions, where all distributions have support in $[0, 1]$, and 
\[
  \mu_X := \frac{1}{N}\sum_{i=1}^N \mathbb{E}_{X \sim \phi_{X i}}[X],
  \qquad
  \mu_Y := \frac{1}{N}\sum_{i=1}^N \mathbb{E}_{Y \sim \phi_{Y i}}[Y].
\]
For any $1 \leq n \leq N$, let $(X_t, Y_t)_{t=1}^n$ be sampled by first drawing an injection $\pi_w$ uniformly at random, and then drawing $(X_t, Y_t) \sim (\phi_{X  \pi_w(t)}, \phi_{Y  \pi_w(t)})$ independently for each $t$, and let the empirical means be
\[
  \widehat{\mu}_{X,n} := \frac{1}{n}\sum_{t=1}^n X_t,
  \qquad
  \widehat{\mu}_{Y,n} := \frac{1}{n}\sum_{t=1}^n Y_t.
\]
Then, if $\Delta := \mu_X - \mu_Y \geq 0$ and $ \sigma^2 = \frac{1}{N}\sum_{i=1}^N \mathbb{E}_{(X,Y) \sim (\phi_{X i}, \phi_{Y i})}\!\left[\bigl((Y - X) + \Delta\bigr)^{\!2}\right]$ 
it holds that
\begin{equation}
\label{eq:bernstein-pairs}
\mathbb{P}\left(\widehat{\mu}_{Y,n} \geq \widehat{\mu}_{X,n}\right)
\;\leq\;
\exp\left(
-\frac{n\Delta^2}{2\sigma^2 + \tfrac{2}{3}(1 + \Delta)\Delta}
\right)
.
\end{equation}
\end{lemma}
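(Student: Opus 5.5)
The plan is to reduce Lemma~\ref{lem:bernstein-paired} to Lemma~\ref{lem:bernstein-bounded-wor}, applied to the centered paired differences. For each population index $i \in \{1,\dots,N\}$, let $\psi_i$ be the distribution of $Z := (Y - X) + \Delta$ when $(X,Y) \sim (\phi_{X i}, \phi_{Y i})$. The sampling scheme then corresponds exactly to that of Lemma~\ref{lem:bernstein-bounded-wor}. Draw an injection $\pi_w$, and conditionally on it draw $Z_t \sim \psi_{\pi_w(t)}$ independently, taking $Z_t = Y_t - X_t + \Delta$. Since the pairs $(X_t,Y_t)$ are drawn independently across $t$ given $\pi_w$, the $Z_t$ are conditionally independent as the lemma requires.

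Next I verify the hypotheses of Lemma~\ref{lem:bernstein-bounded-wor}. The population mean is $\frac{1}{N}\sum_i \EE_{\psi_i}[Z] = \mu_Y - \mu_X + \Delta = 0$. The population second moment is, by definition, exactly the $\sigma^2$ given in the statement. For the range: $Y - X \in [-1,1]$, so $Z \in [-1+\Delta, 1+\Delta]$. Because $\Delta \geq 0$ and $\Delta \leq 1$ (means lie in $[0,1]$), we get $|Z| \leq 1 + \Delta =: M$.

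Finally, I translate the event. We have $\widehat{\mu}_{Y,n} \geq \widehat{\mu}_{X,n}$ if and only if $\frac1n\sum_t (Y_t - X_t) \geq 0$, which holds if and only if $\widehat{\mu}_{Z,n} \geq \Delta$. If $\Delta > 0$, apply \eqref{eq:bernstein-stochastic} with $\varepsilon = \Delta$ and $M = 1+\Delta$. This gives
\[
\exp\!\left(-\frac{n\Delta^2}{2\sigma^2 + \tfrac{2}{3}(1+\Delta)\Delta}\right),
\]
which is \eqref{eq:bernstein-pairs}. If $\Delta = 0$, the right-hand side equals $1$ and the bound is trivial. The one degenerate case, $\sigma^2 = 0$ together with $\Delta = 0$, gives $0/0$ and should be read as the trivial bound.

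The only step needing any care is the choice of $M$. The bound $[-M,M]$ must be symmetric, and the shift by $\Delta$ makes the natural interval asymmetric, so it must be enlarged to $M = 1+\Delta$. This enlargement is exactly what produces the $(1+\Delta)\Delta$ term in the denominator. Everything else is direct substitution.
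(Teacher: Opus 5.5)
Your proposal is correct and follows the paper's proof essentially step for step. Like the paper, you use the shifted differences $Z_t = Y_t - X_t + \Delta$, verify zero mean, second moment $\sigma^2$ and the range bound $M = 1+\Delta$, and apply Lemma~\ref{lem:bernstein-bounded-wor} with $\varepsilon = \Delta$. Your separate treatment of $\Delta = 0$ is also a small improvement, since Lemma~\ref{lem:bernstein-bounded-wor} requires $\varepsilon > 0$ and the paper applies it without comment.
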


\begin{proof}[Proof of Lemma~\ref{lem:bernstein-paired}]
We define the paired differences $Z_t := Y_t - X_t + \Delta$ and the distributions $\phi_i$ as the law of $Y - X + \Delta$ when $(X, Y) \sim (\phi_{X i}, \phi_{Y i})$. Their population mean is
\[
\frac{1}{N}\sum_{i=1}^N \mathbb{E}_{Z \sim \phi_i}[Z]
=
\mu_Y - \mu_X + \Delta
=
0,
\]
and, since $X, Y \in [0, 1]$ and $\Delta \in [0, 1]$, the $Z_t$ are bounded by

\[
|Z_t| \leq 1 + \Delta =: M.
\]
Moreover,
\[
\frac{1}{N}\sum_{i=1}^N \mathbb{E}_{Z \sim \phi_i}[Z^2]
=
\sigma^2.
\]
Finally,
\[
\widehat{\mu}_{Y,n} \geq \widehat{\mu}_{X,n}
\iff
\sum_{t=1}^n (Y_t - X_t) \geq 0
\iff
\sum_{t=1}^n Z_t \geq n\Delta
\iff
\widehat{\mu}_{Z,n} := \frac{1}{n}\sum_{t=1}^n Z_t \geq \Delta.
\]
We can therefore apply Lemma~\ref{lem:bernstein-bounded-wor} to $(Z_t)_{t=1}^n$
with parameters $M = 1 + \Delta$, variance proxy $\sigma^2$, and $\varepsilon = \Delta$.
This yields
\[
\mathbb{P}\!\left(\widehat{\mu}_{Y,n} \geq \widehat{\mu}_{X,n}\right)
=
\mathbb{P}\!\left(\widehat{\mu}_{Z,n} \geq \Delta\right)
\leq
\exp\!\left(
-\frac{n\Delta^2}{2\sigma^2 + \tfrac{2}{3}(1 + \Delta)\Delta}
\right),
\]
which is exactly~\eqref{eq:bernstein-pairs}.
\end{proof}

\subsection{Proof of \cref{thm:csr_bound}}
As in the standard analysis of Successive Rejects, we observe that at the start of phase $k$, exactly $k-1$ arms have been removed, and exactly $K - (k-1) = K - k + 1$ arms are still active. Therefore, at least one of the $k$ worst arms must still be in the active set.
Therefore, if the optimal arm $i^\star$ is dismissed at the end of phase $k$, it must be that its empirical mean was smaller than the empirical mean of one of these surviving suboptimal arms.
Formally, if at the end of phase $k$ the optimal arm is removed, there exists some $i \in \{K+1-k, \ldots, K\}$ such that:
\[
\widehat{\mu}_{i^\star,n_k} \;\leq\; \widehat{\mu}_{(i),n_k}.
\]
By a union bound over all phases and all potential suboptimal arms in those phases, the error probability $e_n$ is bounded by:
\begin{equation}
\label{eq:en-union}
e_n
\;\leq\;
\sum_{k=1}^{K-1}\ \sum_{i=K+1-k}^{K}
\mathbb{P}\!\left(\widehat{\mu}_{(i),n_k} \geq \widehat{\mu}_{i^\star,n_k}\right).
\end{equation}
Fix a phase $k$ and an arm $i \in \{K+1-k, \ldots, K\}$. We observe stochastic rewards $X_t \sim \phi_{i^\star  \pi_w(t)}$ and $Y_t \sim \phi_{(i) \pi_w(t)}$, where $\pi_w$ is the uniform injection drawn.
This allows us to apply Lemma~\ref{lem:bernstein-paired} with $\Delta = \Delta_i \geq 0$ and $\sigma^2 = V_{\star, i}$. Combined with substituting the arm-specific hardness measure we get
\[
\mathbb{P}\!\left(\widehat{\mu}_{(i),n_k} \geq \widehat{\mu}_{i^\star,n_k}\right)
\;\leq\;
\exp\!\left(
-\frac{n_k \Delta_i^2}{2\, V_{\star,i} + \tfrac{2}{3}(1 + \Delta_i)\Delta_i}
\right)
\;=\;
\exp\!\left(-\frac{n_k}{H_{3,i}}\right).
\]
Plugging this back into~\eqref{eq:en-union} gives:
\[
e_n
\;\leq\;
\sum_{k=1}^{K-1}\ \sum_{i=K+1-k}^{K}
\exp\!\left(-\frac{n_k}{H_{3,i}}\right).
\]
From the definitions of $n_k = \Big\lceil \frac{1}{\overline{\log}(K)}\frac{n-K}{K+1-k}\Big\rceil$ and $H_3 = \max_{i \in \{2, \ldots, K\}} i\, H_{3,i}$, and using $i \geq K + 1 - k$, we have:
\[
\frac{n_k}{H_{3,i}}
\;\geq\;
\frac{n-K}{\overline{\log}(K)(K+1-k)} \cdot \frac{i}{H_3}
\;\geq\;
\frac{n-K}{\overline{\log}(K)\, H_3}.
\]
Substituting this uniform lower bound into the sum yields
\[
e_n
\;\leq\;
\sum_{k=1}^{K-1}\ \sum_{i=K+1-k}^{K} \exp\!\left(-\frac{n-K}{\overline{\log}(K)\, H_3}\right)
\;\leq\;
\frac{K(K-1)}{2}\exp\!\left(-\frac{n-K}{\overline{\log}(K)\, H_3}\right),
\]
which concludes the proof of~\cref{thm:csr_bound}.

\subsection{Proof of Proposition \ref{thm:theta-star}}

Fix a suboptimal arm \(i\) and  \(\Delta_i := \mu^\star - \mu_i > 0\).
We consider the two random variables \(S_{i^\star J}\) and \(S_{(i) J}\) supported in \([0,1]\) induced by the score matrix $\mathbf{S}$ and indices $J$ sampled uniformly without replacement from ${1,\cdots,L}$.
We assume the covariance is parametrized by \(\theta_i \in [0,1]\):
\[
\text{Cov}_{\star,i}
\;=\; \theta_i\,(\mu^\star-\Delta_i)(1-\mu^\star)
\;=\; \theta_i\,\mathrm{Cov}_{\max}(\mu^\star, \Delta_i).
\]

Recall the definitions of the arm-specific hardness components:
\[
H_{2,i} = \frac{1}{\Delta_i^2}, \qquad
H_{3,i} = \frac{2V_{\star,i} + \frac{2}{3}(1+\Delta_i)\Delta_i}{\Delta_i^2},
\]

Assume that 
\[
\theta_i > \theta^\star(\mu^\star,\Delta_i)
\]
or equivalently:
\[
4\left(  - (\mu^\star)^2 - \frac{1}{3}\Delta_i + \mu^\star + \mu^\star\Delta_i - \frac{1}{3}\Delta_i^2 - \frac{1}{4} \right)
\;<\; 4\theta_i(\mu^\star-\Delta_i)(1-\mu^\star)
\]
by rearranging the terms we get
\[
4\left(  - (\mu^\star)^2 - \frac{1}{3}\Delta_i + \mu^\star + \mu^\star\Delta_i - \frac{1}{3}\Delta_i^2  \right)
- 4\theta_i(\mu^\star-\Delta_i)(1-\mu^\star)
\;<\; 1
\]
and thus
\[
2\Bigl( \mu^\star(1-\mu^\star) + (\mu^\star-\Delta_i)(1-\mu^\star+\Delta_i)
- 2\theta_i(\mu^\star-\Delta_i)(1-\mu^\star) \Bigr)
+\frac{2}{3}(1+\Delta_i)\Delta_i
\;<\; 1.
\]

Now, since \(S_{i^\star J},S_{(i) J}\in[0,1]^2\) with means $\mustar$ and $\mustar-\Delta_i$ respectively, we have the variance bounds
\[
\mathrm{V}_1 \le \mu^\star(1-\mu^\star),
\qquad
\mathrm{V}_i \le (\mu^\star-\Delta_i)(1-\mu^\star+\Delta_i),
\]

Therefore the previous inequality implies
\begin{equation}
\label{eq:h3-h2-condition}
2V_{\star,i} + \frac{2}{3}(1+\Delta_i)\Delta_i \;<\; 1.
\end{equation}
Finally, \eqref{eq:h3-h2-condition} is equivalent to \(H_{3,i} < H_{2,i}\) by definition, which yields
\[
\theta_i > \theta^\star(\mu^\star,\Delta_i) \quad \implies \quad H_{3,i} < H_{2,i}.
\]
Since \(H_2 = \max_{k} k H_{2,k}\) and \(H_3 = \max_{k} k H_{3,i}\), we conclude:
\[
\left( \forall i \neq 1, \quad \theta_i > \theta^\star(\mu^\star, \Delta_i) \right) \implies H_3 < H_2.
\]

\subsection{Proof of \cref{thm:lower-bound}}
\label{proof:lower-bound}

We begin by defining the distributions $P,Q$. For that, we only look at the points $\cZ=\crl{(0,0),(0,1),(1,0)}\subset \crl{0,1}^2$. We define the distributions $P,Q$ as
\begin{align*}
    P(\crl{(1,0)})&=\frac{s+\Delta}{2}, \quad P(\crl{(0,1)})=\frac{s-\Delta}{2}, \quad \text{and} \quad  P(\crl{(0,0)})=1-s,  \\
    Q(\crl{(1,0)})&=\frac{s-\Delta}{2}, \quad Q(\crl{(0,1)})=\frac{s+\Delta}{2}, \quad \text{and} \quad  Q(\crl{(0,0)})=1-s.
\end{align*}
A simple computation of the expectations and variances then shows the first two parts of the theorem.

To prove the lower bound, we use the fact that by the Neyman-Pearson lemma, the binary testing problem between two distributions has minimal average error, measured in total variation distance $\TV$,
\begin{equation*}
    \inf_{\Ihat:(\crl{0,1}^2)^n\to\crl{1,2}} \crl{\frac{1}{2}\PP_{\cD\sim P^n}\prn{\Ihat(\cD)\neq 1}+\frac{1}{2}\PP_{\cD\sim Q^n}\prn{\Ihat(\cD)\neq 2}} = \frac{1-\TV(P^n,Q^n)}{2}.
\end{equation*}
Hence, it suffices to upper bound the total variation distance. Define $\BC(P,Q)=\sum_{z\in\cZ} \sqrt{P(z)Q(z)}$ and $H^2(P,Q)= \sum_{z\in\cZ} (\sqrt{P(z)}-\sqrt{Q(z)})^2$. One can verify that
\begin{equation*}
    H^2(P,Q) = 2\bigl(1-\BC(P,Q)\bigr),
\end{equation*}
so for product distributions $H^2(P^n,Q^n) = 2(1-\BC(P,Q)^n)$.
Lemma~2.3 in \citep{tsybakov2003introduction} gives $\frac{H^2}{2} \leq \TV \leq H\sqrt{1-\frac{H^2}{4}}$.
Applying the upper bound to $P^n,Q^n$ and substituting the product formula yields
\begin{equation*}
    \TV(P^n,Q^n)
    \;\leq\;
    H(P^n,Q^n)\sqrt{1-\tfrac{H^2(P^n,Q^n)}{4}}
    \;=\; \sqrt{2(1-\BC^n)}\cdot\sqrt{\tfrac{1+\BC^n}{2}}
    \;=\; \sqrt{1-\BC(P,Q)^{2n}},
\end{equation*}
where $\BC = \BC(P,Q)$ for brevity. We can therefore lower bound
\begin{align*}
    \frac{1-\TV(P^n,Q^n)}{2}&\geq \frac{1-\sqrt{1-\BC(P,Q)^{2n}}}{2} \tag{Lemma 2.3 in \citet{tsybakov2003introduction} + product formula for $\BC$} \\
    &\geq \frac{1}{4} \BC(P,Q)^{2n} \tag{using $1-\sqrt{1-x}\geq x/2$ for $x\in[0,1]$}\\
    &=\frac{1}{4}\prn{2\sqrt{\frac{s+\Delta}{2}\cdot\frac{s-\Delta}{2}} + 1-s}^{2n} \\
    &=\frac{1}{4}\prn{(1-s)+\sqrt{s^2-\Delta^2}}^{2n} \\
    &=\frac{1}{4}\exp\prn{-2n \prn{-\log\prn{(1-s)+\sqrt{s^2-\Delta^2}}}}.
\end{align*}
{Notice that because $s\in[\Delta,1/2]$, we have that 
$
(1-s)+\sqrt{s^2-\Delta^2} \in[1/2,1].
$
}
We can further bound the last term in terms of $H_3$:
\begin{align*}
    -\log\prn{(1-s)+\sqrt{s^2-\Delta^2}} &\leq 2 \prn{1-\prn{(1-s)+\sqrt{s^2-\Delta^2}}} \tag{using $-\log(x)\leq {2(1-x)}$ for {$x\in[1/2,1]$}} \\
    &= {2}\prn{s-\sqrt{s^2-\Delta^2}} \\
    &= {2}\prn{\frac{\Delta^2}{s+\sqrt{s^2-\Delta^2}}} \\
    &\stackrel{(i)}{\leq} \frac{{32}}{3}\frac{\Delta^2}{4V+\frac{4}{3}(1+\Delta)\Delta} {\leq 11}\frac{1}{2 H_3(\Delta,V)}.
\end{align*}
Here $(i)$ holds because $\frac{3}{16}\prn{4V+\frac{4}{3}(1+\Delta)\Delta} = \frac{3}{4}s+\frac{1}{4}\Delta-\frac{1}{2}\Delta^2$
and therefore the inequality is equivalent to
\begin{equation*}
    \sqrt{s^2-\Delta^2} \geq -\frac{1}{4}s+\frac{1}{4}\Delta-\frac{1}{2}\Delta^2.
\end{equation*}
Because $s\geq \Delta$, the left-hand side is always non-negative, and the right-hand side always non-positive, so $(i)$ follows.
Plugging this into the bound gives a lower bound on the average error. Since $\max\{a,b\}\ge \frac{a+b}{2}$ for any $a,b\ge 0$, we have
\begin{align*}
    \max\crl{\PP_{\cD\sim P^n}\prn{\Ihat(\cD)\neq 1},\PP_{\cD\sim Q^n}\prn{\Ihat(\cD)\neq 2}}
    \;&\geq\;
    \frac{1}{2}\PP_{\cD\sim P^n}\prn{\Ihat(\cD)\neq 1}+\frac{1}{2}\PP_{\cD\sim Q^n}\prn{\Ihat(\cD)\neq 2}\\
    &\geq\;
    \frac{1}{4}\exp\prn{-{11}\frac{n}{H_3(\Delta,V)}},
\end{align*}
which concludes the proof of \cref{thm:lower-bound}.

\subsection{Proof of \cref{cor:sufficient_conditions}}

Recall from  Proposition~\ref{thm:theta-star} that the condition \(H_{3,i} < H_{2,i}\) is a consequence to \(\theta_i > \theta^\star(\mu^\star, \Delta_i)\), where:
\[
\theta^\star(\mu^\star, \Delta_i) \;=\; \frac{N(\mu^\star, \Delta_i)}{(\mu^\star-\Delta_i)(1-\mu^\star)},
\]
with numerator \(N(\mu^\star, \Delta_i) = -\frac{1}{3}\Delta_i^2 + \Delta_i\left(\mu^\star - \frac{1}{3}\right) - \left((\mu^\star)^2 - \mu^\star + \frac{1}{4}\right)\).

\paragraph{Case 1: $\theta$-Free Sufficient Condition ($\Delta_i < \frac{3}{2}\mu^\star - 1$).}
If \(\theta^\star(\mu^\star, \Delta_i) < 0\), the condition \(\theta_i > \theta^\star\) holds for any \(\theta_i \ge 0\). This occurs when the numerator \(N(\mu^\star, \Delta_i) = -\frac{1}{3}\Delta_i^2 + (\mu^\star - \frac{1}{3})\Delta_i - (\mu^\star - \frac{1}{2})^2\) is negative. As the numerator $N(\mu^\star, \Delta_i)$ is a downward-opening parabola in $\Delta_i$, there are either no roots and $N(\mu^\star, \Delta_i)$ is always negative, or all values of $\Delta_i$ smaller than the smaller of the roots yield $N(\mu^\star, \Delta_i)< 0$. There are thus two cases: In the first, there are no roots and we are done. In the second, solving \(N(\mu^\star, \Delta_i) = 0\) for \(\Delta_i\) yields the lower root determining the positive region boundary:
\[
\Delta_- \;=\; \frac{3}{2}\mu^\star - \frac{1}{2} - \frac{1}{2}\sqrt{1 - 3(1-\mu^\star)^2} \;\ge\; \frac{3}{2}\mu^\star - \frac{1}{2} - \frac{1}{2} \;=\; \frac{3}{2}\mu^\star - 1,
\]
where we use \(\sqrt{1 - 3(1-\mu^\star)^2} \le 1\) to derive a sufficient linear lower bound.
Thus, if \(\Delta_i < \frac{3}{2}\mu^\star - 1\), then \(\Delta_i < \Delta_-\), ensuring \(N(\mu^\star, \Delta_i) < 0\) and consequently \(H_{3,i} < H_{2,i}\). Geometrically, this linear bound corresponds exactly to the tangent of the optimal boundary curve \(\Delta_-(\mu^\star)\) at \(\mu^\star=1\) (where \(\theta^\star=0\)).

\paragraph{Case 2: $\theta$-Based Sufficient Condition (\texorpdfstring{$\theta_i>2\Delta_i$}{theta>2Delta}}).
We show that \(\theta_i > 2\Delta_i \implies H_{3,i} < H_{2,i}\).
Since \(\theta_i \in [0,1]\), this condition is only feasible when \(2\Delta_i < 1\), i.e., \(\Delta_i \in (0, 1/2)\).

Recall that hardness is reduced whenever \(\theta_i > \theta^\star(\mu^\star, \Delta_i)\). Thus, it suffices to prove the uniform bound:
\[
\theta^\star(\mu^\star, \Delta_i) \le 2\Delta_i, \qquad \forall\,\mu^\star \in (\Delta_i, 1).
\]
Recall that \(\theta^\star\) is defined as the ratio \(N(\mu^\star, \Delta_i) / D(\mu^\star, \Delta_i)\), where the denominator \(D(\mu^\star, \Delta_i) = (\mu^\star - \Delta_i)(1 - \mu^\star)\) is strictly positive on the domain.
Therefore, the inequality \(\theta^\star \le 2\Delta_i\) is equivalent to \(N \le 2\Delta_i D\). We define the difference polynomial \(P(\mu^\star)\) as:
\[
P(\mu^\star) \;:=\; N(\mu^\star, \Delta_i) - 2\Delta_i(\mu^\star - \Delta_i)(1 - \mu^\star).
\]
We proceed to show that \(P(\mu^\star) \le 0\) for all admissible \(\mu^\star\).
Grouping terms  yields the quadratic:
\[
P(\mu^\star) \;=\; (2\Delta_i - 1)(\mu^\star)^2 + (1 - \Delta_i - 2\Delta_i^2)\mu^\star + \left(\frac{5}{3}\Delta_i^2 - \frac{1}{3}\Delta_i - \frac{1}{4}\right)
\]
in $\mustar$.
Since \(\Delta_i \le 1/2\), the leading coefficient is non-positive (\(2\Delta_i - 1 \le 0\)), which implies \(P\) is a concave parabola. Its value is everywhere bounded by its global maximum:
\[
\max_{\mu^\star \in [0,1]} P(\mu^\star) \;=\; -\frac{\Delta_i(1 - 2\Delta_i)(4 - 3\Delta_i)}{12}.
\]
For \(\Delta_i \in (0, 1/2)\), the is maximum strictly negative.
Thus, \(P(\mu^\star) < 0\) for all admissible \(\mu^\star\), which implies \(\theta^\star(\mu^\star, \Delta_i) < 2\Delta_i\). Consequently, the condition \(\theta_i > 2\Delta_i\) guarantees \(\theta_i > \theta^\star\) and establishes the strict hardness reduction.

\section{Algorithms}
\subsection{Standard UCB-E Algorithm Without Replacement}
\label{sec:ucb-e}

\begin{algorithm}[h!]
\caption{UCB-E}
\label{alg:ucb-e}
\begin{algorithmic}[1]
\REQUIRE Budget $n$, number of arms $K$, test instances $\mathcal{X}=\{x_1,\dots,x_L\}$, exploration parameter $a$
\STATE \textbf{Warm-up (one evaluation per arm, independent task per arm):}
\FOR{$i=1$ to $K$}
    \STATE Draw $j$ uniformly at random from $\{1,\cdots,L\}$
    \STATE Evaluate arm $i$ on $x_j$; observe reward $r \gets S_{ij}$
    \STATE $N_i \gets 1,\quad s_i \gets r$
    \STATE Mark index $j$ as evaluated for arm $i$
\ENDFOR
\FOR{$t=K+1$ to $n$}
    \FOR{$i=1$ to $K$}
        \STATE $\widehat{\mu}_i \gets \frac{s_i}{N_i}$
        \STATE $U_i \gets \widehat{\mu}_i + \sqrt{\frac{a}{N_i}}$
    \ENDFOR
    \STATE Select arm $i_t \gets \arg\max_{i \in \{1,\dots,K\}} U_i$
    \STATE Draw index $j_t$ uniformly at random from the indexes not yet evaluated by arm $i_t$
    \STATE Evaluate $i_t$ on $x_{j_t}$:
    \STATE \hspace{1.3em} $r \gets S_{i_t j_t}$
    \STATE $s_{i_t} \gets s_{i_t} + r$
    \STATE $N_{i_t} \gets N_{i_t} + 1$
    \STATE Mark $x_t$ as evaluated for arm $i_t$
\ENDFOR
\STATE Compute $\widehat{\mu}_i \gets \frac{s_i}{N_i}$ for all $i$
\RETURN $\arg\max_{i \in \{1,\dots,K\}} \widehat{\mu}_i$
\end{algorithmic}
\end{algorithm}

This variant of UCB-E from \citet{zhou2025speeding} samples test instances \emph{without replacement} per arm i.e each model is evaluated at most once on any given instance. This is a first step toward exploiting the correlation structure among LLMs, since it ensures that the empirical mean $\widehat{\mu}_i$ aggregates information from $N_i$ \emph{distinct} tasks rather than potentially redundant ones, and it sets the stage for Synchronized UCB-E (\cref{sec:smart-ucb-e}).

\subsection{UCB-E-LRF}
\label{sec:ucb-e-lrf}

\begin{algorithm}[h!]
\caption{UCB-E-LRF}
\label{alg:ucb-e-lrf}
\begin{algorithmic}[1]
\REQUIRE Budget $n$, number of arms $K$, test instances $\mathcal{X}=\{x_1,\dots,x_L\}$, low-rank factorization model $\mathcal{M}$, rank $r$, ensemble size $C$, warm-up budget $n_0$, uncertainty scaling $\eta$
\STATE \textbf{Warm-up:} Uniformly draw $n_0$ arm-instance pairs from $\{1,\dots,K\} \times \{1,\dots,L\}$
\STATE Initialize observation matrix $O \in \{0,1\}^{K \times L}$ and observed scoring matrix $S^{\text{obs}} \in ([0,1] \cup \{?\})^{K \times L}$ from these $n_0$ evaluations
\STATE $(\widehat{S},\, R) \gets \mathcal{M}(S^{\text{obs}},\, O;\, r,\, C)$
\FOR{$i = 1$ to $K$}
    \STATE $B_i \gets \frac{1}{L} \sum_{j=1}^{L} \bigl( O_{ij}\, S^{\text{obs}}_{ij} + (1 - O_{ij})\, \widehat{S}_{ij} + \eta\, R_{ij} \bigr)$
\ENDFOR
\FOR{$t = n_0 + 1$ to $n$}
    \STATE Select arm $i_t \gets \arg\max\nolimits_{k\,:\,\sum_{j} O_{kj} \neq L} B_k$
    \STATE Select instance $j_t \gets \arg\max\nolimits_{j\,:\,O_{i_t j} = 0} R_{i_t j}$
    \STATE Evaluate: $S^{\text{obs}}_{i_t j_t} \gets S_{i_t j_{t}}$
    \STATE Update: $O_{i_t j_t} \gets 1$
    \STATE $(\widehat{S},\, R) \gets \mathcal{M}(S^{\text{obs}},\, O;\, r,\, C)$
    \FOR{$i = 1$ to $K$}
        \STATE $B_i \gets \frac{1}{L} \sum_{j=1}^{L} \bigl( O_{ij}\, S^{\text{obs}}_{ij} + (1 - O_{ij})\, \widehat{S}_{ij} + \eta\, R_{ij} \bigr)$
    \ENDFOR
\ENDFOR
\RETURN $\arg\max\nolimits_{i \in \{1,\dots,K\}} \frac{1}{L} \sum_{j=1}^{L} \bigl( O_{ij}\, S^{\text{obs}}_{ij} + (1 - O_{ij})\, \widehat{S}_{ij} \bigr)$
\end{algorithmic}
\end{algorithm}

This adaptation of UCB-E relies on a low-rank approximation of the scoring matrix, fit from the partial set of observed evaluations. It yields both an imputed score matrix $\widehat{S}$ and an associated uncertainty matrix $R$ that quantifies the confidence of each predicted entry. At each round, the algorithm selects the arm maximizing an index $B_i$ that combines observed scores, imputed scores, and an uncertainty bonus, and then picks the unobserved instance with the highest uncertainty for that arm. We do not detail the estimation procedure for $\widehat{S}$ and $R$ here and refer the reader to \citet{zhou2025speeding} for the full description.

\subsection{Synchronized UCB-E}
\label{sec:smart-ucb-e}

\begin{algorithm}[h!]
\caption{Synchronized UCB-E (SyUCB-E)}
\label{alg:smart-ucb-e}
\begin{algorithmic}[1]
\REQUIRE Budget $n$, number of arms $K$, test instances $\mathcal{X}=\{x_1,\dots,x_L\}$, exploration parameter $a$
\STATE Draw a random permutation $\sigma$ of $\{1,\dots,L\}$ \hfill (global task schedule)

\STATE \textbf{Warm-up (one evaluation per arm, same first task):}
\FOR{$i=1$ to $K$}
    \STATE Evaluate arm $i$ on $x_{\sigma(1)}$; observe reward $r \gets S_{i \sigma(1)}$
    \STATE $N_i \gets 1,\quad s_i \gets r$
\ENDFOR

\FOR{$t=K+1$ to $n$}
    \FOR{$i=1$ to $K$}
        \STATE $\widehat{\mu}_i \gets \frac{s_i}{N_i}$
        \STATE $U_i \gets \widehat{\mu}_i + \sqrt{\frac{a}{N_i}}$
    \ENDFOR
    \STATE Select arm $i_t \gets \arg\max_{i \in \{1,\dots,K\}} U_i$
    \STATE Evaluate $i_t$ on its next scheduled task $x_{\sigma(N_{i_t}+1)}$:
    \STATE \hspace{1.3em} $r \gets S_{i_t \sigma(N_{i_t}+1)}$
    \STATE $s_{i_t} \gets s_{i_t} + r$
    \STATE $N_{i_t} \gets N_{i_t} + 1$
\ENDFOR

\STATE Compute $\widehat{\mu}_i \gets \frac{s_i}{N_i}$ for all $i$
\RETURN $\arg\max_{i \in \{1,\dots,K\}} \widehat{\mu}_i$
\end{algorithmic}
\end{algorithm}

As a correlation-aware variant of the classical UCB-E algorithm \citep{audibert2010best}, we propose \textbf{Synchronized UCB-E} (also referred to as SyUCB-E). This algorithm leverages the correlation structure among models by ensuring that all arms are evaluated on the same sequence of test instances.

The key modification from standard UCB-E lies in the task selection strategy: instead of selecting test instances independently and uniformly at random for each arm, Synchronized UCB-E employs a global task schedule. Specifically, a random permutation $\sigma$ of all test instances is drawn at the beginning. When an arm is selected for evaluation, it is evaluated on the test instance in this predetermined sequence corresponding to its current evaluation count. This approach ensures that arms with similar evaluation counts are evaluated on similar subsets of tasks, which can help to exploit the correlation between models' performances on shared instances. Like the standard UCB-E, Synchronized UCB-E requires specification of an exploration parameter $a$, which controls the exploration-exploitation trade-off through the upper confidence bound $U_i = \widehat{\mu}_i + \sqrt{\frac{a}{N_i}}$.

\section{Dataset Details}
\label{sec:dataset-details}

\begin{table*}[!h]
\centering
\caption{Detailed information about the 15 evaluation tasks. Tasks and Models denote the number of test instances and number of models, respectively. The datasets are sourced from HELM Lite \citep{liang2022holistic} and OpenLLM Leaderboard \citep{beeching2023open}. Best Acc and Mean Acc denote the accuracy of the best model and the mean accuracy across all models, respectively. $\Delta_2$ denotes the performance gap between the best and second-best model.
Hardness measures $H_1$, $H_2$, and $H_3$ characterize the difficulty of identifying the best model. All metrics are bounded in $[0,1]$, allowing direct application of our theoretical framework.}
\label{tab:dataset-details}
\resizebox{\textwidth}{!}{%
\begin{tabular}{lllrrrrrrrrr}
\toprule
\textbf{Dataset} & \textbf{Source} & \textbf{Metric} & \textbf{Tasks} & \textbf{Models} & \textbf{Best Acc} & \textbf{Mean Acc} & \textbf{$\Delta_2$} & \textbf{$H_1$} & \textbf{$H_2$} & \textbf{$H'_3$} & \textbf{$H_3$} \\
\midrule
arc\_challenge & OpenLLM & acc\_norm & 1,172 & 218 & 0.75 & 0.52 & 0.019 & 17568.7 & 5676.0 & 4426.2 & 3457.4  \\
bbh & OpenLLM & acc\_norm & 5,761 & 39 & 0.65 & 0.49 & 0.046 & 4048.8 & 1806.6 & 1763.8 & 1599.3 \\
gpqa & OpenLLM & acc\_norm & 1,192 & 450 & 0.44 & 0.29 & 0.025 & 32672.1 & 11561.7 & 11740.8 & 12197.4\\
ifeval & OpenLLM & prompt\_strict\_acc & 541 & 450 & 0.88 & 0.33 & 0.041 & 6213.4 & 1659.8 & 996.0 &  960.4  \\
mmlu\_pro & OpenLLM & acc & 12,032 & 48 & 0.53 & 0.31 & 0.012 & 16525.3 & 14359.2 & 14432.0 & 14727.2\\
musr & OpenLLM & acc\_norm & 756 & 450 & 0.54 & 0.40 & 0.024 & 41804.6 & 12390.7 & 13416.9 & 13749.2\\
gsm & HELM Lite & final\_num\_exact & 1,000 & 80 & 0.96 & 0.67 & 0.007 & 79251.0 & 61224.5 & 11376.0 & 3861.3\\
legalbench & HELM Lite & quasi\_exact & 2,047 & 80 & 0.70 & 0.53 & 0.004 & 121981.9 & 103461.9 & 87365.6 & 22640.7\\
math & HELM Lite & math\_equiv\_cot & 437 & 80 & 0.93 & 0.56 & 0.005 & 110595.1 & 95484.5 & 24766.5 & 10776.7\\
narrative\_qa & HELM Lite & f1 & 470 & 80 & 0.81 & 0.70 & 0.005 & 161559.9 & 96239.9 & 30047.6 & 10993.5\\
natural\_qa & HELM Lite & f1 & 2,000 & 80 & 0.65 & 0.52 & 0.008 & 47914.3 & 30948.5 &  20759.0 & 2400.1\\
wmt\_14 & HELM Lite & bleu\_4 & 5,000 & 80 & 0.23 & 0.16 & 0.003 & 335590.6 & 291296.8 & 59667.0 & 4234.8 \\
mmlu & HELM Lite & exact\_match & 567 & 82 & 0.81 & 0.61 & 0.007 & 50487.7 & 40186.1 & 25193.6 & 5005.8\\
med\_qa & HELM Lite & quasi\_exact & 1,000 & 83 & 0.87 & 0.59 & 0.008 & 43343.4 & 31250.0 & 14777.5 & 3911.3\\
commonsense & HELM Lite & exact\_match & 500 & 83 & 0.97 & 0.82 & 0.002 & 664061.7 & 500000.0 & 57096.6 & 42664.0\\
\bottomrule
\end{tabular}%
}
\end{table*}
\textbf{HELM Lite:} We use 9 available tasks from the HELM Lite benchmark suite v1.10.0 (2024-11-05).
The tasks and their subtasks are: \emph{commonsense} (OpenBookQA), \emph{gsm} (GSM8K), \emph{legalbench} (5 subsets: abercrombie, corporate\_lobbying, function\_of\_decision\_section, international\_citizenship\_questions, proa), \emph{math} (7 subjects at level 1 with chain-of-thought: algebra, counting\_and\_probability, geometry, intermediate\_algebra, number\_theory, prealgebra, precalculus), \emph{med\_qa} (MedQA), \emph{mmlu} (5 subjects: abstract\_algebra, college\_chemistry, computer\_security, econometrics, us\_foreign\_policy), \emph{narrative\_qa} (NarrativeQA), \emph{natural\_qa} (2 modes: closedbook, openbook
), and \emph{wmt\_14} (5 language pairs: cs-en, de-en, fr-en, hi-en, ru-en). For tasks with multiple subtasks, we compute model performance by averaging scores across all instances rather than averaging per-subtask averages. For each benchmark, we include all models (among a total of 83 models included in HELM) for which evaluation data on that benchmark were available for download in December 2025. 

\textbf{OpenLLM Leaderboard:} We include 6 tasks from the OpenLLM Leaderboard. The tasks and their subtasks are: \emph{ifeval} (instruction following evaluation), \emph{mmlu\_pro} (professional-level MMLU), \emph{arc\_challenge} (AI2 Reasoning Challenge), \emph{bbh} (Big-Bench Hard with 24 subtasks including boolean\_expressions, causal\_judgement, date\_understanding, disambiguation\_qa, formal\_fallacies, geometric\_shapes, hyperbaton, logical\_deduction variants, movie\_recommendation, navigate, object\_counting, penguins\_in\_a\_table, reasoning\_about\_colored\_objects, ruin\_names, salient\_translation\_error\_detection, snarks, sports\_understanding, temporal\_sequences, and tracking\_shuffled\_objects variants), \emph{gpqa} (3 subsets: diamond, extended, main), and \emph{musr} (3 subtasks: murder\_mysteries, object\_placements, team\_allocation). For each benchmark we select all models from official providers (from a total of 470) for which complete evaluation data across all subtasks was available in December 2025. 

\textbf{Model Lists:} Data from both sources includes hundreds of evaluated models; rather than listing all models in the paper, we provide a full list of evaluated models together with evaluation parameters and data sources at \url{https://github.com/zifanlyu/llm-bandits-sysrs/blob/main/datasets/MODEL_CANDIDATES_AND_EVALUATION_PARAMETERS_BY_DATASET.md}.

\section{Implementation Details}
\label{sec:implementation-details}

\subsection{Budget Levels}
We evaluate algorithm performance across a range of budget percentages relative to the total number of evaluations needed for exhaustive evaluation (i.e., $K \times L$). 

\paragraph{Budget Grid for SySRs, SyUCB-E, UCB-E, US and SyUS.}
For all algorithms besides UCB-E-LRF and UCB-E-LRF(No Warm-up), we evaluate performance at every integer percentage $1\%, 2\%, \ldots, 100\%$, augmented with sub-integer grid points $\{1.25\%, 1.5\%, 1.75\%, 2.5\%, 3.5\%, 4.5\%, 5.5\%\}$ for finer resolution at low budgets, yielding 107 budget levels in total.

\paragraph{Budget Grid for UCB-E-LRF and UCB-E-LRF(No Warm-up).}
UCB-E-LRF experiments are evaluated on a sparser grid of 38 budget levels: sub-integer steps $\{1.25\%, 1.5\%, 1.75\%, 2.5\%, 3.5\%, 4.5\%, 5.5\%\}$, every $1\%$ from $6\%$ to $10\%$, every $2\%$ from $10\%$ to $20\%$, and then every $5\%$ from $25\%$ to $100\%$.

\paragraph{CB Computation.}
For algorithms evaluated on the finer grid (all except UCB-E-LRF variants), the X\% Confidence Budget (CB) is the minimum budget percentage at which the identification accuracy reaches X\% and is maintained at all subsequent evaluated budget points. Since some experiments are evaluated on the sparser grid, we compute the CB of every UCB-E-LRF and UCB-E-LRF(No Warm-up) experiment by rounding down to the nearest evaluated budget point on the sparser grid. This approach provides a favorable comparison for UCB-E-LRF and UCB-E-LRF(No Warm-up), as it may underestimate the true CB (i.e., the actual threshold could occur between two evaluated points). Despite this favorable treatment of some baselines, SySRs still demonstrates superior performance across most datasets and thresholds (see \cref{tab:perfect-identification}).

\subsection{Experimental Settings for SySRs, SyUCB-E, UCB-E, and SR}
We perform $1000$ independent runs for each algorithm at each budget level to obtain robust estimates of the probability of identification. For UCB-E and SyUCB-E, we conduct a sensitivity analysis with respect to the exploration hyperparameter $a \in \{0.1, 1.0, 10.0, 100.0\}$.

\subsection{UCB-E-LRF Hyperparameters and Settings}
\label{sec:ucb-elf-hyperparams}
For UCB-E-LRF, we use the implementation from the original authors' codebase \citep{zhou2025speeding}. Based on their findings that the algorithm's performance is not highly sensitive to hyperparameter choices, we adopt their default hyperparameter settings.

Due to the computational cost of the low-rank factorization approach, we run only $100$ independent runs (compared to $1000$ runs for other algorithms). We fix the rank parameter to $r = 1$ and use the following hyperparameters: ensemble size of 64, uncertainty scaling $\eta = 5.0$, dropout rate $\text{drop} = 0.05$, ALS (Alternating Least Squares) regularization $\lambda = 0.1$, 10 ALS iterations per update, batch size of 32 samples per UCB iteration, and warmup percentage $T_0 = 5\%$ of total matrix entries. 

To evaluate the impact of the warmup phase, we also implement UCB-E-LRF(No Warm-up) with $T_0 = 0\%$ of total matrix entries while keeping all other hyperparameters unchanged.

\subsection{Summary of Evaluated Algorithms and Their Characteristics.}
We summarize the proposed and evaluated algorithms and their key characteristics in \cref{tab:algorithm-summary}, including their origin, whether they utilize synchronized evaluation, hyperparameter settings, and computational overhead per run.
\begin{table*}[h]
\centering
\small
\caption{Summary of evaluated algorithms and their characteristics.}
\label{tab:algorithm-summary}
\begin{tabular}{lllll}
\toprule
\textbf{Algorithm} & \textbf{Origin} & \textbf{Sync.} & \textbf{Hyperparameter} & \textbf{Overhead} \\
\midrule
SySRs & This work (\cref{alg:csr}) & Yes & Free & Seconds \\
SR & \citet{audibert2010best} & No & Free & Seconds \\
SyUCB-E & This work (\cref{sec:smart-ucb-e}) & Yes & $a \in \{0.1, 1, 10, 100\}$ & Minutes \\
UCB-E & \citet{audibert2010best} & No & $a \in \{0.1, 1, 10, 100\}$ & Minutes \\
UCB-E-LRF & \citet{zhou2025speeding} & N/A & \cref{sec:ucb-elf-hyperparams} & Hours \\
UCB-E-LRF(NW) & Modified from \citet{zhou2025speeding} & N/A & \cref{sec:ucb-elf-hyperparams} & Hours \\
SyUS & Baseline & Yes & Free & Seconds \\
US & Baseline & No & Free & Seconds \\
\bottomrule
\multicolumn{5}{l}{\footnotesize Sync. = Synchronized evaluation; Overhead = Computational overhead per run.} \\
\multicolumn{5}{l}{\footnotesize NW = No Warm-up; N/A = Not applicable (uses matrix completion, not direct synchronization).}
\end{tabular}
\end{table*}

\section{Full Experimental Results}
\label{sec:all-dataset-plots}

\subsection{Mean Error Rate and Worst-Case Confidence Budget Across Datasets}
In this section, we present aggregate performance metrics across all 15 datasets to provide a holistic view of each algorithm's effectiveness. \cref{fig:mean_all} a illustrates the error rate $\hat{\epsilon}_b$ as a function of the budget percentage, averaged over all datasets. This plot highlights how quickly each algorithm converges to accurate model identification as the evaluation budget increases. \cref{fig:mean_all} b shows the worst-case Confidence Budget (CB) required to achieve specified identification accuracy levels across all datasets. This visualization emphasizes the robustness of each algorithm in the most challenging scenarios.
\begin{figure}[!h]
    \centering
    \includegraphics[width=1.0\linewidth]{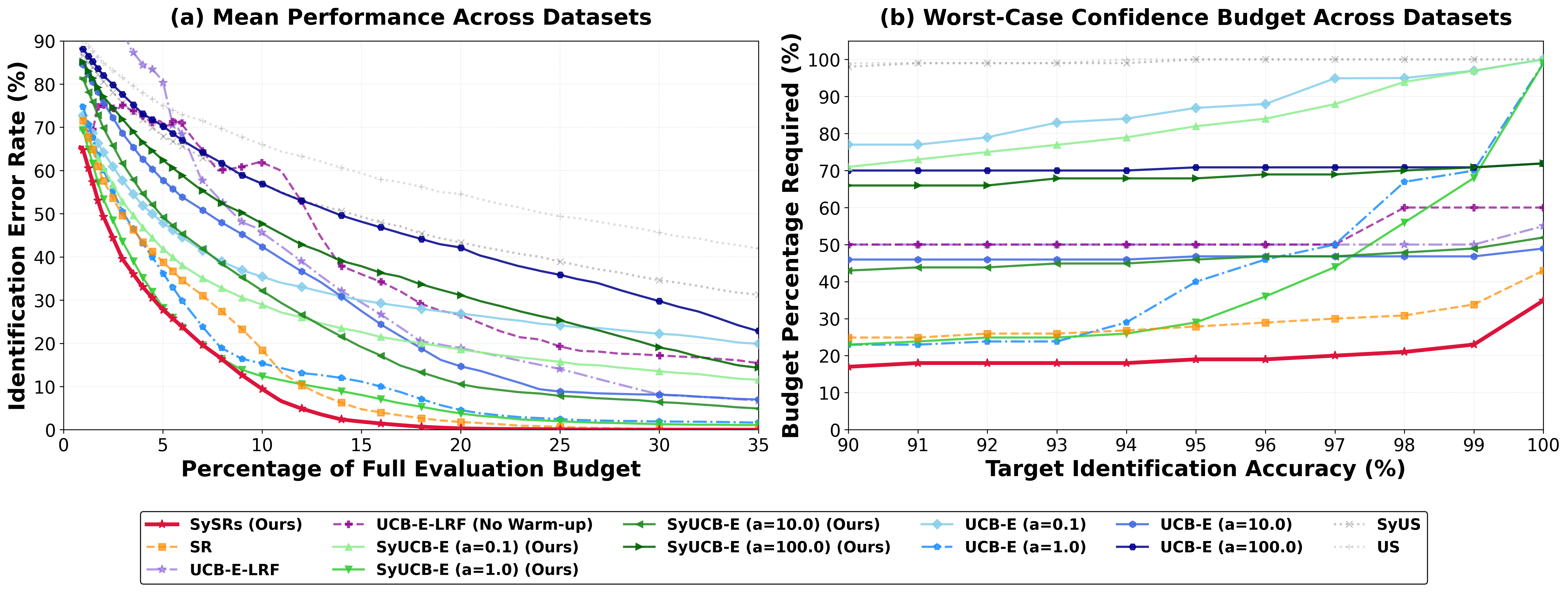}
    \caption{\textbf{a)}: Error rate using a given percentage of model/query pairs, averaged over 15 LLM benchmarks. \textbf{b)} Percentage of model/query pairs required to identify the best model with a given accuracy, worst case over 15 LLM benchmarks. }
    \label{fig:mean_all}
\end{figure}

\subsection{Individual Plots for All Datasets}
In this section, we provide detailed plots for all 15 datasets used in our experiments, illustrating the performance of each algorithm. For each dataset, we present two key visualizations: the left panel shows the estimated error rate $\hat{\epsilon}_b$ as a function of the budget percentage $b$, and the right panel shows the Confidence Budget (CB) required to achieve specified identification accuracy levels. 

\begin{figure*}[h!]
\centering
\begin{subfigure}[t]{\textwidth}
\centering
\includegraphics[width=\textwidth]{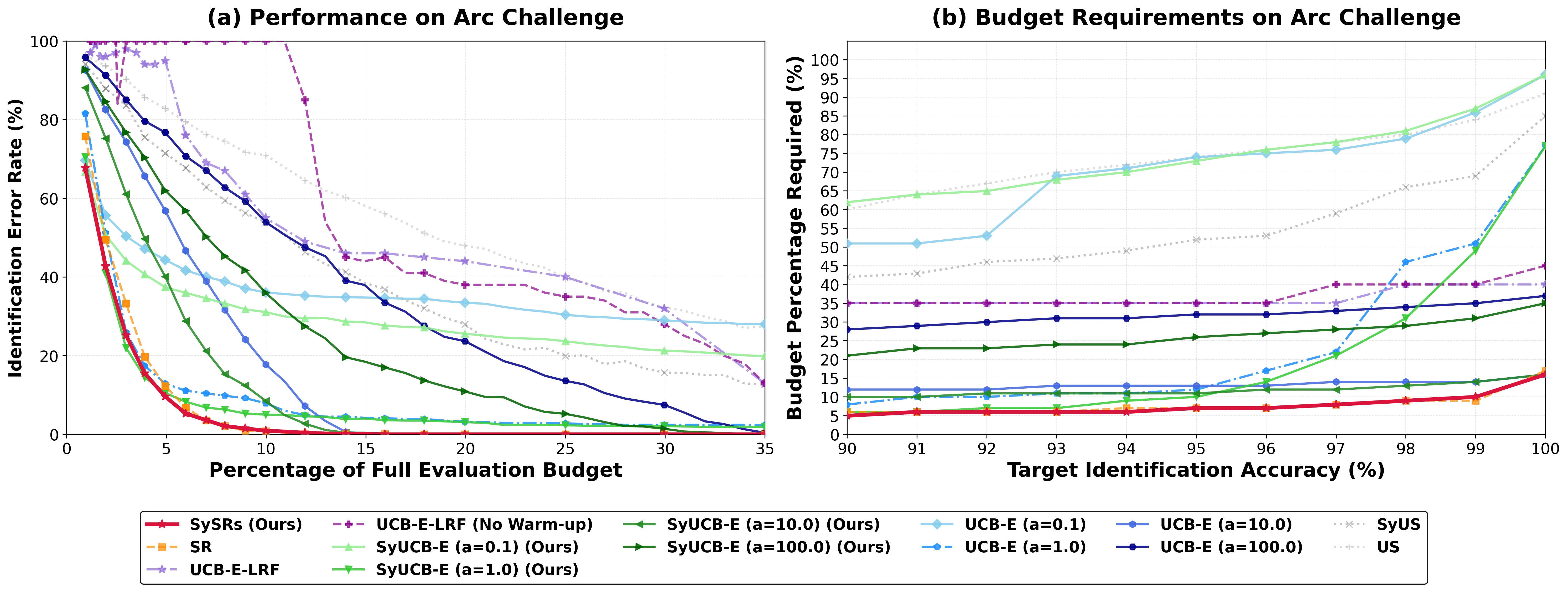}
\caption{ARC Challenge dataset results.}
\label{fig:arc-challenge}
\end{subfigure}

\vspace{0.8em}

\begin{subfigure}[t]{\textwidth}
\centering
\includegraphics[width=\textwidth]{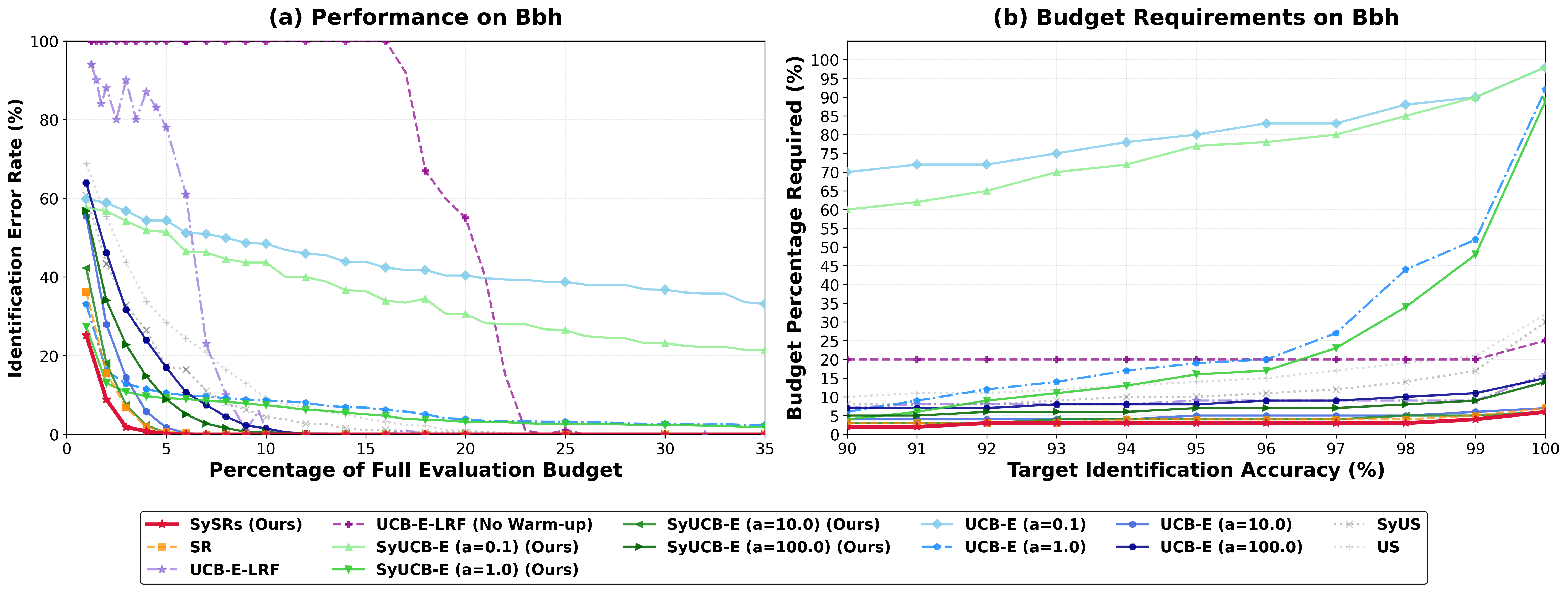}
\caption{BIG-Bench Hard dataset results.}
\label{fig:bbh}
\end{subfigure}
\caption{Individual dataset results (Page 1/8).}
\end{figure*}

\begin{figure*}[p]
\centering
\begin{subfigure}[t]{\textwidth}
\centering
\includegraphics[width=\textwidth]{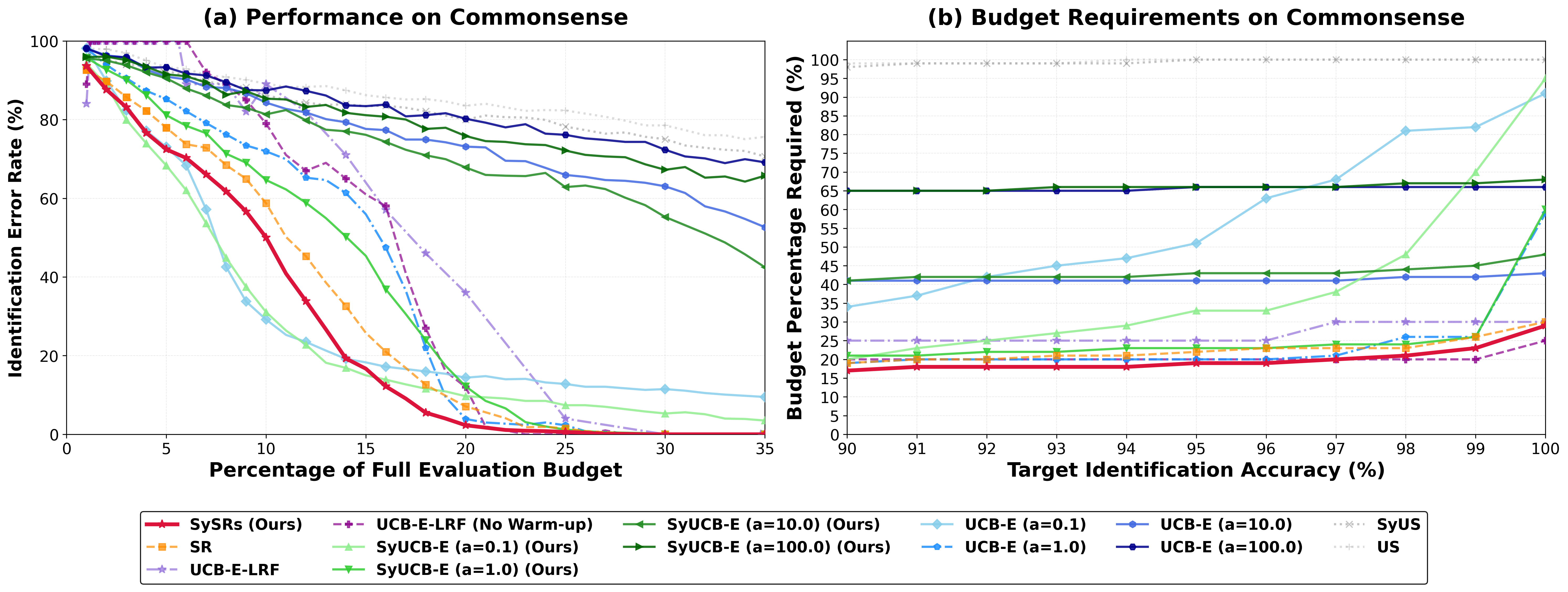}
\caption{Commonsense dataset results.}
\label{fig:commonsense}
\end{subfigure}

\vspace{0.8em}

\begin{subfigure}[t]{\textwidth}
\centering
\includegraphics[width=\textwidth]{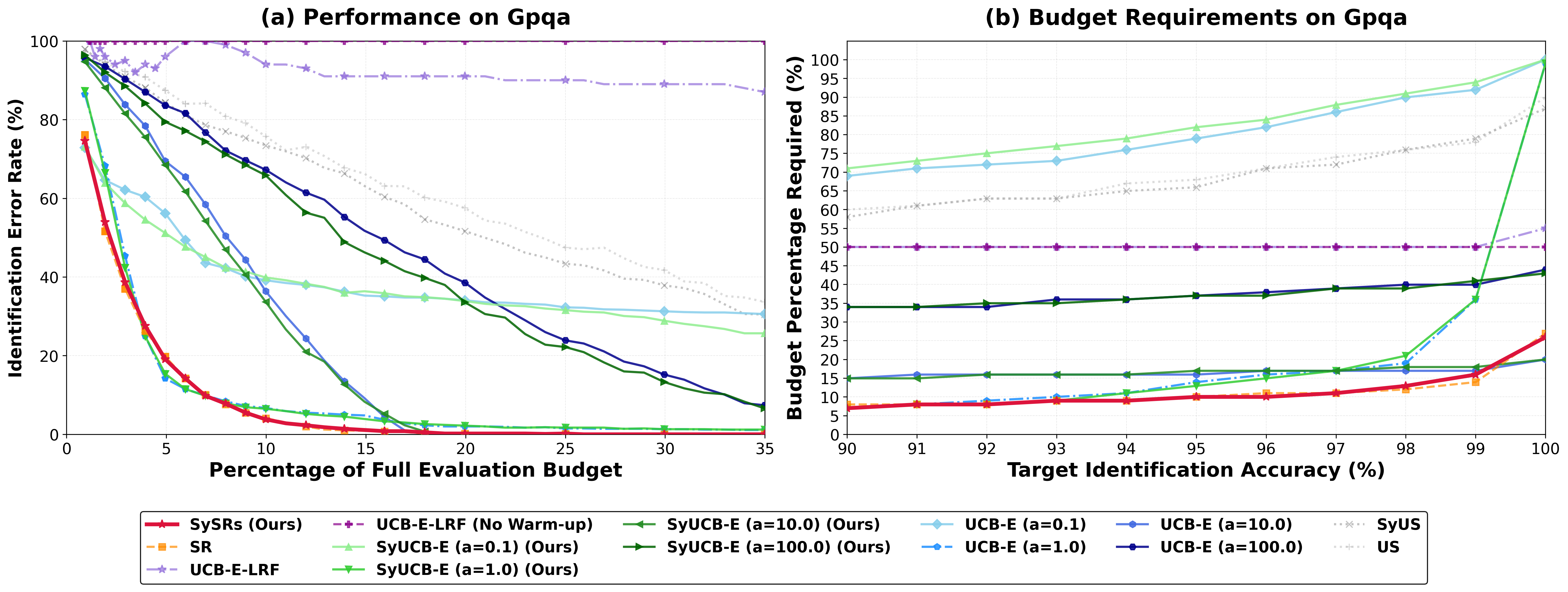}
\caption{GPQA dataset results.}
\label{fig:gpqa}
\end{subfigure}
\caption{Individual dataset results (Page 2/8).}
\end{figure*}

\begin{figure*}[p]
\centering
\begin{subfigure}[t]{\textwidth}
\centering
\includegraphics[width=\textwidth]{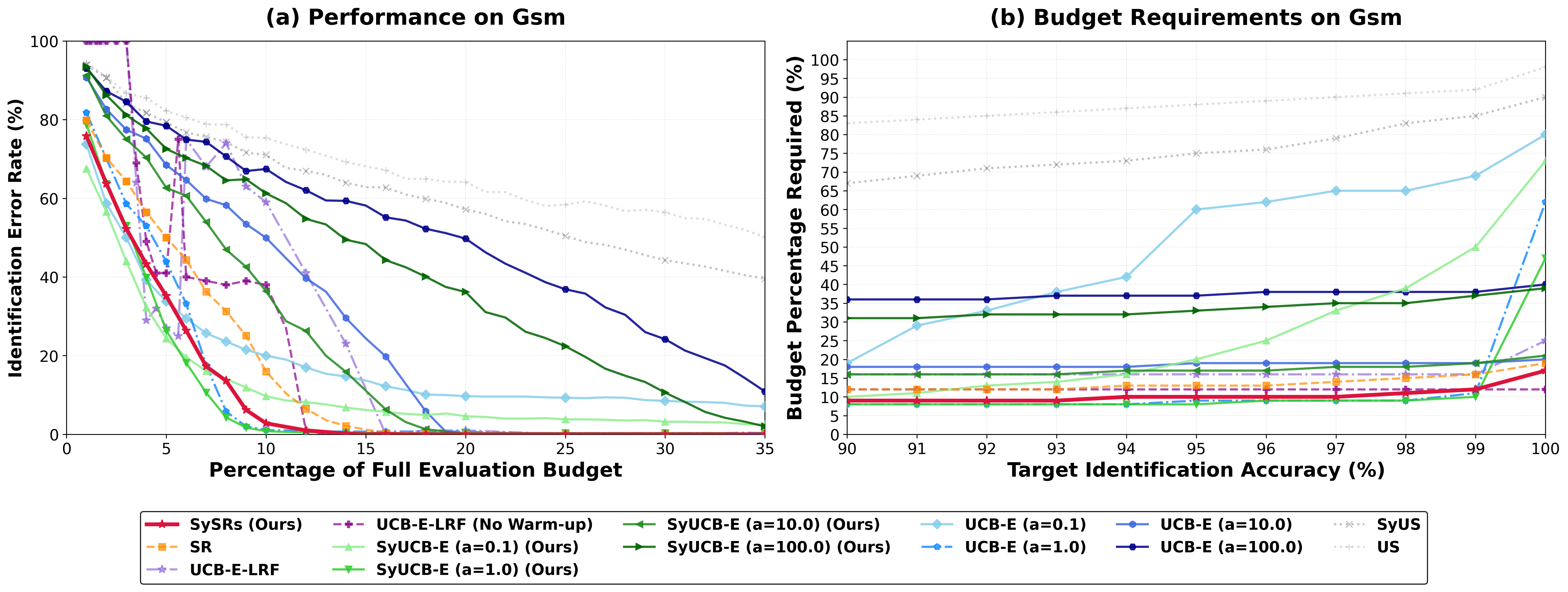}
\caption{GSM dataset results.}
\label{fig:gsm}
\end{subfigure}

\vspace{0.8em}

\begin{subfigure}[t]{\textwidth}
\centering
\includegraphics[width=\textwidth]{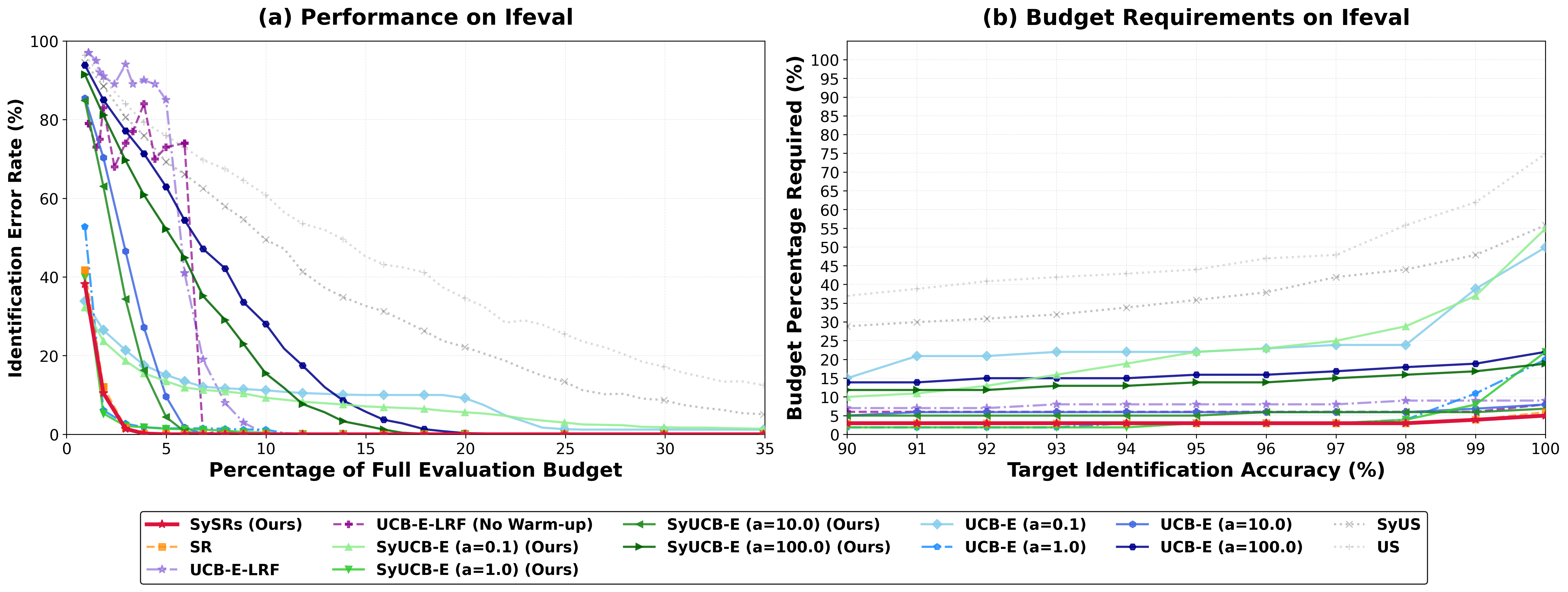}
\caption{IFEval dataset results.}
\label{fig:ifeval}
\end{subfigure}
\caption{Individual dataset results (Page 3/8).}
\end{figure*}

\begin{figure*}[p]
\centering
\begin{subfigure}[t]{\textwidth}
\centering
\includegraphics[width=\textwidth]{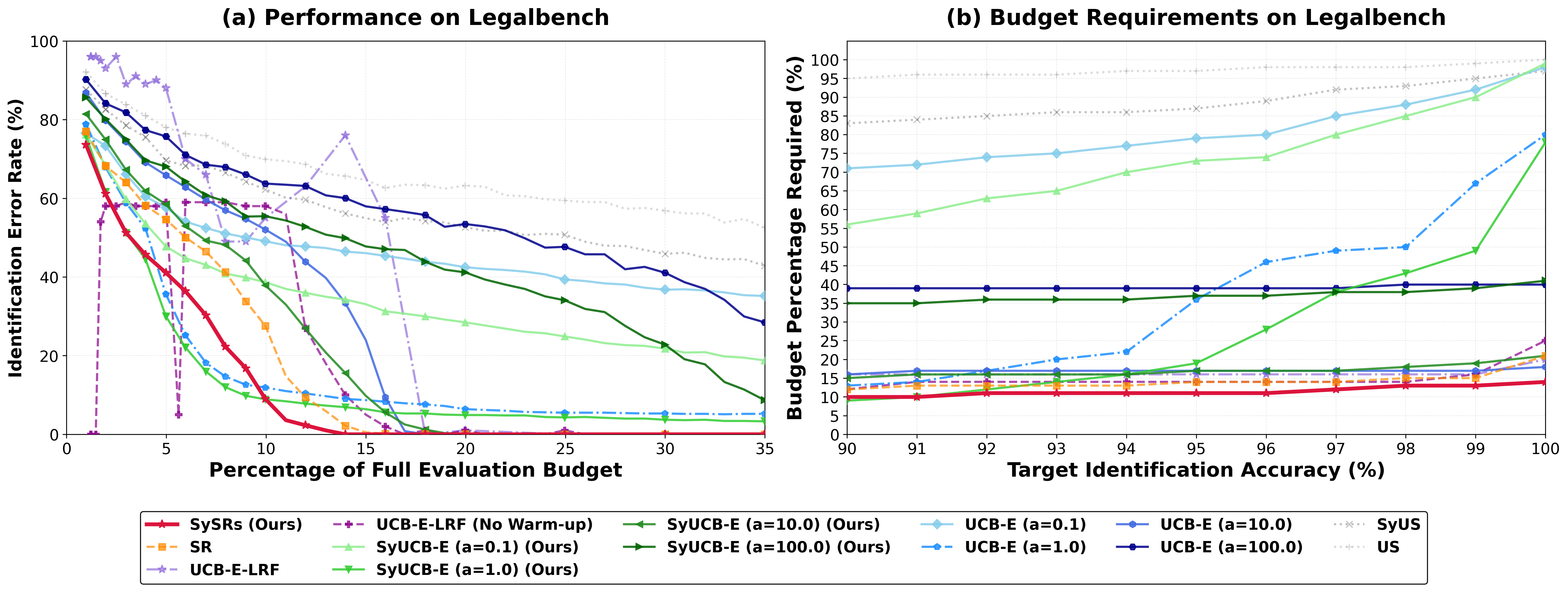}
\caption{LegalBench dataset results.}
\label{fig:legalbench}
\end{subfigure}

\vspace{0.8em}

\begin{subfigure}[t]{\textwidth}
\centering
\includegraphics[width=\textwidth]{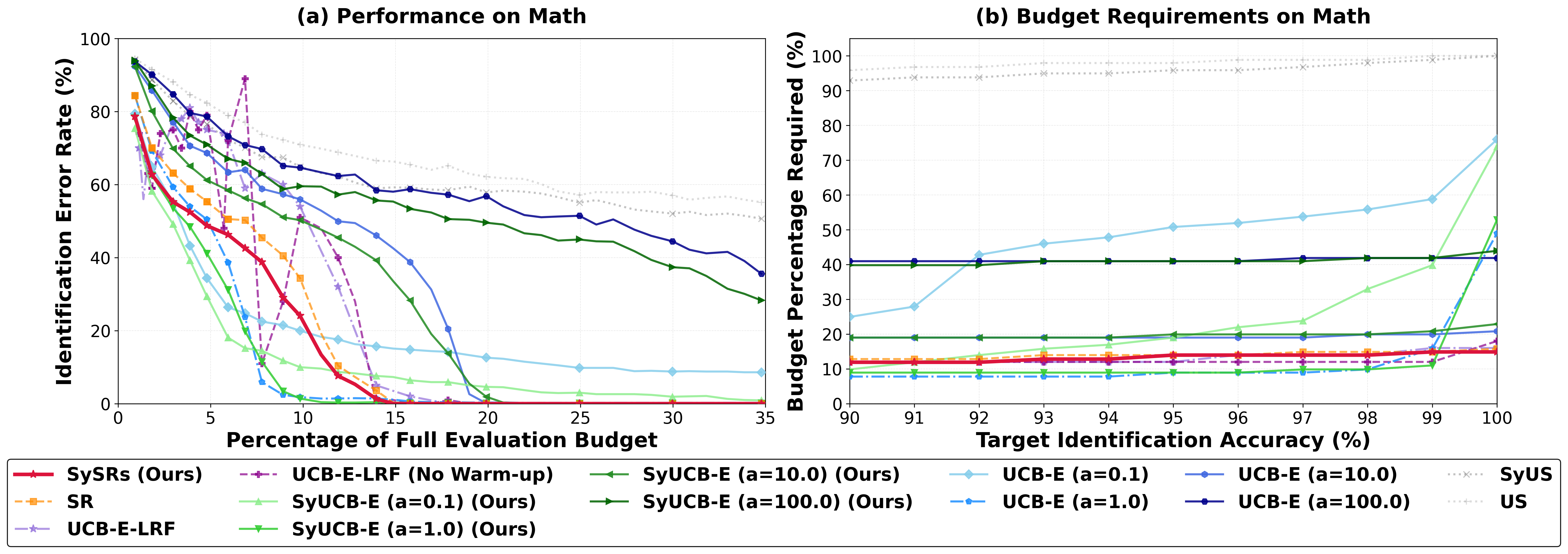}
\caption{MATH dataset results.}
\label{fig:math}
\end{subfigure}
\caption{Individual dataset results (Page 4/8).}
\end{figure*}

\begin{figure*}[p]
\centering
\begin{subfigure}[t]{\textwidth}
\centering
\includegraphics[width=\textwidth]{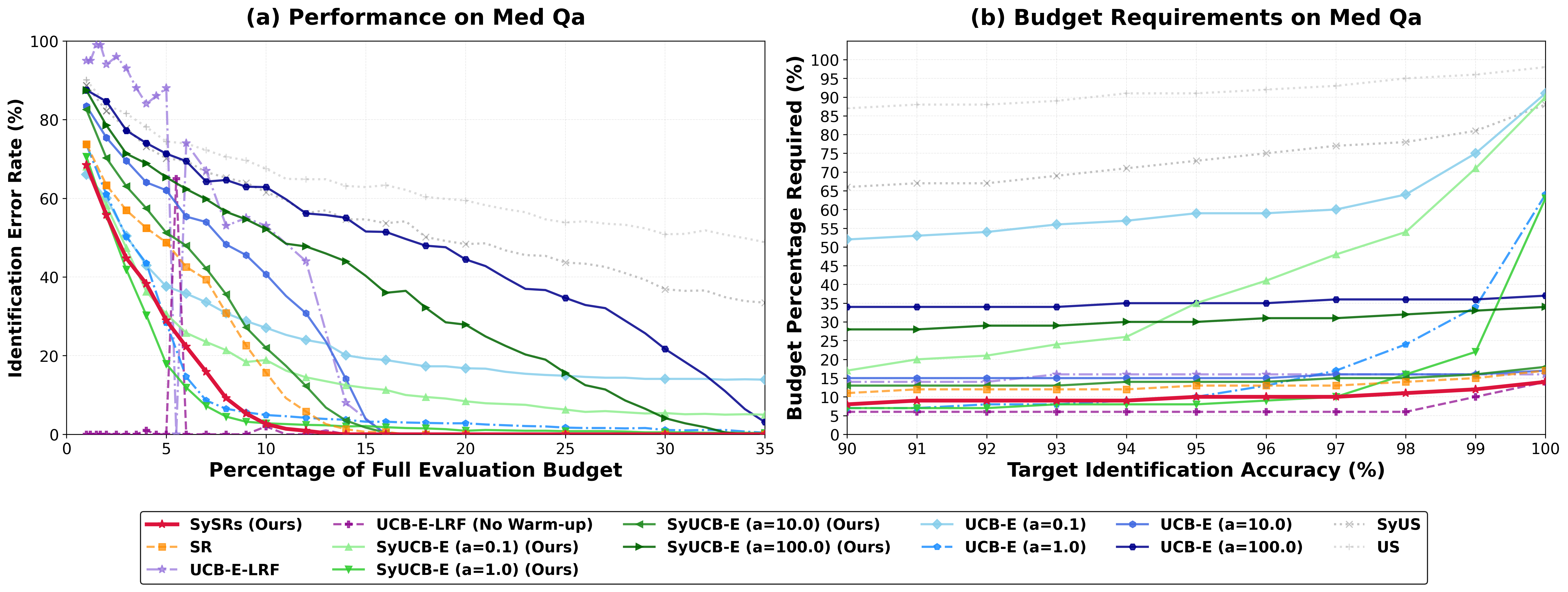}
\caption{MedQA dataset results.}
\label{fig:medqa}
\end{subfigure}

\vspace{0.8em}

\begin{subfigure}[t]{\textwidth}
\centering
\includegraphics[width=\textwidth]{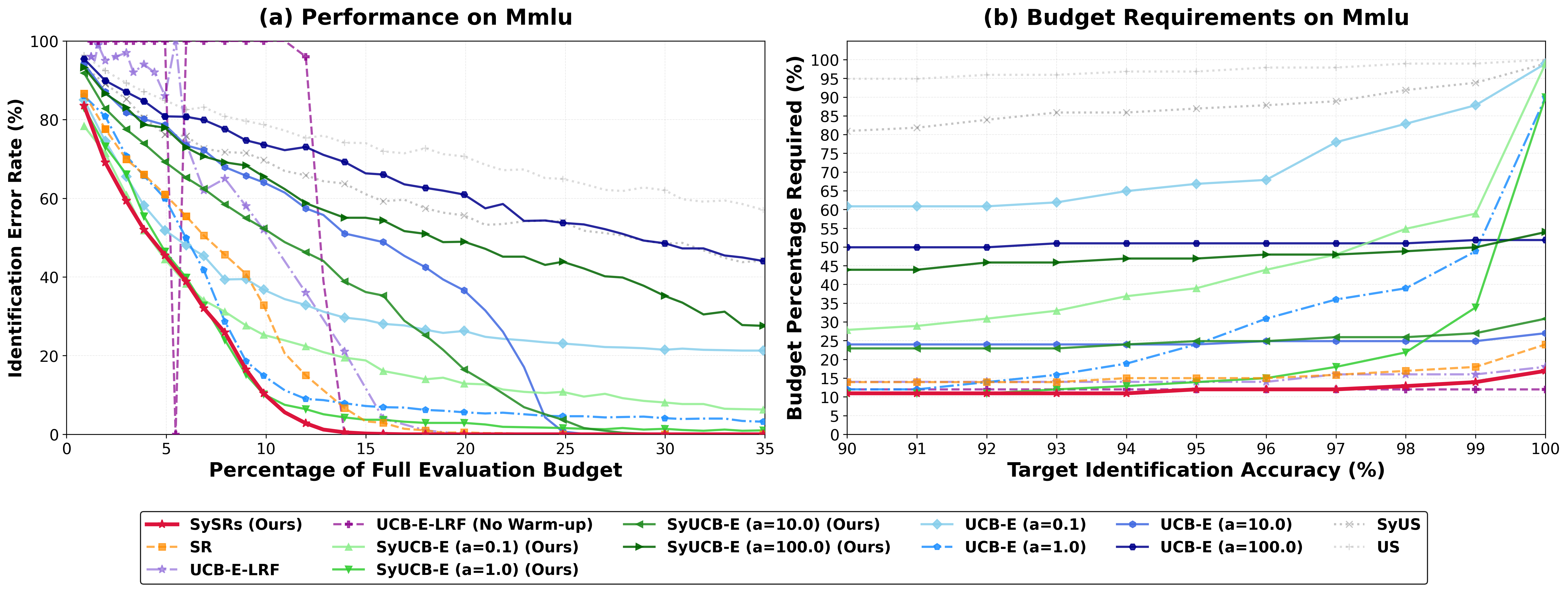}
\caption{MMLU dataset results.}
\label{fig:mmlu}
\end{subfigure}
\caption{Individual dataset results (Page 5/8).}
\end{figure*}

\begin{figure*}[p]
\centering
\begin{subfigure}[t]{\textwidth}
\centering
\includegraphics[width=\textwidth]{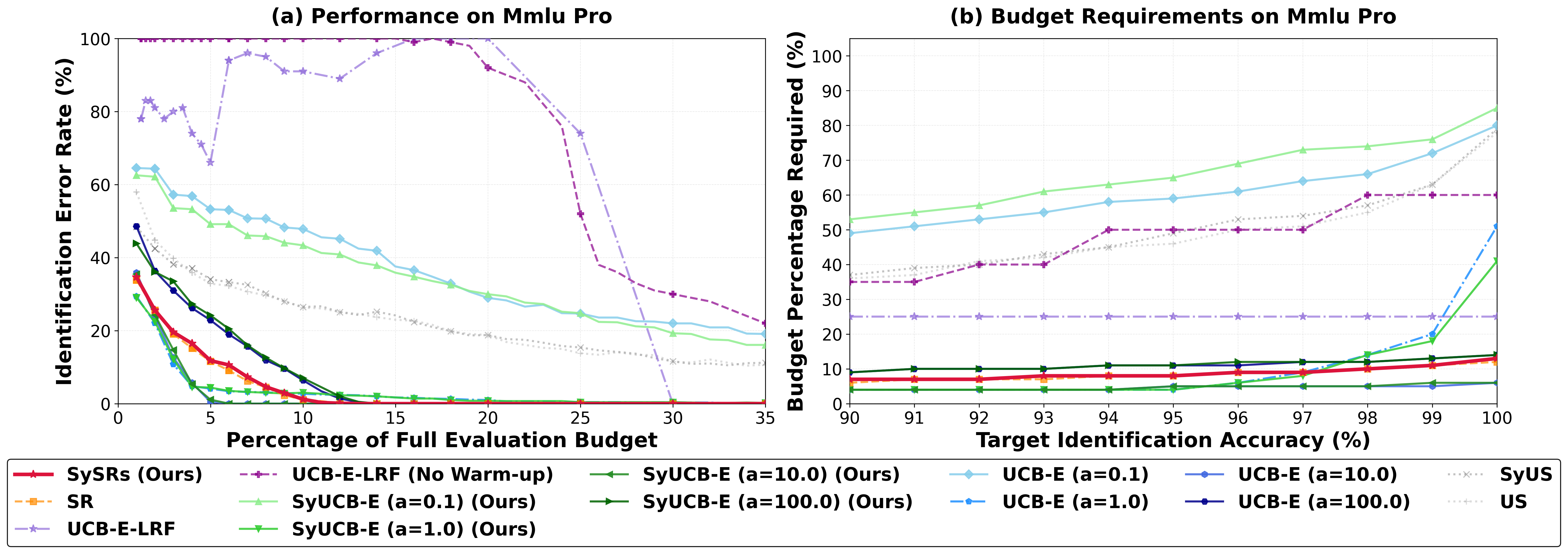}
\caption{MMLU-Pro dataset results.}
\label{fig:mmlu-pro}
\end{subfigure}

\vspace{0.8em}

\begin{subfigure}[t]{\textwidth}
\centering
\includegraphics[width=\textwidth]{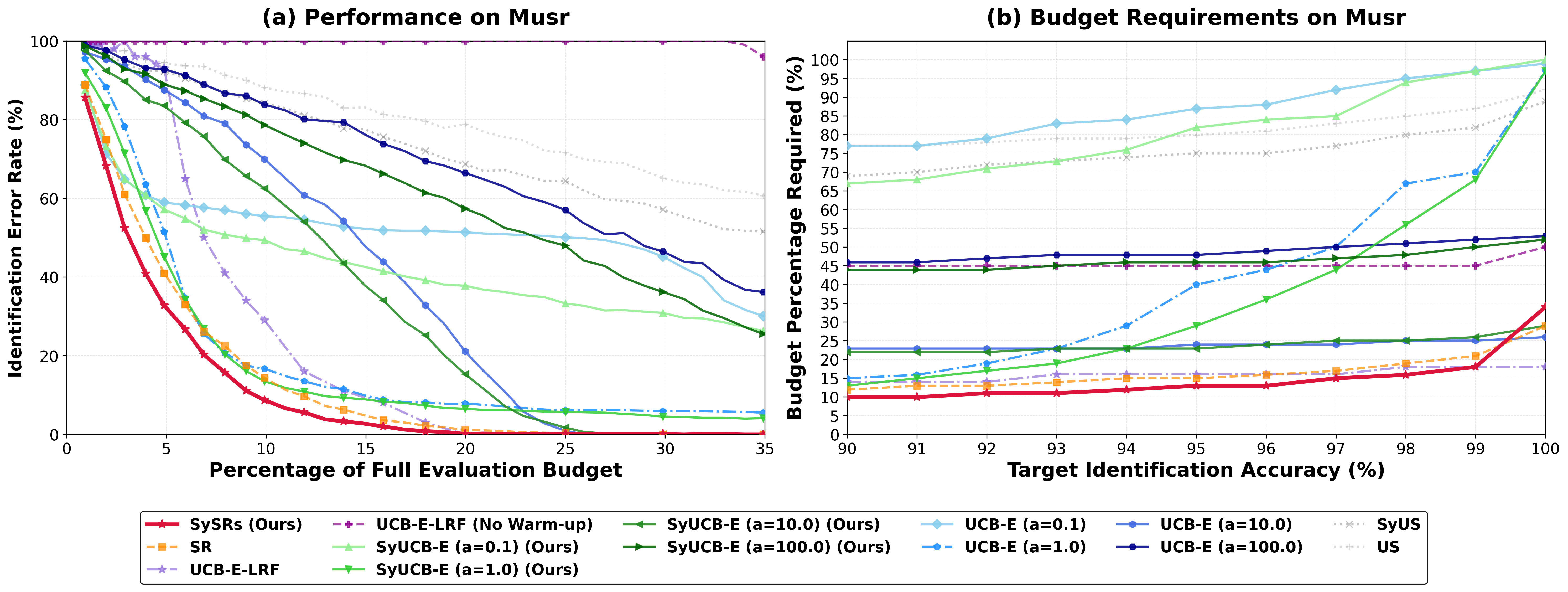}
\caption{MuSR dataset results.}
\label{fig:musr}
\end{subfigure}
\caption{Individual dataset results (Page 6/8).}
\end{figure*}

\begin{figure*}[p]
\centering
\begin{subfigure}[t]{\textwidth}
\centering
\includegraphics[width=\textwidth]{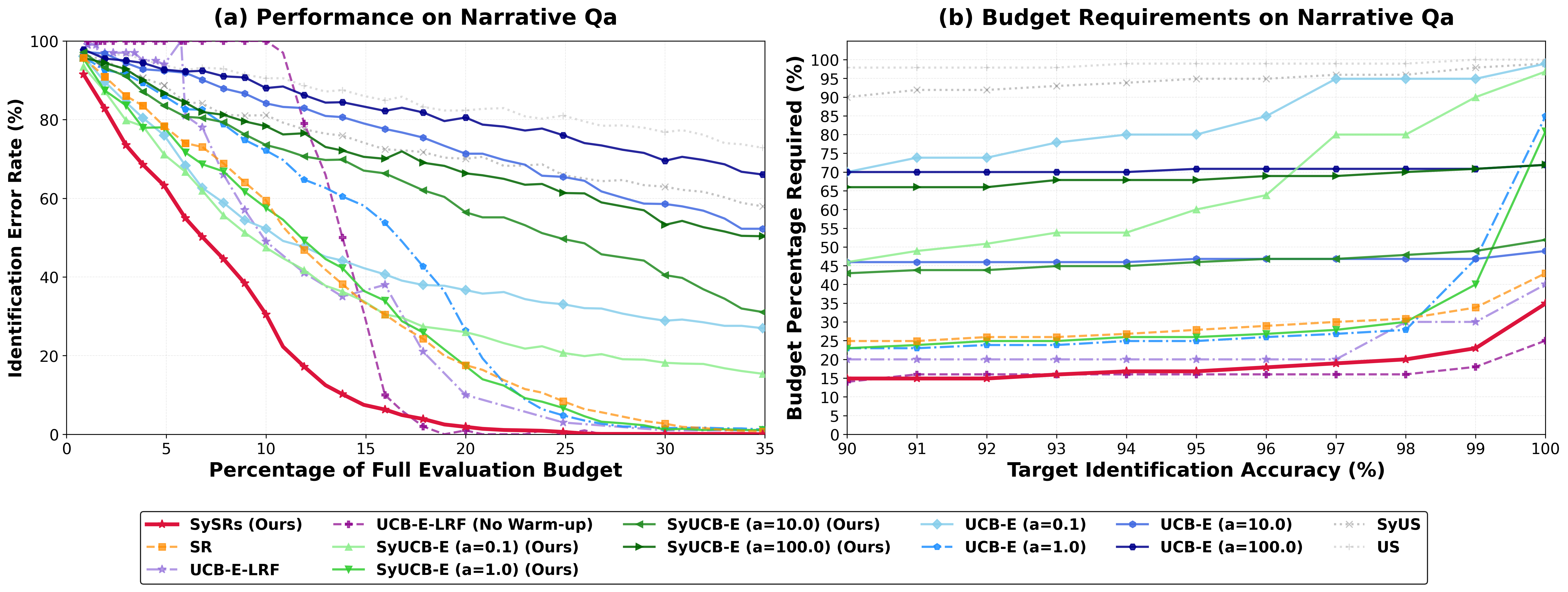}
\caption{NarrativeQA dataset results.}
\label{fig:narrativeqa}
\end{subfigure}

\vspace{0.8em}

\begin{subfigure}[t]{\textwidth}
\centering
\includegraphics[width=\textwidth]{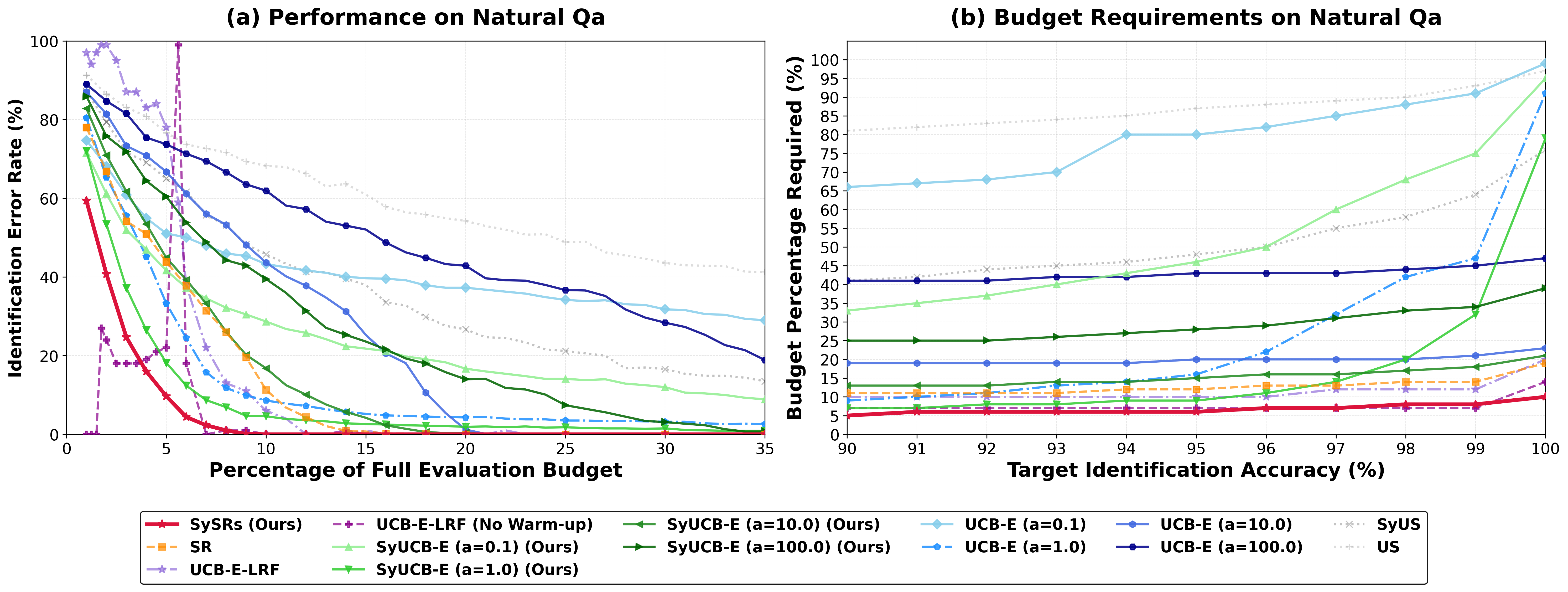}
\caption{NaturalQA dataset results.}
\label{fig:naturalqa}
\end{subfigure}
\caption{Individual dataset results (Page 7/8).}
\end{figure*}

\begin{figure*}[h!]
\centering
\begin{subfigure}[t]{\textwidth}
\centering
\includegraphics[width=\textwidth]{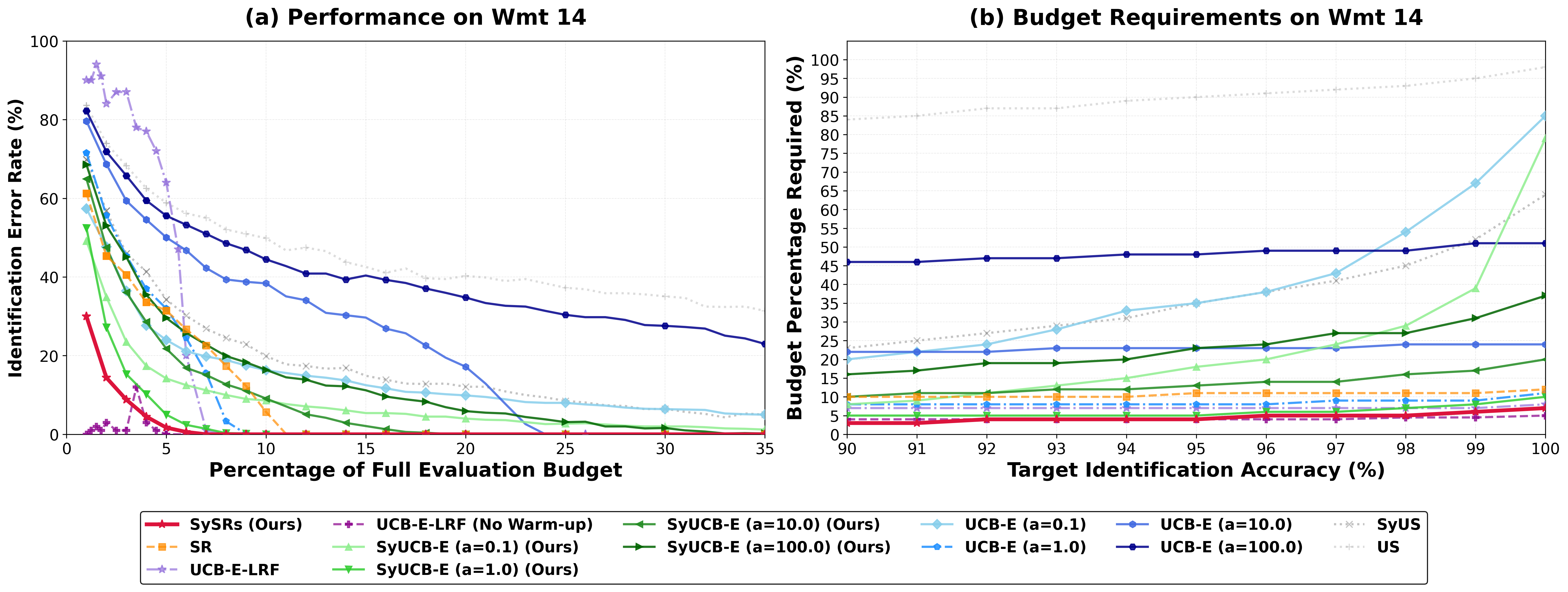}
\caption{WMT-14 dataset results.}
\label{fig:wmt14}
\end{subfigure}
\caption{Individual dataset results (Page 8/8).}
\end{figure*}

\newpage\section{Subset Selection Experimental Details}
\label{sec:subset-selection}

We provide additional details on the subset selection experiments mentioned in the main paper, including the experimental setup and implementation details for each subset selection method. 

\subsection{Experimental Setup}
Let $\S \in [0,1]^{K \times L}$ be the full score matrix over $K$ models and $L$ questions. The ground-truth best arm is $\istar \in \arg\max_i \mu_i$ where $\mu_i = \frac{1}{L}\sum_{j=1}^{L} S_{ij}$. At budget fraction $b$, the total query budget is $\lfloor b \cdot KL \rfloor$ and the number of anchor questions is $k = \lfloor b L \rfloor$. The subset selection methods are evaluated at the budget levels $b \in \{5\%, 10\%, 15\%, 20\%, 25\%, 30\%, 35\%\}$. For each method and budget, we first select a subset of $k$ anchor questions using that method, then evaluate all $K$ models on the selected anchors, compute an estimate $\hat{\mu}_i$ for each model, and predict the best arm as $\hat{\imath}$. 

The clustering-based methods are each evaluated over 100 independent runs; the only source of randomness across runs is the clustering initialisation seed, while the score matrix $\S$ and all derived parameters (e.g., correlations, IRT parameters) are fixed. MetaBench is fully deterministic given $\S$, so a single run suffices, and its best-arm identification rate is either 0 or 1 on a given dataset and budget.

\subsection{Subset Selection via Anchor Point Weighted Score from \citet{vivek2024anchor}}
We replicate the \textbf{Anchor Point Weighted Score} method from~\citet{vivek2024anchor}. First, we compute the inter-question Pearson correlation matrix $C \in \mathbb{R}^{L \times L}$ with
\[
  C_{ji} = \operatorname{corr}(\S_{: j},\, \S_{: i}),
\]
and the distance matrix $D = \mathbf{1} - C$. We then run k-medoids on $D$ to obtain $k$ representative questions $\{q_1,\ldots,q_k\}$. Each question $q$ in the full benchmark is assigned to its most-correlated medoid,
\[
  a_q = \arg\max_{l \in \{1,\dots,k\}}\; C_{q_l,\, q},
\]
and the weight of the anchor item $q_l$ is set to the fractional cluster size of its assigned questions,
\[
  w_l = \frac{|\{q : a_q = l\}|}{L}.
\]
We then predict the score of model $i$ as the weighted average of its scores on the anchor questions,
\[
  \hat{\mu}_i = \sum_{l=1}^{k} w_l \cdot S_{i q_l}.
\]

The only randomness across runs comes from the seed used by the k-medoids algorithm.

\subsection{Subset Selection via K-means from \citet{polo2024tinybenchmarks}}
\label{sec:k_means}

We replicate two subset selection methods from~\citet{polo2024tinybenchmarks} that differ only in how each question is embedded before clustering. Both methods run K-means with $k$ clusters on the question embeddings $\{x_j\}_{j=1}^{L}$ and use the question nearest to each cluster centre as anchors,
\[
  q_l^\star = \arg\min_{q \in [L]}\; \|x_q - c_l\|_2,
\]
assign weight $w_l = |\mathrm{cluster}_l| / L$ (fractional cluster size), and predict the score for model $i$ as
\[
  \hat{\mu}_i = \sum_{l=1}^{k} w_l \cdot S_{i q_l^\star}.
\]
Randomness across runs comes solely from the K-means initialisation seed.

\paragraph{IRT-based subset selection.}
A multidimensional 2PL IRT model is fitted once to $\S$ and cached. The model assigns each question $j$ a discrimination vector $\alpha_j \in \mathbb{R}^{D-1}$ and scalar difficulty $\beta_j \in \mathbb{R}$, with item response probability
\[
  P_{ij} = \sigma\!\left(\alpha_j^\top \theta_i - \beta_j\right),
\]
where $\theta_i \in \mathbb{R}^{D-1}$ is the latent ability of model $i$. The IRT is fitted with hierarchical priors via MAP estimation for 2000 epochs. Each question is embedded as $x_j = (\alpha_j,\, \beta_j) \in \mathbb{R}^{D}$. We evaluate $D \in \{2, 5, 10, 15\}$, corresponding to $D-1$ discrimination dimensions.

\paragraph{Correctness-based subset selection.}
Each question $j$ is embedded directly as its score vector across all $K$ models,
\[
  x_j = \S_{: j} \in \mathbb{R}^{K},
\]
i.e.\ the column of $\S$ recording each model's binary score on question $j$. No IRT fitting is required.

\subsection{MetaBench from \citet{kipnis2025metabench}}
MetaBench~\citep{kipnis2025metabench} selects a subset of $k$ questions via a Fisher-information quantile criterion on a fitted 2PL IRT model, then predicts each model's full-bank mean score using a calibrated generalized additive model (GAM). The procedure is entirely deterministic given~$\S$.

\paragraph{Step 1: Fit a 2PL IRT model.}
A 2PL IRT model is fitted to $\S$ with fixed initialisations, yielding discrimination $\alpha_j \in \mathbb{R}$ and difficulty $\beta_j \in \mathbb{R}$ for each question $j$, and ability $\theta_i \in \mathbb{R}$ for each model $i$.

\paragraph{Step 2. Subset Selection based on Fisher Information}
The $K$ fitted abilities are partitioned into $k$ equal-probability quantile bins $\mathcal{B}_t = \{i : \hat\theta_i\in[\tau_t,\tau_{t+1})\}$, $t=1,\ldots,k$. For $t=1,\ldots,\min(k,K)$, the anchor for bin $t$ is the not-yet-selected question with the highest average Fisher information over models in that bin,
\[
  q_t = \arg\max_{j\notin Q_{t-1}}\;\frac{1}{|\mathcal{B}_t|}\sum_{i\in\mathcal{B}_t} a_j^2\,P_{ij}(1-P_{ij}),
\]
where $Q_{t-1}=\{q_1,\ldots,q_{t-1}\}$. Any remaining slots ($t>K$, since each bin yields at most one distinct item) are filled by the question with the highest global average Fisher information:
\[
  q_t = \arg\max_{j\notin Q_{t-1}}\;\frac{1}{K}\sum_{i=1}^{K} a_j^2\,P_{ij}(1-P_{ij}).
\]
The final subset is $Q^*=\{q_1,\ldots,q_k\}$.

\paragraph{Step 3: Calibrate a GAM and predict.}
A per-subset ability $\hat\theta_i^{Q^*}$ is re-estimated on $Q^*$ only. A two-term GAM
\[
  \hat\mu_i = f_1\!\bigl(\hat\theta_i^{Q^*}\bigr) + f_2\!\bigl(\bar{S}_i^{Q^*}\bigr) + c
\]
is fitted by penalised least squares with the full-bank mean $\mu_i$ as the response, where $\bar{S}_i^{Q^*}$ is model $i$'s mean score on $Q^*$ and $f_1,f_2$ are cubic spline smooths. The predicted best model is $\hat\imath = \arg\max_i \hat\mu_i$.

\subsection{Results and Comparison to SySRs}
We fit all subset selection methods on the full matrix $\S$. There is no train/test split. This setup gives a large advantage to the subset selection methods, as only some entries of $\S$ would be known \emph{a priori} in practice. Indeed, knowing the full matrix trivializes the model selection problem, as all models' average performance would already be known with high precision. Despite this advantage, we find that for model selection, the subset selection methods still fail to match the performance of SySRs. \cref{fig:subset-selection-results} shows the mean identification error rate across all 15 datasets. In particular, the identification error rate of SySRs, using only 5\% of the budget is roughly on par with the error rate of the best subset selection method using 30\% of the budget. This is perhaps unsurprising, as subset selection methods are not built for model selection specifically, and our experimental setup still requires them to evaluate all models on the full subset selection. In principle, subset selection methods could be combined with the bandit approach, but this would forego theoretical guarantees for empirical gains that are relatively minor in many settings \cite{yauney2026}.

\begin{figure}[h]
    \centering
    \includegraphics[width=\textwidth]{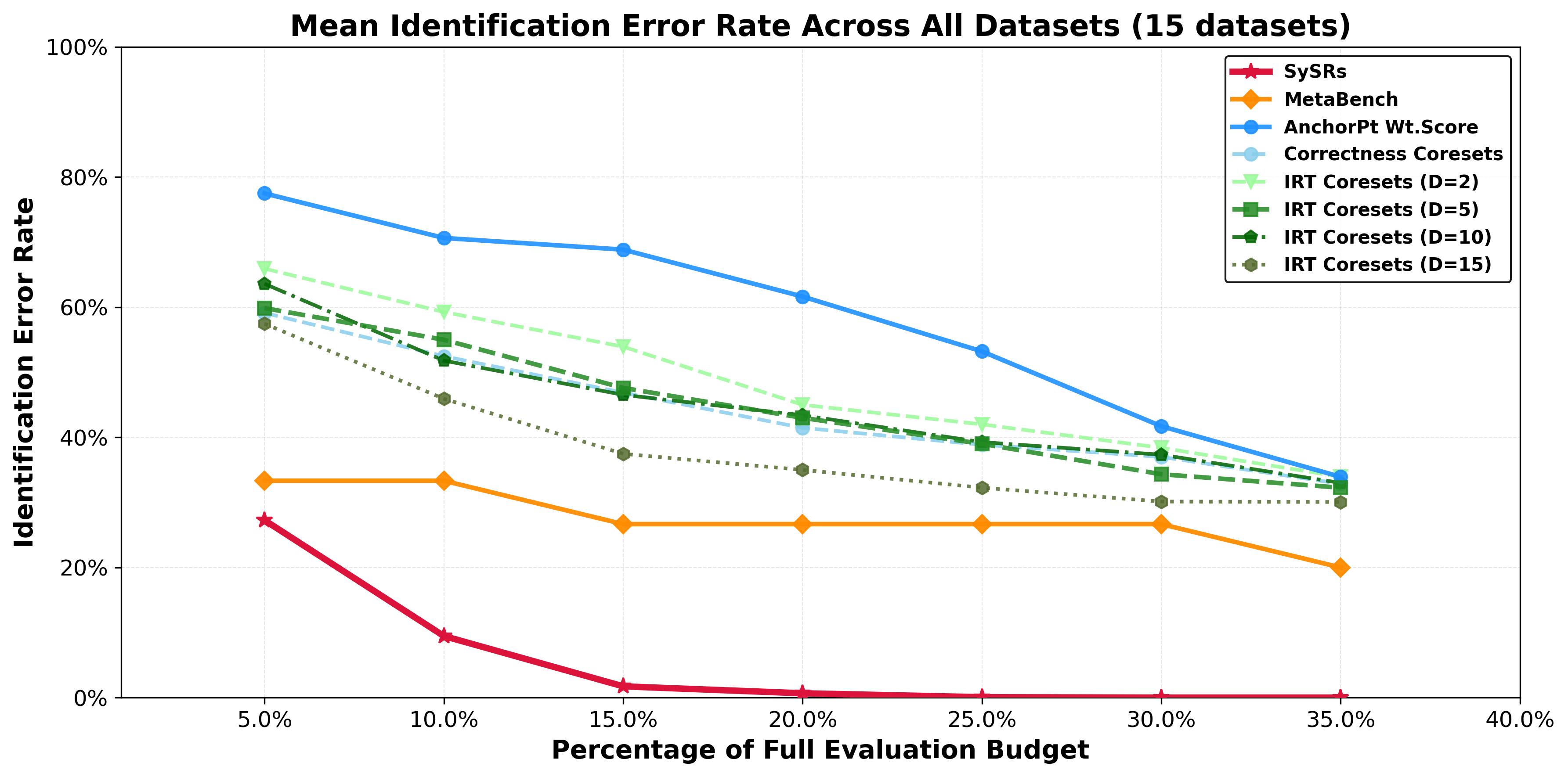}
    \caption{Mean identification error rate across all 15 datasets as a function of the evaluation budget, comparing SySRs to subset selection methods. All subset selection methods have access to the full score matrix~$\S$ before selection. Despite this advantage, all subset selection methods exhibit substantially higher error rates than SySRs across the entire budget range. }
    \label{fig:subset-selection-results}
\end{figure}

\newpage\section{Top-\texorpdfstring{$m$}{m} Ranking and Identification}
\label{sec:top-m-ranking}

We now extend the evaluation framework of \cref{sec:experiments} beyond best-arm identification to the \emph{top-$m$ identification and ranking problem}, where the goal is to recover the set of the $m$ strongest models given a fixed budget.

\subsection{Problem Formulation}

Recall the score matrix $\S \in [0,1]^{K \times L}$ and mean performance $\mu_i = \frac{1}{L}\sum_{j=1}^L S_{ij}$.
Based on the mean performance on the benchmark, the models are ordered and indexed in decreasing order with $\mu_{(1)} \ge \mu_{(2)} \ge \cdots \ge \mu_{(K)}$, and we define the \emph{true top-$m$ set} for a given $m \in \{1,\dots,K-1\}$,
\[
    \mathcal{T}_m^\star \;=\; \bigl\{(1),\,(2),\,\dots,\,(m)\bigr\}.
\]
An algorithm with budget $n$ produces a \emph{ranked list} of $m$ models $\widehat{\pi} = (\pi_1,\pi_2,\dots,\pi_m)$, where $\pi_r$ denotes the index of the model predicted to occupy rank $r$, along with the corresponding predicted top-$m$ set $\widehat{\mathcal{T}}_m = \{\pi_1,\dots,\pi_m\}$.

\subsection{Ranking Procedure for Each Algorithm}
\label{sec:top-m-ranking-procedure}

Although the algorithms evaluated in \cref{sec:experiments} are designed to identify the best model, we introduce the following mechanisms to produce a ranked list for the models.

\paragraph{Mean-score Algorithms (US, SyUS, UCB-E, SyUCB-E).}
For algorithms that do not perform explicit arm elimination, we rank all $K$ models according to their empirical mean score upon the completion of draws:
\[
    \hat{\mu}_i \;=\; \frac{1}{|\mathcal{J}_i|} \sum_{j \in \mathcal{J}_i} S_{ij},
\]
where $\mathcal{J}_i \subseteq \mathcal{X}$ is the set of queries on which model $i$ was evaluated.
Models are sorted in decreasing order of $\hat{\mu}_i$ to obtain $\widehat{\pi}$.

\paragraph{Elimination-Based Algorithms (SR, SySRs).}
For algorithms that sequentially eliminate arms, we use the elimination order to assign ranks: the arm eliminated \emph{last} is assigned rank~$1$ (best), the arm eliminated second-to-last is assigned rank~$2$, and so on.
This is a natural byproduct of the algorithm's execution and requires no additional computation.

\subsection{Evaluation Metrics}

We evaluate top-$m$ performance using three complementary metrics. As in \cref{sec:experiments}, $b = \frac{n}{K_D \times L_D}$ denotes the budget percentage used for evaluation. Our experiments use a uniform grid of budget percentages with step size 5\%, from 0\% to 100\%.

\paragraph{Top-$m$ Identification Error Rate.}
For a single run, an algorithm \emph{fails to identify} the top-$m$ set if its predicted set differs from the true top-$m$ set:
\[
    \mathbf{1}\bigl[\widehat{\mathcal{T}}_m \neq \mathcal{T}_m^\star\bigr] \;\in \{0,1\}.
\]
For example, if the true top-3 set is $\{1,2,3\}$ and an algorithm predicts $\{2,1,3\}$, the result is $0$ (correct). If the algorithm instead predicts $\{1,2,4\}$, the result is $1$ (error).
The \emph{top-$m$ identification error rate} $\epsilon_b^{(m)} = \mathbb{P}(\widehat{\mathcal{T}}_m \neq \mathcal{T}_m^\star)$ is the probability of this event, estimated by averaging the binary outcome over independent runs.
At $m=1$ this coincides with the standard budget-indexed error probability $\epsilon_b$.

\paragraph{Top-$m$ Ranking Error Rate.}
We define the \emph{top-$m$ ranking error rate} as $0$ if and only if the predicted top-$m$ ranking exactly matches the true top-$m$ ranking,  and $1$ otherwise:
\[
    r_b^{(m)} \;=\; \mathbb{P}\bigl((\pi_1,\ldots,\pi_m)\neq\bigl((1),\ldots,(m)\bigr)\bigr).
\]
Since correct exact ranking implies correct top-$m$ set identification, we have $r_b^{(m)} \ge \epsilon_b^{(m)}$.

\paragraph{Top-$m$ per-rank error rate.}
The \emph{top-$m$ per-rank error rate} measures the expected fraction of incorrectly ranked models:
\[
    \rho_b^{(m)} \;=\; \mathbb{E}\!\left[\frac{1}{m}\sum_{r=1}^{m}\mathbf{1}\bigl[\pi_r \neq (r)\bigr]\right].
\]
For a single run, $\frac{1}{m}\sum_{r=1}^{m}\mathbf{1}[\pi_r\neq (r)] \in [0,1]$ expresses the fraction of top-m positions $r$ for which the predicted rank-$r$ model does not coincide with the true rank-$r$ model; we obtain an estimate $\hat{\rho}_b^{(m)}$ by averaging this quantity over independent runs. 

\paragraph{Computation of Evaluation Metrics for Tied Models}
Some of our datasets have multiple models tied for the same rank. In that case, we treat all produced rankings that are \emph{consistent} with the tied ranking as correct. For example, if two models are tied for rank two, ranking either of these models at rank two or three is counted as correct. Formally, we consider all proper rankings that are consistent with the tied ranking and for each run we compute the error rates as the minimum error rate over all these rankings.  

\paragraph{Top-$m$ Confidence Budget.}
Analogously to the X\% Confidence Budget (CB) defined in \cref{sec:experiments}, we define the \emph{X\% top-$m$ identification CB} as the minimum budget percentage $b_{D,m}^\star$ such that the empirical top-$m$ identification error rate $\hat{\epsilon}_b^{(m)}$ is at most $(100-X)\%$ for all evaluated budget percentages $b \geq b_{D,m}^\star$.
The \emph{X\% top-$m$ ranking-error CB} and \emph{X\% top-$m$ per-rank-error CB} are defined analogously. 
As in the best-arm case, we report the worst-case (maximum) CB across the 15 benchmark datasets.

\subsection{Results}
We present the following observations from the error-rate and CB views of the performance of algorithms on the Top-$m$ identification and ranking problem. 

\subsubsection{Top-$m$ Error-rate curves}
\Cref{fig:topm-identification-error-rate} reports $\hat{\epsilon}_b^{(m)}$ as a function of budget for $m\in\{3,5,10\}$, averaged across the 15 datasets, where lower curves indicate more sample-efficient recovery of the top-$m$ \emph{set}. \Cref{fig:topm-ranking-error-rate} shows the stricter ranking metric $\hat{r}_b^{(m)}$, which counts an error unless the full top-$m$ order is exactly correct. \Cref{fig:topm-per-rank-accuracy} provides the results for the alternative ranking metric \emph{per-rank accuracy error}, which measures the fraction of rank positions assigned incorrectly. 

There are similar trends across the results of the three metrics. SySRs is strongest for $m=3$ in terms of worst-case performance, and on par with SyUCB-E in terms of average error rates. For $m=5$, SySRs' worst case performance remains best, but SyUCB-E performs better in terms of average error rate, for the right hyperparameters. For $m=10$, this gap widens, and unsynchronized UCB-E starts to outperform SySRs in terms of worst case performance for larger target accuracies.  

\begin{figure}[h!]
\centering
\includegraphics[width=\textwidth]{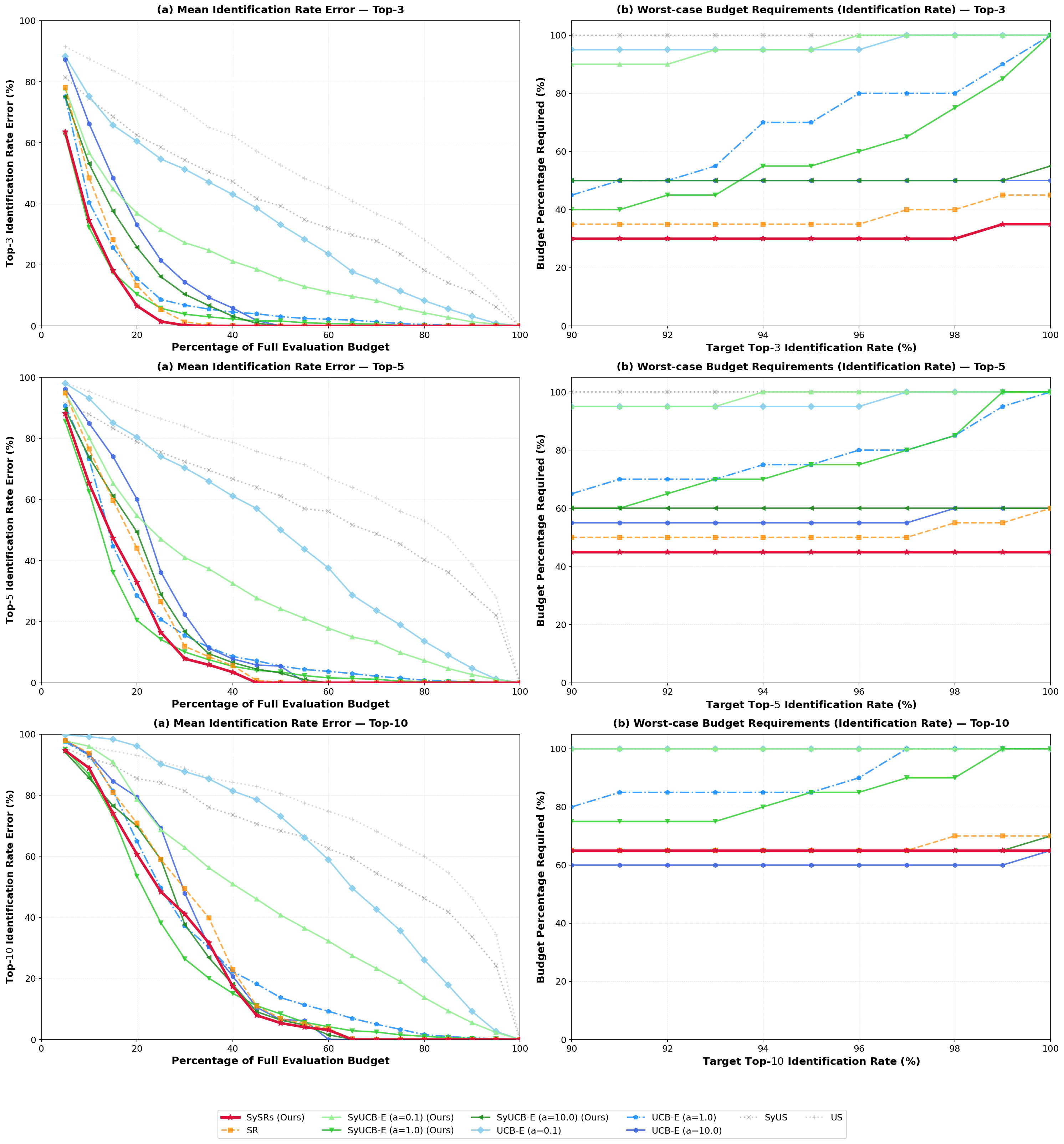}
\caption{Top-$m$ identification error rate $\hat{\epsilon}_b^{(m)}$ versus budget percentage, averaged over 15 datasets for $m\in\{3,5,10\}$. Lower is better.}
\label{fig:topm-identification-error-rate}
\end{figure}

\begin{figure}[h!]
\centering
\includegraphics[width=\textwidth]{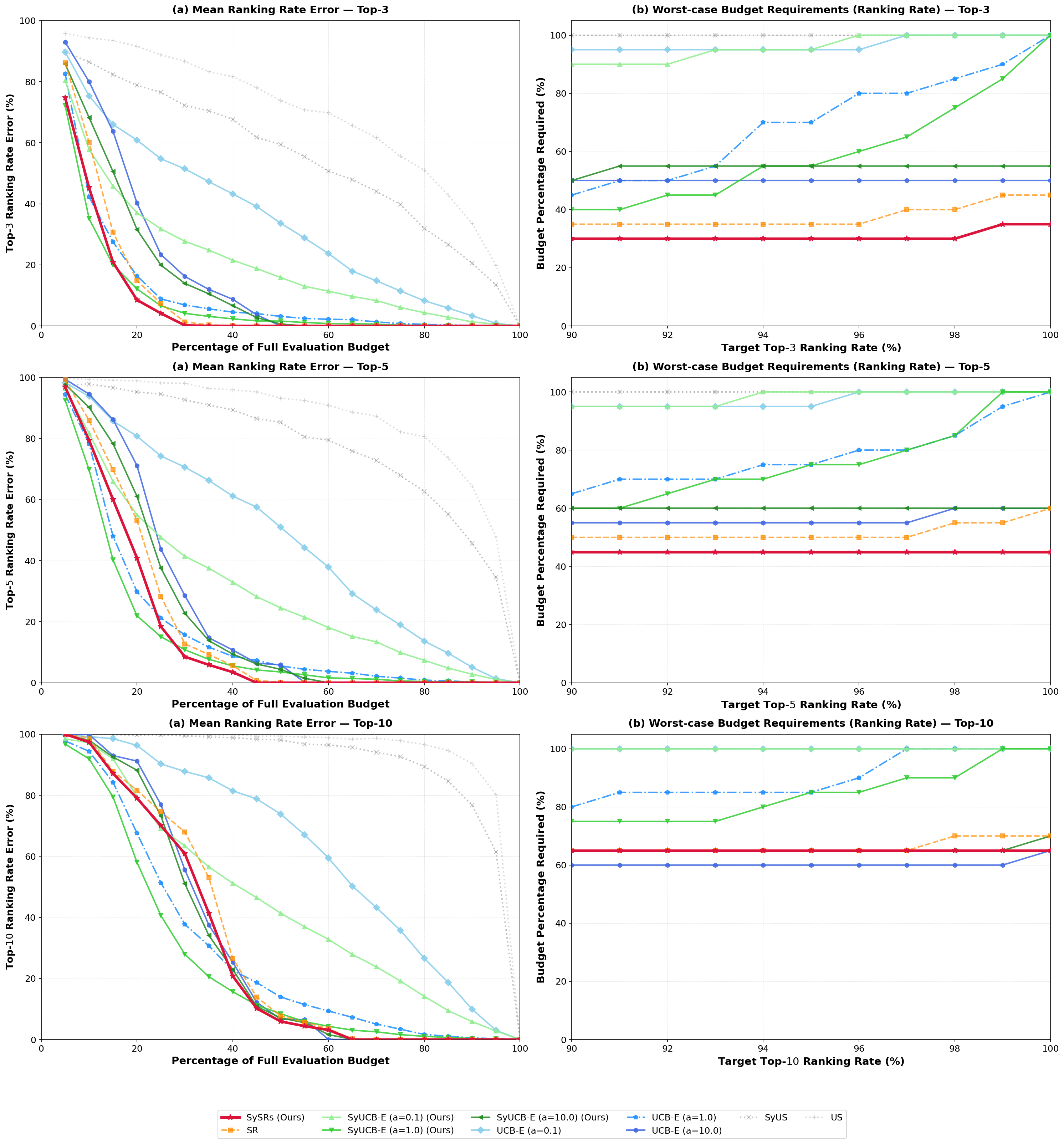}
\caption{Top-$m$ ranking error rate $\hat{r}_b^{(m)}$ versus budget percentage, averaged over 15 datasets for $m\in\{3,5,10\}$. Lower is better.}
\label{fig:topm-ranking-error-rate}
\end{figure}

\begin{figure}[h!]
\centering
\includegraphics[width=\textwidth]{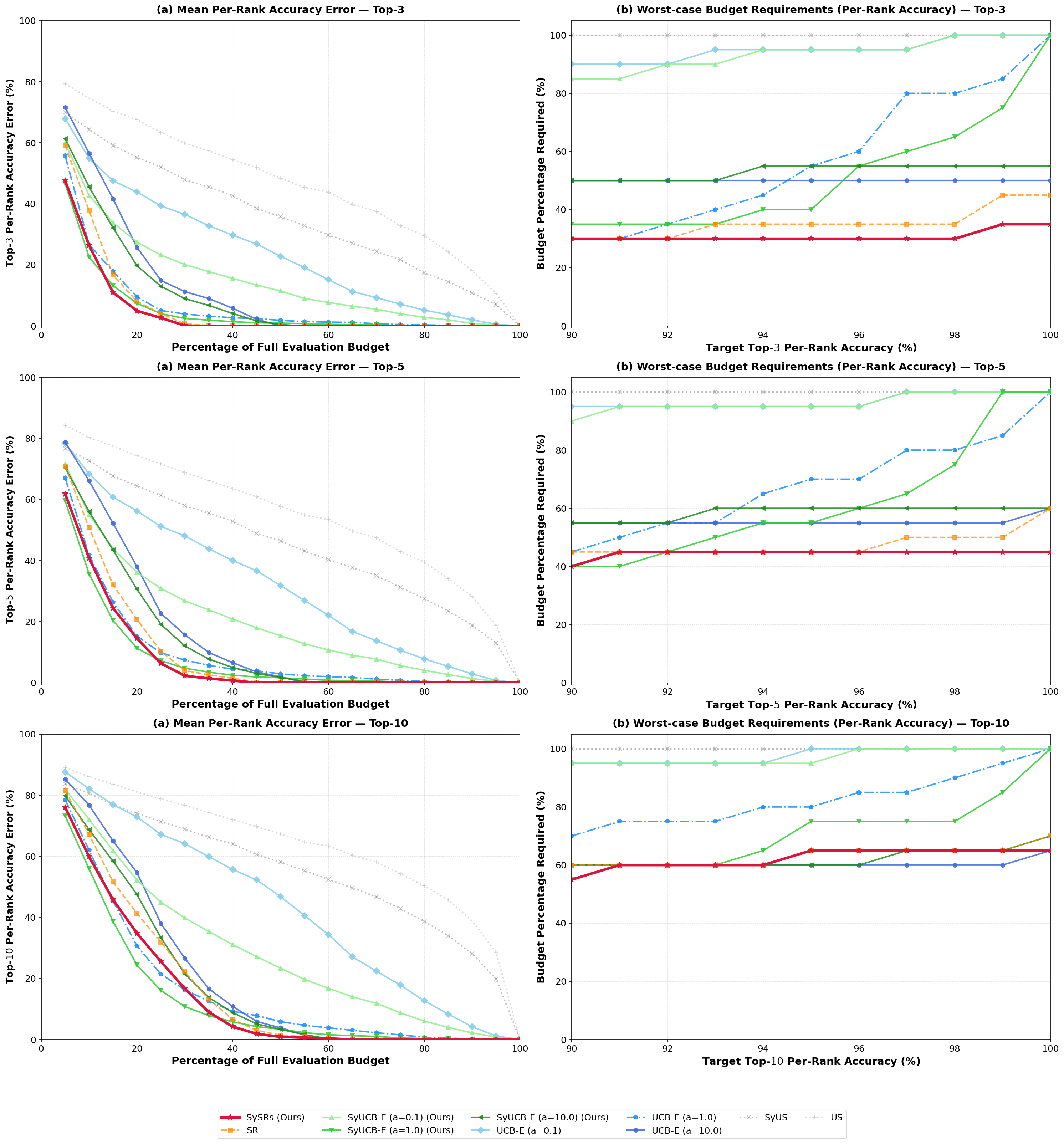}
\caption{Per-rank accuracy error versus budget percentage (left) and worst-case budget requirements for a given target per-rank accuracy (right), for $m\in\{3,5,10\}$, averaged over 15 datasets. Lower is better.}
\label{fig:topm-per-rank-accuracy}
\end{figure}}{}
\end{document}